\definecolor{deepblue}{rgb}{0,0,0.5}
\definecolor{deepred}{rgb}{0.6,0,0}
\definecolor{deepgreen}{rgb}{0,0.5,0}
\definecolor{halfgray}{gray}{0.55}
\bfseries\color{deepblue},
\ttfamily\color{deepred},    
\small\color{halfgray},
\DeclarePairedDelimiterX\condtvInner[3]{(}{)}%
{#1, #2\delimsize| #3}
\DeclarePairedDelimiterX\condklInner[3]{(}{)}%
{#1\delimsize\| #2\delimsize| #3}
\newcommand{\dist}[3]{\ensuremath{{\mathsf{dist}}_{#1}\left(#2, #3\right)}}
\newcommand{\kl}[2]{\ensuremath{{\mathsf{KL}}\left(#1\|#2\right)}}
\newcommand{\condkl}[3]{\ensuremath{{\mathsf{KL}}\condklInner{#1}{#2}{#3}}}
\newcommand{\tv}[2]{\ensuremath{{\mathsf{TV}}\left(#1, #2\right)}}
\newcommand{\condtv}[3]{\ensuremath{{\mathsf{TV}}\condtvInner{#1}{#2}{#3}}}
\newcommand{\nm}[2]{\ensuremath{{\mathfrak{n}_{#1}\left(#2\right)}}}
\newcommand{\mm}[2]{\ensuremath{{\mathfrak{m}_0}\left(#1, #2\right)}}
\newcommand{\lossCEsingle}{\mathsf{CE_{\hyperref[eq:def:lossCEsingle_D]{sgl}}}}
\newcommand{\lossEMPCE}{{\mathsf{CE_{\hyperref[eq:def:lossEMPCE_D+R]{ptl}}}}}
\newcommand{\lossEMPSEL}{{\mathsf{SEL_{\hyperref[eq:def:lossEMPSEL]{ptl}}}}}
\newcommand{\lossCEfull}{\mathsf{CE_{\hyperref[eq:CE_full_def]{ful}}}}
\newcommand{\nll}{%
\ensuremath{{\hat\pi_{\mathsf{CE,{\hyperref[eq:def:lossCEsingle_D]{sgl}}}}}}%
}
\newcommand{\empce}{%
\ensuremath{{\hat{\pi}_\mathsf{CE,{\hyperref[eq:def:lossEMPCE_D+R]{ptl}}}}}%
}
\newcommand{\empsel}{%
\ensuremath{{\hat{\pi}_\mathsf{SEL,{\hyperref[eq:def:lossEMPSEL]{ptl}}}}}%
}
\newcommand{\fullkl}{%
\ensuremath{{\hat\pi_\mathsf{CE_{\hyperref[eq:CE_full_def]{ful}}}}}%
}
\newcommand{\dataone}{\textsf{H}{ard} \textsf{L}{abels}}
\newcommand{\datatwo}{\textsf{P}{artial} \textsf{SL}{s}}
\newcommand{\datathree}{\textsf{S}{oft} \textsf{L}{abels}}
\newcommand*{\rom}[1]{\expandafter\@slowromancap\romannumeral #1@}
\title{\huge Towards the Fundamental Limits of Knowledge Transfer over Finite Domains}
\author
{
	Qingyue Zhao\thanks{Department of Computer Science and Technology, Tsinghua University; e-mail: {\tt zhaoqy19@mails.tsinghua.edu.cn}} 
	~~~and~~~
	Banghua Zhu\thanks{Department of Electrical Engineering and Computer Sciences, University of California, Berkeley; e-mail: {\tt banghua@berkeley.edu}}
}
\begin{document}
\date{}
\maketitle

\begin{abstract}
    We characterize the statistical efficiency of knowledge transfer through $n$ samples from a teacher to a probabilistic student classifier with input space $\cS$ over labels $\cA$. We show that privileged information at three progressive levels accelerates the transfer. At the first level, only samples with hard labels are known, via which the maximum likelihood estimator attains the minimax rate $\sqrt{\nicefrac{|{\cS}||{\cA}|}{n}}$. The second level has the teacher probabilities of sampled labels available in addition, which turns out to boost the convergence rate lower bound to ${\nicefrac{|{\cS}||{\cA}|}{n}}$. However, under this second data acquisition protocol, minimizing a naive adaptation of the cross-entropy loss results in an asymptotically biased student. We overcome this limitation and achieve the fundamental limit by using a novel empirical variant of the squared error logit loss. The third level further equips the student with the soft labels (complete logits) on ${\cA}$ given every sampled input, thereby provably enables the student to enjoy a rate $\nicefrac{|{\cS}|}{n}$ free of $|{\cA}|$. We find any Kullback-Leibler divergence minimizer to be optimal in the last case. Numerical simulations distinguish the four learners and corroborate our theory.
\end{abstract}

\section{Introduction}\label{sec:intro}

It has become common sense that transferring intrinsic information from teachers to the greatest extent can expedite a student's learning progress, especially in machine learning given versatile and powerful teacher models. Learning with their assistance has been coined \emph{knowledge distillation} (KD) \citep{hinton2015distilling,lopez2015unifying}, a famous paradigm of knowledge transfer leading to remarkable empirical effectiveness in \emph{classification} tasks across various downstream applications \citep{gou2021knowledge, wang2021knowledge, gu2023knowledge}.
The term \emph{distillation} implies a belief that the inscrutable teacher(s) may possess useful yet complicated structural information, which we should be able to compress and inject into a compact one, i.e., the student model \citep{breiman1996born, buciluǎ2006model, li2014learning, ba2014deep,allen2020towards}.
This has guided the community towards a line of knowledge transfer methods featuring the awareness of teacher training details or snapshots, such as the original training set, the intermediate activations, the last-layer logits (for a probabilistic classifier), the first- or second-order derivative or statistical information, and even task-specific knowledge \citep{hinton2015distilling, furlanello2018born, cho2019efficacy, zhao2022decoupled, romero2014fitnets, zagoruyko2016paying, yim2017gift, huang2017like, park2019relational, tian2019contrastive, tung2019similarity, qiu2022better, srinivas2018knowledge, cheng2020explaining,liang2023less}.

However, in more general modes of knowledge transfer, the procedure- or architecture-specific information can be unavailable, irrelevant, or ill-defined. Modern proprietary large language models (LLMs) like GPT-4 \citep{openai2023gpt4} and Claude 2 \citep{anthropic2023claude2} strictly confine the data returned through their APIs except the generated tokens, not to mention their training sets. Therefore, there has been an effort towards distilling LLMs only through oracle queries via locally crafted prompts \citep{zheng2023judging, peng2023instruction}. In denoising \citep{tsybakov2003optimal}, boosting \citep{freund1995desicion}, and prediction with expert advice \citep{cesa1997use}, the teacher group itself is the classes of interest and the student only gets positive-valued feedback from the teacher group, so the teacher implementation is immaterial. In certain circumstances of robot learning \citep{abbeel2004apprenticeship, ross2010efficient}, the teacher demonstration may come from threshold-based classifiers or even human experts, where no teacher probability is defined.

To unify different natures of teacher information acquisition in knowledge transfer and assess the technical barriers in a principled way, we decompose the \emph{transfer set} ({the information a student has access to in total}) into a generative model $\pi^\star(\cdot|\cdot)$ giving one $a$ in the label space $\cA$ given a query $s$ in the input space $\cS$ and the additional information provided by the teacher for each sample pair $(s, a)$. This informative decomposition is motivated by a more general \emph{learning using privileged information} (LUPI) framework \citep{vapnik2015learning, lopez2015unifying}. Take several LLMs as examples. In OpenAI's chat completion API\footnote{\url{https://platform.openai.com/docs/api-reference/chat}} for GPT-4, only responses are returned given a prompt $s$, in which no privileged information is accessible. An early established version\footnote{\url{https://platform.openai.com/docs/api-reference/completions}} for GPT-3 \citep{brown2020language}, however, also returns the log-likelihood of every token in the generated response. If further with open-sourced models like GPT-2 \citep{radford2019language} in hand, practitioners can extract the last-layer logits of each position in the sampled sequence as the privileged information. These typical examples reveal a trend in the development of language models\footnote{See, e.g., \citet{ramesh2021zero, ramesh2022hierarchical, openai2023dall-e-3} in computer vision for a similar tendency.} that \emph{the more powerful a foundation model \citep{bommasani2021opportunities} is, the less likely it is to release privileged information as a teacher model}, which raises a grandiose question beyond learning from neural teacher networks:

\begin{center}
    \textit{What is the fundamental limit of knowledge transfer given limited privileged information in general?}
\end{center}

In this work, we answer this question with minimal inductive bias through the analyses of three protocols of information extraction from the teacher with easier difficulties in order in terms of the (offline) accessibility of the reference policy $\pi^\star$ over finite $\cS$ and $\cA$. Concretely, for any learner $\hat\pi$, we study the total variation between it and the reference policy $\pi^\star$ conditioned on an input distribution $\rho$, i.e., 
$$
\sum_{s\in\cS}\rho(s)\tv{\hat\pi(\cdot|s)}{\pi^\star(\cdot|s)} \eqqcolon \condtv{\hat\pi}{\pi^\star}{\rho},
$$
where $\hat\pi$ can utilize $n$ samples $\{(s_i, a_i)\}_{i=1}^n$ and optionally certain fraction of $\pi^\star(\cdot|s_i), \forall i\in[n]$.

\subsection{Main Contributions and Clarifications}

Table~\ref{tab:resultsTeaser} gives an overview of rate bounds in distinct data acquisition protocols. \dataone{} indicates $\pi^\star$ to be a black-box for $\hat\pi$ throughout the learning process. \datatwo{} stands for \emph{partial soft labels}, which means the teacher provides the student with an extra (partial) ground truth $\pi^\star(a|s)$ for every sample $(s, a)$. \datathree{} is a synonym of the logits on the entire $\cA$ exposed to $\hat\pi$ given each $s$ in $\{s_i\}_{i=1}^n$. Intuitively, each of the three levels discloses richer information than previous ones, which at least cannot make it harder to learn. Rigorously, progressively faster minimax rates are matching exactly at all levels\footnote{We defer the matching expectation bounds to respective sections.} and every high-probability upper bound has its upper confidence radius at most the same order of the corresponding expectation lower bound up to poly$\log$ factors. 

\begin{table}[htbp]
    \caption{Major theoretical findings of knowledge transfer with different acquired data. The second column shows one out of $n$ data points available to the student. Worst-case bounds in the last column hold with high probability and $\tilde{O}(\cdot)$ hides poly$\log$ factors.}
    \label{tab:resultsTeaser}
    \begin{center}
    \begin{tabular}{llc|cc}
    Transfer via & Data Format & Lower Bound & Loss & Performance \\\midrule
    \dataone{} & $(s, a)$ & \makecell[c]{$\Omega\left(\sqrt{\nicefrac{|\cS||\cA|}{n}}\right)$ \\ Theorem~\ref{thm:lb1}} & $\lossCEsingle$ & \makecell{$\tilde{O}\left(\sqrt{\nicefrac{|\cS||\cA|}{n}}\right)$ \\ Theorem~\ref{thm:mle}} \\\midrule
    \multirow[b]{2}{*}{\datatwo{}} & \multirow[b]{2}{*}{$(s, a, \pi^\star(a|s))$} & \multirow[b]{2}{*}{\makecell{$\Omega\left(\nicefrac{|\cS||\cA|}{n}\right)$ \\ Theorem~\ref{thm:lb2}}} & $\lossEMPCE$ & \makecell{$\Omega(1)$ \\ Theorem~\ref{thm:empce_bias}} \\
    & & & $\lossEMPSEL$ & \makecell{$\tilde{O}\left(\nicefrac{|\cS||\cA|}{n}\right)$ \\ Theorem~\ref{thm:empselRefined}} \\\midrule
    \datathree{} & $(s, a, \pi^\star(\cdot|s))$ & \makecell[c]{$\Omega\left(\nicefrac{|\cS|}{n}\right)$ \\ Theorem~\ref{thm:lb3}} & $\lossCEfull$ & \makecell{$\tilde{O}\left(\nicefrac{|\cS|}{n}\right)$ \\ Theorem~\ref{thm:full}}
    \end{tabular}
    \end{center}
\end{table}

Technically, the later two protocols are nonstandard in density estimation, especially in terms of the derivation of minimax lower bounds in that we assume the data generating process to be $(\rho\times\pi^\star)^n$. The constructive proof (Appendix~\ref{sec:appendix:lb2proof}) of Theorem~\ref{thm:lb2} may be of independent interest. The performances at the second level are also more tricky. Theorem~\ref{thm:empce_bias} sentences a naive adaptation of the cross-entropy loss ($\lossEMPCE$) to be a misfit with probability $1$ when the privileged information is at a modest level. 

We do not require $\pi^\star$ to be trained in any sense and assume $\hat\pi$ to have no awareness of the teacher implementation, inspired by the aforementioned practical trend. Consequently, we do no distinguish the \emph{teacher probability} from the \emph{Bayes probability} \citep{menon2020distillation, dao2021knowledge} and $\rho$ can have no relation with teacher training. We idealize the samples to be \emph{i.i.d.}, while all the results already extend to the setting where different $(s,a)$ pairs are mildly dependent, of which we refer the readers to Appendix~\ref{sec:extToTransKer} for detailed discussions.

We give more discussions on the connections between our framework and previous formulations in Appendix~\ref{subsec:connections} and review key related works aiming at the principles of knowledge transfer in the next subsection.

\subsection{Related Works on Understanding Knowledge Transfer}

\citet{hinton2015distilling} refers to the logits generated on a dataset by a teacher model, who has been trained using the same dataset, as soft labels; and refers to $\{a_i\}_{i=1}^n$ in this dataset as hard labels. Our terms, however, pay no attention to whether $\rho$ matches the original dataset; and our \datathree{} strictly carry more information (of $\rho\times\pi^\star$) than our \dataone{}, both of whose inputs $\{s_i\}_{i=1}^n$ are sampled from $\rho$. So the third column of Table~\ref{tab:resultsTeaser} does not account for the \emph{class similarities} argument \citep{furlanello2018born}, the \emph{regularization effect via label smoothing (LS)} argument \citep{yuan2020revisiting, tang2020understanding}, or the \emph{bias-variance trade-off} argument \citep{zhou2021rethinking, menon2020distillation}; all of which are classical wisdom explaining the benefit of soft labels in KD \citep{hinton2015distilling}.

These previous views are neither consistent nor complete for justifying why soft labels improves student training. \citep{muller2020subclass} undermines the hypothesis that class similarities in soft labels are vital by the effectiveness of KD in binary classification. \citet{han2023impact} challenges the regularization via LS thesis through better interpretablity-lifting effect of KD than LS. \citet{dao2021knowledge} develops the bias-variance trade-off perspective in more complex scenarios. In contrast, we define \datathree{} (similarly \datatwo{}) and tackle its boost over \dataone{} rigorously following the information-theoretic direction of the data processing inequality \citep{polyanskiy2022information}. We survey more works unraveling knowledge transfer. Other empirical and enlightening paradigms are deferred to Appendix~\ref{sec:appendix:additionalRW}.

Most analyses of KD with trained deep nets as the teacher \citep{phuong2019towards, ji2020knowledge, panahi2022analysis, harutyunyan2023supervision} lie in the linear or kernel regime, notably except \citet{hsu2021generalization}, which finds the student network to have fundamentally tighter generalization bound than the teacher network under several nonlinear function approximation schemes. There are also works analyzing knowledge transfer between neural networks of the same architecture \citep{mobahi2020self, allen2020towards}. Our framework goes exactly in the reverse direction: our analysis is not restricted to parsimonious students, over-parameterized teachers; or the \emph{compression} subconcept of knowledge transfer \citep{buciluǎ2006model, bu2020information}. For example, a human demonstrator, who does not learn only from data and is able to output probabilistic belief, can also fit into the first column of Table~\ref{tab:resultsTeaser} as a kind of teacher under our specification of $\pi^\star$.

\section{Preliminaries}

\paragraph{Notation.} For two nonnegative sequences $\{a_n\}$ and $\{b_n\}$, we write $a_n = O(b_n)$ or $a_n \lesssim b_n$ if $\lim\sup \nicefrac{a_n}{b_n} < \infty$; equivalently, $b_n = \Omega(a_n)$ or $b_n \gtrsim a_n$. We write $c_n = \Theta(d_n)$ or $c_n \asymp d_n$ if $c_n = O(d_n)$ and $c_n = \Omega(d_n)$. For a metric space $(\cM, d)$, $\dist{d}{\cdot}{\cN} \coloneqq \inf_{y\in\cN} d(\cdot, y)$ for any $\cN \subset \cM$. The term \emph{alphabet} is a synonym of \emph{finite set}. For a set or multiset $\cC$, let $|\cC|$ denote its cardinality. For any alphabet $\cX$, on which given two distributions $p$, $q$, the \emph{total variation} between them is $\tv{p}{q} \coloneqq 0.5\norm{p - q}_1$, their \emph{Kullback-Leibler (KL) divergence} is $\kl{p}{q} \coloneqq \EE_p[\log p - \log q]$; and we denote all $|\cX|$ Dirac distributions on $\cX$ by $\mathsf{Dirac}(\cX)$, where $\mathsf{Dirac}(\cX, x)$ stands for the one concentrated at $x$.
For two alphabets $\cX$ and $\cY$, we denote by $\Delta(\cY)$ the probability simplex on $\cY$ and define $\Delta(\cY|\cX) \coloneqq \{\pi(\cdot|\cdot): \cX \to \Delta(\cY)\}\Rightarrow \pi(\cdot|x)\in\Delta(\cY), \forall x\in\cX$.

\subsection{Common Setup}

The teacher always exposes to the student a multiset $\cD = \{(s_i, a_i)\in\cS\times \cA\}_{i=1}^n$ consisting of $n$ \emph{i.i.d.} input-label tuples. To analyze $\cD$ in a fine-grained way we introduce for $\cX^n\ni X^n\stackrel{i.i.d.}{\sim}\nu$ the \emph{number of occurrences} $\nm{x}{X^n}$ of $x$ and 
 the \emph{missing mass} $\mm{\nu}{X^n}$, which measures the portion of $\cX$ never observed in $X^n$ \citep{mcallester2003concentration}. We refer to the input (resp. label) component $\{s_i\}_{i=1}^n$ (resp. $\{a_i\}_{i=1}^n$) of $\cD$ as $\cS(\cD)$ (resp. $\cA(\cD)$) and also define a multiset $\cA(\cD, s)$ for every $s\in\cS$ to denote the $a_i$'s in $\cD$ that are associated with the visitations of $s$, taking into account multiplicity.\footnote{See Appendix~\ref{sec:appendix:missing} for the rigorous definitions of $\nm{x}{X^n}$, $\mm{\nu}{X^n}$, and $\cA(\cD, s)$.}

\subsection{Quantity of Interest}

We assume $\cD\sim(\rho\times\pi^\star)^n$ follows a product measure, where $\pi^\star$ is the ground truth distribution over $\cA$ given every $s\in\cS$ 
and $\rho$ is some underlying input generating distribution. No assumption is imposed on the data generating process $\rho\times\pi^\star$ except for belonging to 
\begin{equation*}
    \cP \coloneqq \left\{\rho \times \pi^\star: \rho\in\Delta(\cS), \pi^\star\in\Delta(\cA|\cS) \right\}.
\end{equation*}

In this work, we evaluate the performance of a student $\hat\pi$ based on the $\mathsf{TV}$ between $\hat\pi$ and $\pi^\star$ conditioned on $\rho$ defined as
\begin{equation}\label{eq:qoi}
    \condtv{\hat\pi}{\pi^\star}{\rho} \coloneqq \EE_{s\sim\rho}\left[\tv{\hat\pi}{\pi^\star}\right],
\end{equation}
though the student is never allowed to access $\rho$ directly. We investigate the convergence rate of \eqref{eq:qoi} among three categories of students told by the teacher's degree of openness in the tabular setting. Intuitively speaking, all the learning procedures of interest try to match the log-probability kernel $\log\pi$, a notion of \emph{normalized} {logits}, between the student and the teacher, especially via variants of the cross-entropy loss, which is standard in the study of classification both theoretically and practically \citep{Paszke_PyTorch_An_Imperative_2019}. Besides the universal definition $\lossCEfull(p\Vert q) \coloneqq -\EE_p[\log q]$ of the cross-entropy between $p\ll q$, a popular counterpart for hard labels specialized to classifiers is commonly defined as $\lossCEsingle(s, a; \pi) \coloneqq \lossCEfull\left(\mathsf{Dirac}(\cA, a)\Vert \pi(\cdot|s)\right) = -\log\pi(a|s)$.

\section{Transfer via \dataone{}}

This is equivalent to the standard setting for estimating the conditional density $\pi^\star(\cdot|\cdot)$.

\subsection{Hardness of Estimation}

We first generalize the idea of constructing hard instances for learning discrete distributions on $\cA$ \citep{paninski2008coincidence} to our nonsingleton $\cS$ to understand the difficulty when only $(s,a)$ paris are available. We remark that the proof of Theorem~\ref{thm:lb1} (in Appendix~\ref{sec:appendix:lb1proof}) is the only one in this work that utilizes Assouad's method \citep{yu1997assouad} directly.

\begin{theorem}\label{thm:lb1}
    For nonempty $\cS$, $\cA$ with $|\cA| > 1$, and $n \geq \nicefrac{|\cS||\cA|}{4}$,
    \begin{equation}\label{eq:lbSentence}
        \inf_{\hat\pi \in \hat\Pi(\cD)} \sup_{\rho\times \pi^\star \in \cP} \EE_{(\rho\times \pi^\star)^n}\condtv{\hat\pi}{\pi^\star}{\rho} \gtrsim \sqrt{\frac{|\cS||\cA|}{n}},
    \end{equation}
    where $\cD\sim (\rho\times \pi^\star)^n$, $\hat\Pi(\cD)$ denotes all (possibly random) estimators mapping $\cD$ to $\Delta(\cA|\cS)$.
\end{theorem}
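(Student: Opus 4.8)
The plan is to apply Assouad's method with a hypercube of hard instances built by superimposing, independently across the $|\cS|$ inputs, the classical two-point-per-coordinate perturbation used for estimating a single distribution on $\cA$. Fix $\rho$ to be uniform on $\cS$, so that $\rho(s)=1/|\cS|$, and pair up the labels of $\cA$ into $M \coloneqq \lfloor |\cA|/2 \rfloor \ge |\cA|/4$ disjoint pairs (leaving one label untouched when $|\cA|$ is odd). Index the hypercube by $\theta\in\{0,1\}^{\cS\times[M]}$ of dimension $m\coloneqq|\cS|M$, and for each input $s$ and pair $k=\{a_k^+,a_k^-\}$ set $\pi_\theta(a_k^\pm|s)=(1\pm(-1)^{\theta_{s,k}}\epsilon)/|\cA|$, assigning $1/|\cA|$ to any untouched label. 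Each $\pi_\theta(\cdot|s)$ is a valid distribution for $\epsilon\in[0,1]$, so $P_\theta\coloneqq\rho\times\pi_\theta\in\cP$; the scale $\epsilon$ is tuned at the end.

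The first structural step is that the loss decomposes additively over coordinates because the label pairs have disjoint supports:
\[
\condtv{\hat\pi}{\pi_\theta}{\rho} = \sum_{s}\rho(s)\,\tv{\hat\pi(\cdot|s)}{\pi_\theta(\cdot|s)}.
\]
The two configurations of a single pair (bit $0$ versus bit $1$) are $2\epsilon/|\cA|$ apart in total variation, so any estimator landing closer to the wrong configuration on pair $(s,k)$ incurs a local loss of at least $\epsilon/|\cA|$ there. Hence $\condtv{\hat\pi}{\pi_\theta}{\rho}\ge \frac{\epsilon}{|\cS||\cA|}\sum_{s,k}\mathbf{1}[(s,k)\text{ misidentified}]$, which is exactly the decomposable form Assouad's lemma requires, with per-coordinate separation $\tau=\epsilon/(|\cS||\cA|)$.

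Next I would control the information between Hamming neighbors. If $\theta,\theta'$ differ only in coordinate $(s_0,k_0)$, the shared marginal $\rho$ cancels and only the conditional at $s_0$ contributes, so $\kl{P_\theta}{P_{\theta'}}=\rho(s_0)\,\kl{\pi_\theta(\cdot|s_0)}{\pi_{\theta'}(\cdot|s_0)}\lesssim \frac{1}{|\cS|}\cdot\frac{\epsilon^2}{|\cA|}$, using that the single-pair KL is $O(\epsilon^2)$ for $\epsilon\le 1/2$. Tensorizing over the $n$ i.i.d.\ samples and invoking Pinsker gives $\tv{P^n_\theta}{P^n_{\theta'}}\le\sqrt{\tfrac12 n\,\kl{P_\theta}{P_{\theta'}}}\lesssim\sqrt{n\epsilon^2/(|\cS||\cA|)}$. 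Assouad's lemma then yields
\[
\inf_{\hat\pi}\sup_{\cP}\,\EE\,\condtv{\hat\pi}{\pi^\star}{\rho} \;\gtrsim\; m\tau\Bigl(1-\max_{d_H(\theta,\theta')=1}\tv{P^n_\theta}{P^n_{\theta'}}\Bigr) \;\asymp\; \epsilon\Bigl(1-C\sqrt{\tfrac{n\epsilon^2}{|\cS||\cA|}}\Bigr),
\]
since $m\tau=|\cS|M\cdot\epsilon/(|\cS||\cA|)\asymp\epsilon$. Choosing $\epsilon\asymp\sqrt{|\cS||\cA|/n}$, which lies in the admissible range precisely under the hypothesis $n\ge|\cS||\cA|/4$ after fixing constants, keeps the TV term bounded away from $1$ and delivers the claimed $\sqrt{|\cS||\cA|/n}$.

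The step I expect to be the crux is the information bookkeeping: the reason $|\cS|$ enters with the correct power is that a single bit flip perturbs the conditional at only one input, whose influence on the data law is diluted by $\rho(s_0)=1/|\cS|$. Making this dilution cooperate with the $m=|\cS|M$ coordinates --- rather than washing it out --- is exactly what produces $\sqrt{|\cS||\cA|/n}$ instead of, say, $\sqrt{|\cA|/n}$. Minor care is also needed for odd $|\cA|$ (absorbed into the $\ge|\cA|/4$ count of usable pairs) and for checking that the chosen $\epsilon$ remains admissible under $n\ge|\cS||\cA|/4$.
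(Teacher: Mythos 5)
Your proposal is correct and follows essentially the same route as the paper: uniform $\rho$, Paninski-style paired perturbations of size $\epsilon/|\cA|$ indexed by a hypercube over $\cS\times[\lfloor|\cA|/2\rfloor]$, the additive decomposition of $\condtv{\cdot}{\cdot}{\rho}$ feeding Assouad's lemma, the $\rho(s_0)=1/|\cS|$ dilution of the neighbor KL, and $\epsilon\asymp\sqrt{|\cS||\cA|/n}$. The only cosmetic difference is that you control the neighbor affinity via Pinsker (which needs the constant in $\epsilon$ small enough to keep the TV below $1$), whereas the paper uses the Bretagnolle--Huber bound $1-\mathsf{TV}\ge\tfrac12 e^{-\mathsf{KL}}$; both close the argument under $n\ge|\cS||\cA|/4$.
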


The $\sqrt{|\cS|}$ dependence in this lower bound intuitively makes sense because the classic lower bound for $|\cS| = 1$ is $\Omega(\sqrt{\nicefrac{|\cA|}{n}})$ and each input roughly get $\nicefrac{n}{|\cS|}$ samples when $\rho$ is distributed evenly.

\subsection{Maximum Likelihood Estimation}

We approximate the teacher $\pi^\star$ via minimizing the following negative log-likelihood loss:

\begin{equation}\label{eq:def:lossCEsingle_D}
    \nll \in \argmin_{\pi\in\Delta(\cA|\cS)} \lossCEsingle(\cD) \coloneqq \argmin_{\pi\in\Delta(\cA|\cS)} -\sum_{i=1}^n \log \pi(a_i| s_i).
\end{equation}

It is possible to exactly attain the minimum $0$ in \eqref{eq:def:lossCEsingle_D}. A refactoring detailed in Appendix~\ref{subsec:appendix:nllSolDerive} indicates a natural relation between the hard version $\lossCEsingle$ and the soft version $\lossCEfull$, which leads to a neat closed-form solution for the optimization problem.

\begin{equation}\label{eq:mleSol}
    \nll(a | s) \begin{cases}
        = \nm{(s,a)}{\cD} / \nm{s}{\cS(\cD)}, &s \in \cS(\cD),\\
        \in\Delta(\cA)\text{ arbitrarily}, &\text{otherwise.}
    \end{cases}
\end{equation}

We study the convergence behavior of $\nll$ in a fine-grained way with its proof detailed in Appendix~\ref{subsec:mleProofs}:

\begin{theorem}
\label{thm:mle} For any $\delta \in (0, 1)$, with probability at least $1 - \delta$,
    \begin{equation}\label{eq:mle:highProbBound}
        \condtv{\nll}{\pi^\star}{\rho} \lesssim \sqrt{\frac{|\cS|\left(|\cA| + \log\left(|\cS|/\delta\right)\right)}{n}}.
    \end{equation}
    The upper bound in expectation $\EE\left[ \condtv{\nll}{\pi^\star}{\rho} \right] \lesssim \sqrt{\nicefrac{|\cS||\cA|}{n}}$ is no better than \eqref{eq:mle:highProbBound} up to $\log$ factors. An instance-dependent version in expectation is
    \begin{equation}\label{eq:mle:instanceDependent}
        \EE\left[ \condtv{\nll}{\pi^\star}{\rho} \right] \lesssim \sqrt{\xi(\pi^\star)\frac{|\cS||\cA|}{n}} + \frac{|\cS|}{n},
    \end{equation}
    where $\xi(\pi^\star)\coloneqq \max_{s\in\cS}\dist{\mathsf{TV}}{\pi^\star(\cdot|s)}{\mathsf{Dirac}(\cA)} = 2\max_{s\in\cS}\min_{a\in\cA}(1 - \pi^\star(a|s))$.
\end{theorem}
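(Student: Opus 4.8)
The plan is to exploit the closed form \eqref{eq:mleSol}: conditioned on the occurrence counts $N_s \coloneqq \nm{s}{\cS(\cD)}$, the labels attached to an observed input $s$ are $N_s$ i.i.d.\ draws from $\pi^\star(\cdot\mid s)$, so $\nll(\cdot\mid s)$ is exactly their empirical distribution, independently across $s$. I would first split $\condtv{\nll}{\pi^\star}{\rho}=\sum_{s}\rho(s)\tv{\nll(\cdot\mid s)}{\pi^\star(\cdot\mid s)}$ into observed inputs $s\in\cS(\cD)$ and unobserved ones. On the unobserved part $\nll$ is arbitrary, so $\tv{\cdot}{\cdot}\le1$ bounds its contribution by the missing mass $\mm{\rho}{\cS(\cD)}$, whose expectation $\sum_s\rho(s)(1-\rho(s))^n\le\sum_s\rho(s)e^{-n\rho(s)}\le|\cS|/(en)\lesssim|\cS|/n$ is precisely the additive term of \eqref{eq:mle:instanceDependent}. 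The whole problem thus reduces to controlling, for each observed $s$, the TV between an $m$-sample empirical law and its generating distribution over the alphabet $\cA$.

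For this core estimate I would use the coordinatewise variance bound: if $\hat p_m$ is the empirical law of $m$ i.i.d.\ samples from $p\in\Delta(\cA)$, then $\EE\,\tv{\hat p_m}{p}\le\frac{1}{2\sqrt m}\sum_{a}\sqrt{p(a)(1-p(a))}$, and Cauchy--Schwarz converts the sum into $\sqrt{|\cA|\,(1-\norm{p}_2^2)}$. The crude bound $1-\norm{p}_2^2\le1$ recovers the familiar $\sqrt{|\cA|/m}$, while the refinement $1-\norm{p}_2^2=\sum_a p(a)(1-p(a))\le(1-p_{\max})(1+p_{\max})\le 2\,\dist{\mathsf{TV}}{p}{\mathsf{Dirac}(\cA)}$ ties the rate to the distance of $p$ to the nearest Dirac mass. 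Passing to the worst input introduces exactly the factor $\xi(\pi^\star)$, giving $\EE[\tv{\nll(\cdot\mid s)}{\pi^\star(\cdot\mid s)}\mid N_s=m]\lesssim\sqrt{|\cA|\,\xi(\pi^\star)/m}$.

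To assemble \eqref{eq:mle:instanceDependent} I would insert this into the observed part and average over the random counts. The key count inequality is $\EE[\mathbf 1\{N_s\ge1\}/\sqrt{N_s}]=\EE\sqrt{\mathbf 1\{N_s\ge1\}/N_s}\le\sqrt{\EE[\mathbf 1\{N_s\ge1\}/N_s]}\le\sqrt{2\,\EE[1/(N_s+1)]}\le\sqrt{2/(n\rho(s))}$, using Jensen, $1/N\le2/(N+1)$, and the identity $\EE[1/(N_s+1)]=\frac{1-(1-\rho(s))^{n+1}}{(n+1)\rho(s)}$. Then $\sum_s\rho(s)\sqrt{2/(n\rho(s))}=\sqrt{2/n}\sum_s\sqrt{\rho(s)}\le\sqrt{2|\cS|/n}$ by Cauchy--Schwarz, so the observed contribution is $\lesssim\sqrt{|\cS||\cA|\,\xi(\pi^\star)/n}$; adding the missing mass gives \eqref{eq:mle:instanceDependent}. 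The worst-case expectation bound follows by setting $\xi(\pi^\star)\le1$ and noting $|\cS|/n\lesssim\sqrt{|\cS||\cA|/n}$ in the nonvacuous regime.

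For the high-probability statement \eqref{eq:mle:highProbBound} I would condition on $\vec N=(N_s)_s$ and apply McDiarmid to each $\tv{\nll(\cdot\mid s)}{\pi^\star(\cdot\mid s)}$ around its conditional mean $\lesssim\sqrt{|\cA|/N_s}$ (one sample perturbs the TV by at most $1/N_s$); a union bound over $\cS$ replaces $\delta$ by $\delta/|\cS|$ and yields the combined numerator $|\cA|+\log(|\cS|/\delta)$ per observed input. The main obstacle is that this bound still carries the random denominators, so I must control $\sum_{s\in\cS(\cD)}\rho(s)/\sqrt{N_s}$ with high probability rather than merely in expectation. I would do this with a heavy/light split at the threshold $n\rho(s)\asymp\log(|\cS|/\delta)$: for heavy inputs a multiplicative Chernoff bound plus a union bound forces $N_s\ge\tfrac12 n\rho(s)$, giving $\sum_{\text{heavy}}\rho(s)/\sqrt{N_s}\lesssim\sqrt{|\cS|/n}$ just as in the expectation proof; for light inputs I bound their total mass crudely by $\sum_{\text{light}}\rho(s)\lesssim|\cS|\log(|\cS|/\delta)/n$ and observe that whenever this exceeds the target rate, the target already exceeds $1$ and the claim holds trivially from $\tv{\cdot}{\cdot}\le1$. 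Matching the two union-bound budgets---Chernoff over $\vec N$ and McDiarmid over the labels given $\vec N$---against $\delta$ then completes the argument.
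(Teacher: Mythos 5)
Your proposal is correct, and the expectation-level arguments (missing-mass bound for unseen inputs, coordinatewise variance plus Cauchy--Schwarz for the observed part, and the count inequality $\EE[\ind\{N_s\ge 1\}/N_s]\le 2\,\EE[1/(N_s+1)]\le 2/((n+1)\rho(s))$) essentially coincide with the paper's proof of \eqref{eq:mle:instanceDependent}; your reduction $1-\norm{p}_2^2\le(1-p_{\max})(1+p_{\max})$ reaches the same $\xi(\pi^\star)$ factor the paper extracts via $\sum_a\min(\pi^\star(a|s),1-\pi^\star(a|s))$. Where you genuinely diverge is the high-probability bound \eqref{eq:mle:highProbBound}. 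You keep the true weights $\rho(s)$ and therefore must control $\sum_{s}\rho(s)/\sqrt{N_s}$ with high probability, which forces the heavy/light split, a multiplicative Chernoff bound per heavy input, and a separate check that the light mass is absorbed whenever the target rate is below $1$. The paper instead reweights by the empirical marginal $\hat\rho$: writing $\condtv{\nll}{\pi^\star}{\rho}\le \frac1n\sum_s N_s\,\tv{\nll(\cdot|s)}{\pi^\star(\cdot|s)}+2\tv{\rho}{\hat\rho}$, the per-input concentration $N_s\,\tv{\cdot}{\cdot}\lesssim\sqrt{N_s(|\cA|+\log(|\cS|/\delta))}$ sums to $\sqrt{|\cS|n}$ by Cauchy--Schwarz \emph{deterministically}, and the only remaining randomness is a single $\ell_1$ concentration $\tv{\rho}{\hat\rho}\lesssim\sqrt{(|\cS|+\log(1/\delta))/n}$. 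Both routes land on the same bound; the paper's buys a shorter argument with no case analysis on $\rho(s)$, while yours is more self-contained (it never needs the $\tv{\rho}{\hat\rho}$ concentration) and makes explicit where the $\log(|\cS|/\delta)$ inflation of the light inputs gets absorbed. Two small points to tighten if you write it up: your derivation of the worst-case expectation bound from \eqref{eq:mle:instanceDependent} needs the explicit remark that $|\cS|/n\le\sqrt{|\cS||\cA|/n}$ may fail only when the bound is vacuous (the paper proves that bound directly and avoids the caveat), and your McDiarmid step should note that the bounded-difference constant for the conditional TV is $1/N_s$ so that the deviation term is $\sqrt{\log(|\cS|/\delta)/N_s}$, matching the additive structure of the stated numerator.
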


Thus, $\nll$ is worst-case optimal and may even have a $n^{-1}$ rate in some benign cases with $\pi^\star$ close enough to vertices of the simplex $\Delta(\cA)$.

\section{Transfer via \datatwo{}}

Besides $\cD$, we can also access $\cR \coloneqq \{(s_i, a_i, \pi^\star(a_i | s_i))\}_{i=1}^n$ as \datatwo{}. The introduction of $\cR$ leads to a quadratic reduction of the learning difficulty.

\subsection{Blessing of Ground Truth}

\begin{theorem}\label{thm:lb2}
    For nonempty $\cS$, $\cA$ with $|\cA| > 2$, and $n \geq \nicefrac{|\cS|(|\cA| - 1)}{2} - 1$,
    \begin{equation}
        \inf_{\hat\pi \in \hat\Pi(\cD, \cR)} \sup_{\rho\times \pi^\star \in \cP} \EE_{(\rho\times \pi^\star)^n}\condtv{\hat\pi}{\pi^\star}{\rho} \gtrsim \frac{|\cS||\cA|}{n},
    \end{equation}
    where $\cD\sim (\rho\times \pi^\star)^n$, $\cR = \left\{\left(s, a, \pi^\star(a|s)\right): (s, a)\in\cD\right\}$, and $\hat\Pi(\cD, \cR)$ denotes all (possibly random) learners mapping $(\cD, \cR)$ to $\Delta(\cA|\cS)$.
\end{theorem}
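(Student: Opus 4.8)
The plan is to exploit the fact that the jump from $\sqrt{|\cS||\cA|/n}$ to $|\cS||\cA|/n$ forces a construction in which the privileged probabilities recorded in $\cR$ are \emph{uninformative}; I would therefore abandon Assouad (which perturbs every coordinate detectably) in favour of a Bayesian, coordinate-wise two-point argument. I would fix $\rho$ to be uniform on $\cS$, so that the visit count $N_s \coloneqq \nm{s}{\cS(\cD)}$ of each input is $\mathrm{Binomial}(n, 1/|\cS|)$ with mean $m \coloneqq n/|\cS|$. For every $s$ I would designate one \emph{heavy} label carrying mass $1-(|\cA|-1)p$ and group the remaining $|\cA|-1$ \emph{light} labels into $\lfloor(|\cA|-1)/2\rfloor$ disjoint pairs, giving each pair total mass $2p$ split as either $(p+\delta,\,p-\delta)$ or $(p-\delta,\,p+\delta)$ according to a hidden orientation bit. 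I would place an independent uniform prior over all orientation bits (across pairs and across inputs), defining a prior $\mu$ supported in $\cP$, and I would choose the scales $p \asymp |\cS|/n$ and $\delta \asymp p$, so that feasibility $(|\cA|-1)p \le 1$ is exactly the regime $n \gtrsim |\cS||\cA|$ assumed in the hypothesis while $pm \asymp 1$.

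The crux will be that $(\cD,\cR)$ leaks nothing about any orientation bit. Because each pair sums to $2p$ regardless of its orientation, the heavy label's mass $1-(|\cA|-1)p$ -- and hence its privileged probability, which is read off on essentially every draw -- is \emph{constant} across the whole family, and the sampling law of every other label is likewise unaffected. Thus orientation bit $i$ at input $s$ can be detected only by actually drawing one of that pair's two light labels and reading its privileged value $p\pm\delta$ from $\cR$; conditioned on neither of them appearing among the $N_s$ draws at $s$, the entire remaining transcript $(\cD,\cR)$ is distributed identically under both orientations. Consequently the posterior over that bit stays uniform for any learner, and a direct computation will show that the expected $\ell_1$-error incurred on the two coordinates of an \emph{unsampled} pair is at least $2\delta$ -- a $\mathsf{TV}$ contribution of at least $\delta$ -- no matter what $\hat\pi$ outputs there.

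Assembling these per-pair bounds, additivity of $\ell_1$ over coordinates gives, for the Bayes risk under $\mu$,
\begin{equation*}
\inf_{\hat\pi}\EE_\mu\,\condtv{\hat\pi}{\pi^\star}{\rho} \;\ge\; \frac{1}{|\cS|}\sum_{s\in\cS}\Big\lfloor\tfrac{|\cA|-1}{2}\Big\rfloor\,\delta\,\Pr\big[\text{a fixed pair is unsampled at }s\big],
\end{equation*}
where $\Pr[\text{unsampled}] = \EE\big[(1-2p)^{N_s}\big] = (1-2p/|\cS|)^n \asymp e^{-2pm} = \Theta(1)$ by the choice $pm \asymp 1$. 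Since $\lfloor(|\cA|-1)/2\rfloor \asymp |\cA|$ (using $|\cA|>2$) and $\delta \asymp p \asymp |\cS|/n$, the right-hand side is $\Omega(|\cS||\cA|/n)$, and the minimax risk dominates the Bayes risk, giving the claim. The main obstacle is the middle step: I must verify rigorously that the \emph{joint} observation $(\cD,\cR)$ -- not merely the label counts -- is conditionally independent of a hidden bit given that its pair is unsampled, which is exactly where the mass-preserving pairing is indispensable (perturbing a single light label against the heavy coordinate would leak the perturbation through the always-observed heavy probability in $\cR$). A secondary nuisance will be controlling $\EE[(1-2p)^{N_s}]$ together with the feasibility constants uniformly over the stated range of $n$.
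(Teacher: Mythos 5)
Your proposal is correct and follows essentially the same route as the paper's proof in Appendix~\ref{sec:appendix:lb2proof}: uniform $\rho$, a heavy dump label, the remaining labels grouped into mass-preserving pairs of total mass $\Theta(\nicefrac{|\cS|}{n})$ whose orientations carry independent uniform prior bits, a posterior/conditional-independence argument showing an unsampled pair's bit stays uniform so that the learner pays $\Omega(\delta)$ there, and the observation that each pair is unsampled with constant probability. The only cosmetic difference is that the paper fixes the extreme split $\delta = p$ (its $\triangle = 1$, so one label of each pair has zero mass and the ``Buddy'' of a drawn label is determined), whereas you keep $\delta \asymp p$ general; the crux you flag --- that $(\cD,\cR)$ jointly, not just the counts, is independent of an unobserved bit --- is exactly what the paper verifies via its explicit posterior computation.
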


We provide a constructive proof of Theorem~\ref{thm:lb2} in Appendix~\ref{sec:appendix:lb2proof}, in which we resort to the power of randomized policy so as to reveal the linear in $|\cA|$ dependence.

\subsection{Failure of Empirical Cross-Entropy Loss}

The \datatwo{} $\cR$ motivates us to define a loss $\lossEMPCE$ interpolating between $\lossCEsingle$ and $\lossCEfull$.
\begin{equation}\label{eq:def:lossEMPCE_D+R}
    \empce \in \argmin_{\pi\in\Delta(\cA|\cS)}\lossEMPCE(\cD, \cR) \coloneqq \argmin_{\pi\in\Delta(\cA|\cS)} -\sum_{i=1}^n  \pi^\star(a_i | s_i) \log\pi(a_i| s_i).
\end{equation}
We can obtain the following exact solution to \eqref{eq:def:lossEMPCE_D+R} by another technique detailed in Appendix~\ref{subsec:appendix:empceSolDerive}.
\begin{equation}\label{eq:empceSol}
    \empce(a | s) \begin{cases}
        \propto \nm{(s, a)}{\cD}\pi^\star(a|s), &s\in\cS(\cD),\\
        \in\Delta(\cA)\text{ arbitrarily}, &s\notin\cS(\cD).
    \end{cases}
\end{equation}

The convergence analysis of $\empce$ crucially relies on its relationship with $\nll$. For any $s\in\cS(\cD)$, $\empce$ can be reformulated as

\begin{equation}\label{eq:empce_rel_mle}
    \empce(a|s) = \frac{\pi^\star(a|s)\nll(a|s)}{\sum_{b\in\cA}\pi^\star(b|s)\nll(b|s)}.
\end{equation}

\begin{lemma}\label{lem:empce_bias}
    For $|\cS| = 1$, 
    $$
    \empce(\cdot|s) \stackrel{a.s.}{\longrightarrow} \left[\pi^\star(\cdot|s)\right]^2 / \sum_{a\in\cA}\left[\pi^\star(a|s)\right]^2,
    $$
    under $\ell_\infty$ in $\RR^{|\cA|}$ if $\rho\times\pi^\star$ is independent of $n$.
\end{lemma}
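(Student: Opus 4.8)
The plan is to exploit the explicit tilting identity \eqref{eq:empce_rel_mle} together with the strong law of large numbers. Specialize to $|\cS|=1$ and abbreviate $p_a \coloneqq \pi^\star(a|s)$ and $\hat p_a \coloneqq \nll(a|s)$. By the closed form \eqref{eq:mleSol}, $\hat p_a$ is exactly the empirical frequency $\nm{(s,a)}{\cD}/n$ of the label $a$ among the $n$ i.i.d.\ draws from $p$. Because $p$ is fixed (this is precisely where the hypothesis that $\rho\times\pi^\star$ is independent of $n$ enters), the SLLN gives $\hat p_a \stackrel{a.s.}{\longrightarrow} p_a$ for each of the finitely many $a\in\cA$; intersecting these $|\cA|$ probability-one events yields $\norm{\hat p - p}_\infty \stackrel{a.s.}{\longrightarrow} 0$.

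Next I would read \eqref{eq:empce_rel_mle} as saying that $\empce(\cdot|s)$ is the image $\Phi(\hat p)$ of the empirical distribution under the fixed map
\[
\Phi(q)_a \coloneqq \frac{p_a q_a}{\sum_{b\in\cA} p_b q_b},
\]
and then invoke continuity of $\Phi$ at the limit point $q=p$. The only thing to verify is that the denominator stays bounded away from $0$ near $p$: at $q=p$ it equals $\sum_{b} p_b^2$, which by Cauchy--Schwarz satisfies $1=\left(\sum_b p_b\right)^2\le |\cA|\sum_b p_b^2$, hence $\sum_b p_b^2\ge 1/|\cA|>0$. Continuity then follows on a neighborhood of $p$, and the deterministic-limit continuous mapping theorem gives $\empce(\cdot|s)=\Phi(\hat p)\stackrel{a.s.}{\longrightarrow}\Phi(p)$, whose $a$-th coordinate is $p_a^2/\sum_{a'\in\cA} p_{a'}^2=[\pi^\star(a|s)]^2/\sum_{a'\in\cA}[\pi^\star(a'|s)]^2$, exactly the claimed $\ell_\infty$ limit.

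I do not expect a serious obstacle here; the content is the reduction to \eqref{eq:empce_rel_mle} plus the positivity of $\sum_b p_b^2$. The two points worth stating carefully are (i) why the $n$-independence hypothesis is needed---almost-sure convergence lives on a single probability space, so the SLLN requires a fixed target $p$, whereas in the triangular-array constructions behind the minimax bounds $p$ may drift with $n$ and the a.s.\ statement would be ill-posed; and (ii) the degenerate coordinates: if $p_a=0$ then $a$ is never sampled almost surely, so $\hat p_a\equiv 0$ and both the prelimit and limiting formulas return $0$ in that coordinate, keeping the argument valid without any case split.
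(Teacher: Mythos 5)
Your proposal is correct and follows essentially the same route as the paper: establish $\nll(\cdot|s)\stackrel{a.s.}{\longrightarrow}\pi^\star(\cdot|s)$ (the paper cites Lemma~\ref{lem:gc4pmf}, you use the coordinate-wise SLLN over the finite alphabet, which is equivalent here) and then apply the continuous mapping theorem to the identity \eqref{eq:empce_rel_mle}. Your explicit check that the denominator $\sum_b p_b^2\ge 1/|\cA|>0$ is a small but welcome addition that the paper's proof leaves implicit.
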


Lemma~\ref{lem:empce_bias} roughly means $\empce \propto (\pi^\star)^2$ approximately as $n$ goes to infinity, which implies that small parts of $\pi^\star$ are underestimated and large parts of $\pi^\star$ are overestimated. We make this intuition technically right in Theorem~\ref{thm:empce_bias}, whose rigorous statement, mechanism, and proof are deferred to Appendix~\ref{subsec:Prooff:thm:empce_bias}.

\begin{theorem}\label{thm:empce_bias}
    If $\rho\times\pi^\star$ does not vary with $n$, $\empce$ coincides with $\nll$ (and thus asymptotically unbiased\footnote{Though $\empce$ is not an estimator, we can discuss unbiasedness under a more general notion, i.e., for $K$ constants $\{c_i\}_{i=1}^K$, the random variables $\{X_{i, n}\}_{i=1}^K$ is called asymptotically unbiased if $X_{i, n}\to c_i$ in some mode of convergence as $n\to\infty$ for every $i\in[K]$.}) only if $\pi^\star(\cdot|s) = \mathsf{Uniform}(\cA)$ or $\pi^\star(\cdot|s)\in\mathsf{Dirac}(\cA)$ for all $s\in\cS$. Even for $|\cS| = 1$, $\empce$ is asymptotically biased in general.
\end{theorem}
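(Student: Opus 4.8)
The plan is to reduce the statement to the single-input case and then compare the almost-sure limits of $\empce$ and $\nll$. First I would observe that both solution formulas \eqref{eq:mleSol} and \eqref{eq:empceSol} act coordinatewise in $s$, and that the objective \eqref{eq:qoi} splits as $\sum_s\rho(s)\tv{\cdot}{\cdot}$; so it suffices to fix an arbitrary $s$ in the support of $\rho$ and argue conditionally on its (almost surely growing) subsample $\cA(\cD,s)$. For such $s$ we have $\nm{s}{\cS(\cD)}\to\infty$ almost surely, and the strong law of large numbers applied to the empirical frequencies gives $\nll(\cdot|s)\to\pi^\star(\cdot|s)$ almost surely in $\ell_\infty$. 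Feeding this into the reformulation \eqref{eq:empce_rel_mle} and invoking the continuous mapping theorem (the normalizing denominator $\sum_b\pi^\star(b|s)\nll(b|s)\to\|\pi^\star(\cdot|s)\|_2^2>0$ stays bounded away from zero) reproduces the limit of Lemma~\ref{lem:empce_bias}, namely $\empce(\cdot|s)\to(\pi^\star(\cdot|s))^2/\|\pi^\star(\cdot|s)\|_2^2$ almost surely, now for every such $s$ and hence in the general $|\cS|$ setting.

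For the necessity (``only if'') direction I would compare these two limits. If $\empce$ coincides with $\nll$, their almost-sure limits must agree, so for every relevant $s$ and every $a$,
$$\frac{(\pi^\star(a|s))^2}{\|\pi^\star(\cdot|s)\|_2^2}=\pi^\star(a|s)\iff \pi^\star(a|s)\bigl(\pi^\star(a|s)-\|\pi^\star(\cdot|s)\|_2^2\bigr)=0.$$
Thus each mass $\pi^\star(a|s)$ is either $0$ or equal to the common constant $c_s:=\|\pi^\star(\cdot|s)\|_2^2$, i.e. $\pi^\star(\cdot|s)$ is uniform on its support. Writing $k_s$ for the support size, $\sum_a\pi^\star(a|s)=1$ and the definition of $c_s$ both give $c_s=\nicefrac{1}{k_s}$; the endpoint cases $k_s=|\cA|$ and $k_s=1$ are precisely $\pi^\star(\cdot|s)=\mathsf{Uniform}(\cA)$ and $\pi^\star(\cdot|s)\in\mathsf{Dirac}(\cA)$, which (under full support, or after restricting $\cA$ to the support) are the only possibilities, yielding the claim.

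For the second assertion I would simply exhibit a witness at $|\cS|=1$: take $\cA=\{1,2\}$ and $\pi^\star=(\nicefrac{2}{3},\nicefrac{1}{3})$, which is neither uniform nor Dirac. The limit above evaluates to $(\nicefrac{4}{5},\nicefrac{1}{5})\neq(\nicefrac{2}{3},\nicefrac{1}{3})$, so $\empce$ converges almost surely to a point at positive $\mathsf{TV}$ distance from $\pi^\star$ and hence $\condtv{\empce}{\pi^\star}{\rho}\to c>0$ almost surely, which is the $\Omega(1)$ bias recorded in Table~\ref{tab:resultsTeaser}. More generally, the displayed fixed-point equation shows that any $\pi^\star$ not constant on its support is strictly overweighted on its heavy atoms and underweighted on its light ones, so the bias is the rule rather than the exception.

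I expect the main obstacle to be pinning down the precise sense of ``coincides'' together with the edge case of laws that are uniform on a \emph{proper} subset of $\cA$: the characterization $\pi^\star(a|s)\in\{0,c_s\}$ also admits such intermediate-support solutions, so the clean Uniform-or-Dirac dichotomy requires either a full-support assumption or the remark that such a $\pi^\star$ is indistinguishable from a uniform law on the reduced alphabet equal to its support. A secondary technical point is upgrading the single-input Lemma~\ref{lem:empce_bias} to general $\cS$ while controlling the unvisited inputs, whose total $\rho$-weight is the missing mass $\mm{\rho}{\cS(\cD)}\to0$ almost surely and therefore contributes negligibly to \eqref{eq:qoi}.
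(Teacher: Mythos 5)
Your proposal is correct and follows essentially the same route as the paper: both arguments pivot on the relation $\empce(\cdot|s)\propto\pi^\star(\cdot|s)\,\nll(\cdot|s)$ and on the limit $(\pi^\star)^2/\sum_a(\pi^\star(a))^2$ from Lemma~\ref{lem:empce_bias}, the only stylistic difference being that the paper derives the ``only if'' by comparing the finite-sample solution sets directly, whereas you compare the almost-sure limits (and you add an explicit numerical witness and the missing-mass remark for general $|\cS|$, neither of which the paper spells out). Your flagged edge case is a genuine and correct observation that applies equally to the paper's own proof: a $\pi^\star(\cdot|s)$ that is uniform on a proper subset of $\cA$ satisfies $\pi^\star(a|s)\in\{0,c_s\}$ and indeed makes $\empce$ and $\nll$ coincide on every realizable $\cD$, so the clean $\mathsf{Uniform}(\cA)$-or-$\mathsf{Dirac}(\cA)$ dichotomy as stated implicitly assumes full support (or identifies $\cA$ with the support of $\pi^\star(\cdot|s)$); making that caveat explicit, as you do, tightens rather than weakens the result.
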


\subsection{Empirical Squared Error Logit Loss}

\citet{ba2014deep} suggests the practically promising $\mathsf{SEL}$ loss\footnote{Practical versions of $\mathsf{SEL}$ often allow unnormalized logits. See Remark~\ref{rmk:2ndCase:norm_vs_unnorm} for more discussions.}:

\begin{equation}\label{eq:selLoss}
    \cL(\pi, \pi^\star) = \sum_{i=1}^n \frac{1}{2}\sum_{a\in\cA} \left[\log\pi(a| s_i) - \log\pi^\star(a| s_i)\right]^2.
\end{equation}

Here we analyze the minimization of its empirical variant with \emph{normalized} logits under the second data acquisition protocol for simplicity:
\begin{equation}\label{eq:def:lossEMPSEL}
    \lossEMPSEL(\cD, \cR) \coloneqq \sum_{i=1}^n \frac{1}{2} \left[\log\pi(a_i| s_i) - \log\pi^\star(a_i| s_i)\right]^2.
\end{equation}
Exact matching on the seen samples in $(\cD, \cR)$ shows that $\empsel \in \argmin_{\pi\in\Delta(\cA|\cS)} \lossEMPSEL(\cD, \cR)$
\begin{equation}\label{eq:empselSol}
 \Leftrightarrow \empsel(\cdot| s)\in \begin{cases}
    \{p\in\Delta(\cA): p(a) = \pi^\star(a|s), \forall a\in\cA(\cD, s)\}, &s \in \cS(\cD),\\
        \Delta(\cA)\text{ arbitrarily}, &\text{otherwise.}
    \end{cases}
\end{equation}

The following three-fold Theorem~\ref{thm:empselRefined} with its proof in Appendix~\ref{subsec:Prooff:thm:empce_bias} indicates that $\empsel$ converges faster than $\nll$ by a factor of $\sqrt{n}$ though its performance upper bound has worse dependence on $|\cS||\cA|$ compared with that of $\nll$.

\begin{theorem}\label{thm:empselRefined}
    If $|\cS| > 1$, for any $\delta\in (0, \min(1, \nicefrac{(|\cS| + 2)}{10}))$, with probability at least $1 - \delta$,
    \begin{equation}\label{eq:empselRefined:sentence}
        \condtv{\empsel}{\pi^\star}{\rho} \lesssim \frac{|\cS|\left(|\cA| + \sqrt{|\cA|\log\left(|\cS|/\delta\right)}\right)}{n}\log\frac{|\cS|}{\delta}.
    \end{equation}
    If $|\cS| = 1$, for any $\delta\in(0, \nicefrac{1}{10}]$, with probability at least $1 - \delta$,
    \begin{equation}\label{eq:empselRefined:S=1}
        \condtv{\empsel}{\pi^\star}{\rho} = \tv{\empsel}{\pi^\star} \lesssim \frac{|\cA|}{n} + \frac{\sqrt{|\cA|}}{n}\log\frac{1}{\delta}.
    \end{equation}
    The expected risk $\EE\condtv{\empsel}{\pi^\star}{\rho} \lesssim \nicefrac{|\cS||\cA|}{n}$ 
    is not polynomially tighter than \eqref{eq:empselRefined:sentence} or \eqref{eq:empselRefined:S=1}.
\end{theorem}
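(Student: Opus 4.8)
The plan is to first reduce the theorem to a statement about \emph{missing mass} and then invoke concentration of that quantity. Starting from the closed form \eqref{eq:empselSol}, observe that for every seen input $s\in\cS(\cD)$ the estimator $\empsel(\cdot|s)$ agrees with $\pi^\star(\cdot|s)$ on the entire observed label set $\cA(\cD,s)$, so the two distributions can differ only on the unobserved labels. Since both are probability vectors that coincide on $\cA(\cD,s)$, their masses on the complement are equal and both equal the conditional label-missing-mass $M^{(s)}\coloneqq\sum_{a\notin\cA(\cD,s)}\pi^\star(a|s)$; a triangle inequality then yields the deterministic bound $\tv{\empsel(\cdot|s)}{\pi^\star(\cdot|s)}\le M^{(s)}$ for \emph{any} completion of $\empsel$ on the unseen labels. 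For an unseen input I bound $\mathsf{TV}\le 1$. Weighting by $\rho$ and noting that an unseen $s$ contributes exactly its joint mass $\rho(s)=\sum_a(\rho\times\pi^\star)(s,a)$, the whole risk collapses onto the joint missing mass on the alphabet $\cS\times\cA$ of size $|\cS||\cA|$,
\[
\condtv{\empsel}{\pi^\star}{\rho}\ \le\ \sum_{s\in\cS(\cD)}\rho(s)\,M^{(s)}\ +\ \mm{\rho}{\cS(\cD)}\ =\ \mm{\rho\times\pi^\star}{\cD}.
\]
The expectation bound $\EE\condtv{\empsel}{\pi^\star}{\rho}\lesssim|\cS||\cA|/n$ is then immediate from $\EE[\mm{\nu}{X^n}]=\sum_x\nu(x)(1-\nu(x))^n\le |\mathrm{supp}(\nu)|/(en)$ with $\nu=\rho\times\pi^\star$, and since its leading order matches the high-probability bounds up to logarithms it is clearly not polynomially tighter.

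For the high-probability statements I would not concentrate the joint missing mass monolithically, but split it per input so as to separate the $|\cA|$ and $|\cS|$ dependences. Condition on the count vector $(\nm{s}{\cS(\cD)})_{s\in\cS}$; given it, the labels attached to each $s$ are $\nm{s}{\cS(\cD)}$ i.i.d.\ draws from $\pi^\star(\cdot|s)$, independently across $s$. The analytic crux is a variance-adaptive concentration inequality for the missing mass: from $m$ i.i.d.\ samples the missing mass $M$ of a distribution on $\cA$ obeys, with probability $1-\delta'$, $M\lesssim |\cA|/m+\sqrt{|\cA|\log(1/\delta')}/m+\log(1/\delta')/m$, leveraging $\EE[M]\le|\cA|/(em)$ and the variance control $\mathrm{Var}(M)\lesssim\EE[M]/m\lesssim|\cA|/m^2$. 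Applying this to each seen $s$ with $m=\nm{s}{\cS(\cD)}$ and $\delta'=\delta/|\cS|$, then union bounding over the at most $|\cS|$ seen inputs, controls each $M^{(s)}$ at the order $\big(|\cA|+\sqrt{|\cA|L}+L\big)/\nm{s}{\cS(\cD)}$ with $L\coloneqq\log(|\cS|/\delta)$.

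It remains to compare the random denominators $\nm{s}{\cS(\cD)}$ with the weights $\rho(s)$. The key uniform estimate, obtained from the multiplicative Chernoff lower tail for $\nm{s}{\cS(\cD)}\sim\mathrm{Binomial}(n,\rho(s))$ together with a union bound, is that with probability $1-\delta$, simultaneously for every seen $s$, $\rho(s)/\nm{s}{\cS(\cD)}\lesssim L/n$: inputs with $n\rho(s)\gtrsim L$ satisfy $\nm{s}{\cS(\cD)}\ge n\rho(s)/2$ except with probability $e^{-\Omega(n\rho(s))}\le\delta/|\cS|$, while inputs with $n\rho(s)\lesssim L$ give $\rho(s)/\nm{s}{\cS(\cD)}\le\rho(s)\le L/n$ trivially since $\nm{s}{\cS(\cD)}\ge1$ for a seen input. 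This single estimate produces the blanket factor $L$ that multiplies the entire bound. Substituting it into the per-input bounds and summing the at most $|\cS|$ terms gives $\sum_s\rho(s)M^{(s)}\lesssim \tfrac{|\cS|L}{n}\big(|\cA|+\sqrt{|\cA|L}+L\big)$, where the residual term linear in $L$ is of lower order and folds into the two displayed terms; the input-missing-mass $\mm{\rho}{\cS(\cD)}$ is handled identically on the size-$|\cS|$ alphabet and is dominated by the leading $|\cS||\cA|L/n$. For $|\cS|=1$ there is neither $\rho$-weighting nor a union over inputs and the count is deterministically $n$, so the per-input inequality with $L=\log(1/\delta)$ directly gives $|\cA|/n+\sqrt{|\cA|L}/n+L/n$, which is upper bounded by the stated $|\cA|/n+\sqrt{|\cA|}\log(1/\delta)/n$; the prescribed range of $\delta$ is what keeps the constants in the Chernoff regime split and the deviation terms valid. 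I expect the main obstacle to be the missing-mass concentration itself, namely securing a Bernstein-type tail whose \emph{effective range} scales like $1/m$ rather than $\max_a\pi^\star(a|s)=\Theta(1)$ — the naive bounded-difference constant would swamp the $|\cA|/m$ scale — which forces a self-bounding or negative-association argument; one must also verify that conditioning on the count vector genuinely preserves the conditional independence of the labels used in the per-input step.
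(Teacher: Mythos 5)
Your proposal follows essentially the same route as the paper: bound the per-input $\mathsf{TV}$ by the conditional label missing mass $\mm{\pi^\star(\cdot|s)}{\cA(\cD,s)}$, concentrate that quantity conditionally on the visitation counts via a missing-mass deviation inequality whose form you correctly flag as the crux (the paper imports exactly such a bound, Lemma~\ref{lem:missingMass}, from \citet{rajaraman2020toward}), and compare $\rho(s)$ with $\nm{s}{\cS(\cD)}$ through the same two-regime Chernoff split into low- and high-mass inputs that the paper carries out with $\bar\cS$ and $\tilde\cS$. Your observation that the whole risk is dominated by the joint missing mass $\mm{\rho\times\pi^\star}{\cD}$ on the alphabet $\cS\times\cA$ yields a slightly cleaner derivation of the expectation bound, but the argument is otherwise the paper's.
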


\begin{remark}
    Theorem~\ref{thm:empce_bias} together with Theorem~\ref{thm:empselRefined} manifests the advantage of employing the empirical SEL loss, which induces an alignment between the normalized logits of the learner and those of $\pi^\star$ under squared loss, over the empirical CE loss in offline distillation when the teacher is moderately reserved. A similar observation between these two style of empirical surrogate losses in online policy optimization is verfied in practice \citep{zhu2023fine}.
\end{remark}

\section{Transfer via \datathree{}}\label{sec:transferViaDatathree}

At the lightest level, the student has the extra information $\cQ = \{(s, \pi^\star(\cdot|s):s\in\cS(\cD)\}$. The availability of $\cQ$ apparently eases the transfer process, especially when the support size $|\cA|$ of the teacher classifier is huge. Such an intuition can be precisely depicted by a $|\cA|$-free minimax lower bound.

\subsection{$|\cA|$-Free Lower Bound}

The following lower bound roughly requires $\Omega(|\cS|)$ burn-in cost, whose constructive proof is deferred to Appendix~\ref{sec:appendix:lb3proof}.

\begin{theorem}\label{thm:lb3}
    For $\cS$ with $|\cS| > 1$, $\cA$ with $|\cA| > 1$, and $n > |\cS| - 1$,
    \begin{equation}\label{eq:lb3:Sentence}
        \inf_{\hat\pi \in \hat\Pi(\cD, \cQ)} \sup_{\rho\times \pi^\star \in \cP} \EE_{(\rho\times \pi^\star)^n}\condtv{\hat\pi}{\pi^\star}{\rho} \gtrsim {\frac{|\cS|}{n}},
    \end{equation}
    where $\cD\sim (\rho\times \pi^\star)^n$, $\cQ = \left\{\left(s, \pi^\star(\cdot|s)\right): s\in\cS(\cD)\right\}$, and $\hat\Pi(\cD, \cQ)$ denotes all (possibly random) learners mapping $(\cD, \cQ)$ to $\Delta(\cA|\cS)$.
\end{theorem}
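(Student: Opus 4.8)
\emph{Approach.} The plan is to lower-bound the minimax risk by a Bayes risk under a prior engineered so that the side information $\cQ$ trivializes every observed input and concentrates all the hardness in the unseen inputs, i.e. in the missing mass of $\rho$.

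\emph{Construction.} I would fix two laws $p_0,p_1\in\Delta(\cA)$ with $\tv{p_0}{p_1}=1$ (point masses on two distinct labels, available since $|\cA|>1$), and let $\rho$ place mass $1/n$ on each of $|\cS|-1$ designated ``light'' inputs and the remaining $1-(|\cS|-1)/n$ on a single ``heavy'' input; this is a bona fide distribution exactly because $n>|\cS|-1$. The prior draws $\pi^\star(\cdot|s)$ independently across light inputs, equal to $p_0$ or $p_1$ with probability $1/2$ each (the heavy input gets any fixed law). Since $\sup_{\rho\times\pi^\star\in\cP}$ dominates the average over this prior, it suffices to lower-bound $\EE\big[\condtv{\hat\pi}{\pi^\star}{\rho}\big]$ for an arbitrary $\hat\pi\in\hat\Pi(\cD,\cQ)$.

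\emph{Decoupling.} Next I would split the per-input contributions into seen and unseen light inputs, discarding the nonnegative heavy-input term. For a light $s\in\cS(\cD)$, the entry of $\cQ$ reveals $\pi^\star(\cdot|s)$ exactly, so choosing $\hat\pi(\cdot|s)=\pi^\star(\cdot|s)$ costs zero. The heart of the argument is that for an unseen light $s$ the posterior of $\pi^\star(\cdot|s)$ given $(\cD,\cQ)$ stays uniform on $\{p_0,p_1\}$; then for any guess $q$ the triangle inequality gives $\tfrac12\tv{q}{p_0}+\tfrac12\tv{q}{p_1}\ge\tfrac12\tv{p_0}{p_1}=\tfrac12$, so each unseen light input incurs expected conditional TV at least $1/2$.

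\emph{Conclusion and main obstacle.} Summing over the light atoms yields
\[
\EE\big[\condtv{\hat\pi}{\pi^\star}{\rho}\big]\ \ge\ \tfrac12\,(|\cS|-1)\,\tfrac1n\,\Big(1-\tfrac1n\Big)^{n}\ \gtrsim\ \frac{|\cS|}{n},
\]
using $(1-1/n)^n\ge 1/4$ for $n\ge2$ and $|\cS|-1\ge|\cS|/2$; the middle quantity is exactly $\tfrac12\EE[\mm{\rho}{\cS(\cD)}]$ restricted to the light atoms. The hard part will be the posterior-invariance claim in the decoupling step: I must argue formally that disclosing the full soft labels $\cQ$ on the seen inputs, together with every sampled label, leaks nothing about $\pi^\star(\cdot|s)$ at an unseen $s$. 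This is where the product structure of both the prior and the generating law $(\rho\times\pi^\star)^n$ is essential, and it should be written as a genuine conditional-independence statement (conditioning on the realized set $\cS(\cD)$) rather than asserted; the remaining light-atom missing-mass estimate is then routine.
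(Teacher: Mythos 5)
Your proposal is correct and follows essentially the same route as the paper's proof: a Bayes-risk reduction with a product prior over near-Dirac reference policies and an atomic $\rho$ concentrating $1/n$-mass on $|\cS|-1$ light inputs, so that $\cQ$ trivializes seen inputs and the risk reduces to $\tfrac12$ times the expected missing mass of $\rho$, giving $\Omega(|\cS|/n)$. The only differences are cosmetic — you use a two-point prior $\{p_0,p_1\}$ per input where the paper averages over all $|\cA|$ Dirac policies (pairing them to get the same $\tfrac12$ via the triangle inequality), and you use atom mass $1/n$ versus the paper's $1/(n+1)$ — and the posterior-invariance step you flag is exactly what the paper establishes via the Bayes rule and the product structure of prior and sampling law.
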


This setting cannot have a rate better than $n^{-1}$, which is consistent with the $n^{-1}$ rate in Theorem~\ref{thm:lb2} since the difficulties of the later two settings (intuitively, the information provided at these two levels) should be the same when $\pi^\star(\cdot|s)\in\mathsf{Dirac}(\cA)$ for any $s\in\cS$.

\subsection{Kullback-Leibler Divergence Minimization}

Cross-entropy loss minimization under full observation is equivalent to
\begin{equation}\label{eq:CE_full_def}
    \fullkl \in \argmin_{\pi} \sum_{i=1}^n \kl{\pi^\star(\cdot | s_i)}{\pi(\cdot | s_i)}\Rightarrow \fullkl(\cdot| s)  \begin{cases}
        =\pi^\star(\cdot| s), &\text{if }s \in \cS(\cD),\\
        \in\Delta(\cA)\text{ arbitrarily}, &\text{otherwise.}
    \end{cases}
\end{equation}

We give the missing-mass-based proof of the matching upper bounds for $\fullkl$ in Theorem~\ref{thm:full} in Appendix~\ref{subsec:fullklProofs}.

\begin{theorem}\label{thm:full}
    For any $\delta \in (0, \nicefrac{1}{10}]$, with probability at least $1 - \delta$,
    \begin{equation}\label{eq:fullHighProbBound}
        \condtv{\fullkl}{\pi^\star}{\rho} \lesssim \frac{|\cS|}{n} + \frac{\sqrt{|\cS|}}{n}\log\frac{1}{\delta}.
    \end{equation}
    The upper bound $\EE\condtv{\fullkl}{\pi^\star}{\rho} \lesssim \nicefrac{|\cS|}{n}$ on the expected risk for $\fullkl$ nearly matches \eqref{eq:fullHighProbBound}.
\end{theorem}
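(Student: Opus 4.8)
\textbf{Proof plan for Theorem~\ref{thm:full}.} The plan is to reduce the entire statement to a concentration bound on the \emph{missing mass}. By the closed form \eqref{eq:CE_full_def}, $\fullkl(\cdot|s) = \pi^\star(\cdot|s)$ exactly whenever $s\in\cS(\cD)$, so $\tv{\fullkl(\cdot|s)}{\pi^\star(\cdot|s)} = 0$ on every observed input and is at most $1$ on the rest, regardless of how ties are broken off-support. Hence
\begin{equation*}
    \condtv{\fullkl}{\pi^\star}{\rho} \;\le\; \sum_{s\notin\cS(\cD)}\rho(s) \;=\; \mm{\rho}{\cS(\cD)} \;=:\; M_0,
\end{equation*}
and it suffices to prove $M_0 \lesssim \nicefrac{|\cS|}{n} + \nicefrac{\sqrt{|\cS|}}{n}\log(1/\delta)$ with probability $1-\delta$ together with $\EE[M_0]\lesssim\nicefrac{|\cS|}{n}$. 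This is exactly the quantity that already drives the $|\cS|=1$ regime of Theorem~\ref{thm:empselRefined}, so the two proofs share a common kernel.

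First I would pin down the first two moments. Writing $B_s = \mathbf{1}[s\notin\cS(\cD)]$ and $p_s = (1-\rho(s))^n \le e^{-n\rho(s)}$, we have $M_0 = \sum_{s}\rho(s)B_s$, and the elementary inequalities $x e^{-nx}\le \nicefrac{1}{en}$ and $x^2 e^{-nx}\le \nicefrac{4}{e^2n^2}$ give
\begin{equation*}
    \EE[M_0] = \sum_s \rho(s)(1-\rho(s))^n \le \frac{|\cS|}{en}, \qquad \sum_s \rho(s)^2(1-\rho(s))^n \le \frac{4|\cS|}{e^2 n^2}.
\end{equation*}
The empty-cell indicators $\{B_s\}$ are negatively associated, a standard fact for multinomial occupancy since each $B_s = \mathbf{1}[\nm{s}{\cS(\cD)}=0]$ is a non-increasing function of a single negatively-associated count; therefore all covariances are nonpositive and $\mathrm{Var}(M_0)\le \sum_s\rho(s)^2(1-\rho(s))^n \le \nicefrac{4|\cS|}{e^2n^2}$. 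The expectation display already settles the bound on $\EE\condtv{\fullkl}{\pi^\star}{\rho}$.

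The core of the argument is upgrading this variance bound to the high-probability tail, and this is the step I expect to be the main obstacle. The naive route -- McDiarmid, or a plain Bernstein over the summands $\{\rho(s)B_s\}$ -- fails because a heavy input with $\rho(s)$ close to $1$ forces the per-coordinate range to be $\approx 1$, certifying fluctuations only at the useless scale $\log(1/\delta)$ rather than $\nicefrac{\sqrt{|\cS|}}{n}\log(1/\delta)$. I would bypass this via a direct moment-generating-function computation that exploits negative association. For $0\le\lambda<n$, negative association factorizes the MGF, and $\log(1+u)\le u$ with $\sum_s p_s\rho(s)=\EE[M_0]$ yields
\begin{equation*}
    \log\EE\bigl[e^{\lambda(M_0 - \EE M_0)}\bigr] \;\le\; \sum_s p_s\bigl(e^{\lambda\rho(s)}-1-\lambda\rho(s)\bigr).
\end{equation*}
The decisive cancellation is $p_s e^{\lambda\rho(s)} \le e^{-(n-\lambda)\rho(s)}\le 1$: combined with $e^x-1-x\le \tfrac{x^2}{2}e^x$, the right-hand side is at most $\tfrac{\lambda^2}{2}\sum_s\rho(s)^2 e^{-(n-\lambda)\rho(s)} \le \tfrac{2|\cS|\lambda^2}{e^2(n-\lambda)^2}$, so heavy inputs are tamed with no truncation whatsoever.

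Finally I would apply the Chernoff bound with the choice $\lambda = \Theta(n/\sqrt{|\cS|})$, which converts this sub-gamma cumulant estimate into $\Pr[M_0 - \EE M_0 \ge t]\le\delta$ at $t\asymp \nicefrac{\sqrt{|\cS|}}{n}\log(1/\delta)$; adding $\EE[M_0]\le\nicefrac{|\cS|}{en}$ gives \eqref{eq:fullHighProbBound}. The hypothesis $\delta\le\nicefrac{1}{10}$ guarantees $\log(1/\delta)\ge\sqrt{\log(1/\delta)}$, so the genuinely sub-Gaussian piece of the tail is absorbed into the single $\log(1/\delta)$ factor of the stated bound, and the expected-risk claim is immediate from the second paragraph.
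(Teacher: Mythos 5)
Your proof is correct and follows the paper's reduction exactly: since $\fullkl(\cdot|s)=\pi^\star(\cdot|s)$ on every visited input and $\mathsf{TV}\le 1$ elsewhere, the risk is dominated by the missing mass $\mm{\rho}{\cS(\cD)}$, whose expectation is $O(\nicefrac{|\cS|}{n})$ by maximizing $x(1-x)^n$ pointwise. The only place you diverge is the concentration step: the paper simply invokes a quoted deviation bound for the missing mass (Lemma~\ref{lem:missingMassCor}, built on \citealt[Theorem~A.2]{rajaraman2020toward}), whereas you re-derive that bound from scratch via negative association of the empty-cell indicators together with a direct MGF computation. Your derivation checks out: the factorization of the MGF under negative association is legitimate because each $\ind\{\nm{s}{\cS(\cD)}=0\}$ is a monotone function of a single negatively associated occupancy count; the cancellation $p_s e^{\lambda\rho(s)}\le e^{-(n-\lambda)\rho(s)}\le 1$ combined with $e^x-1-x\le \tfrac{x^2}{2}e^x$ and $x^2e^{-cx}\le \nicefrac{4}{(e^2c^2)}$ gives the stated sub-gamma cumulant bound; and the choice $\lambda\asymp n/\sqrt{|\cS|}$ (which stays below $n/2$ since $|\cS|\ge 1$) recovers the $\nicefrac{\sqrt{|\cS|}}{n}\log(1/\delta)$ deviation, with $\delta\le\nicefrac{1}{10}$ absorbing the constant and the sub-Gaussian term. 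You also correctly diagnose why a naive bounded-differences or per-coordinate Bernstein argument fails when some $\rho(s)$ is large. In short, the paper's route buys brevity by citation; yours buys a self-contained proof of the same key lemma.
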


Theorem~\ref{thm:full} guarantees the optimality (not only in expectation but also in expectation) of $\fullkl$ in that it maximally utilizes the given logits.

\section{Experiments}

We conduct simulations 
to verify the intuitive performance rankings $\nll\preceq\empsel\preceq\fullkl$ given moderately large sample sizes and also numerically provide the asymptotical biasedness of $\empce$ with a finite-sample counterpart. Moreover, we design adversarial data generating distributions inspired by the information-theoretic arguments (Appendix~\ref{sec:appendix:lbs}) for the three types of reserved teachers 
respectively in the non-asymptotic regime, 
thereby accurately exhibiting 
the matching convergence rates of $\nll$, $\empsel$, and $\fullkl$ in terms of $n$.

In this section, we specify a fair inductive bias due to the tabular nature: if $s\notin \cS(\cD)$, $\hat\pi(\cdot|s)$ is set to $\mathsf{Uniform}(\cA)$ for all learners; for $\empsel(\cdot|s)$, the missing mass is amortized uniformly among $\cA\backslash\cA(\cD, s)$ if $s\in\cS(\cD)$.

\subsection{Classic Regime: Telling Learners Apart}

In the classic regime, $\rho\times\pi^\star$ stays invariant no matter whether $n$ tends to infinity or not. An instance in this sense should not only expose the inferior of $\empce$ but also showcase the hardness of $|\cS| > 1$, i.e., $\rho$ should be strictly bounded away from zero in $\Omega(|\cS|)$ inputs. To these ends, we specify \emph{Instance~\hyperref[instance:0]{0}} in Appendix~\ref{appendix:ExperimentsSepc:subsec_instances}. We simulate four typical realizations of it, whose estimated risks is presented in Figure~\ref{fig:tabular_exp}. Each marker in Figure~\ref{fig:tabular_exp} represents the empirical mean of $\condtv{\hat\pi}{\pi^\star}{\rho}$ in 100 independent repeats given the corresponding sample size $n$. Any broken line in Figure~\ref{fig:tabular_exp} has nothing to do with sequential design and our experiment is purely offline. As shown in Figure~\ref{fig:tabular_exp} (a, b), either in a general case with $(|\cS|, |\cA|)=(100, 25)$ or for a 100-armed rewardless bandit, $\hat\EE\condtv{\empce}{\pi^\star}{\rho}$ fails to converge but all other learners do, corroborating the asymptotically constant bias of $\empce$ and the consistency of the others. 

\begin{figure}[htbp]
    \centering
    \includegraphics[width=\textwidth]{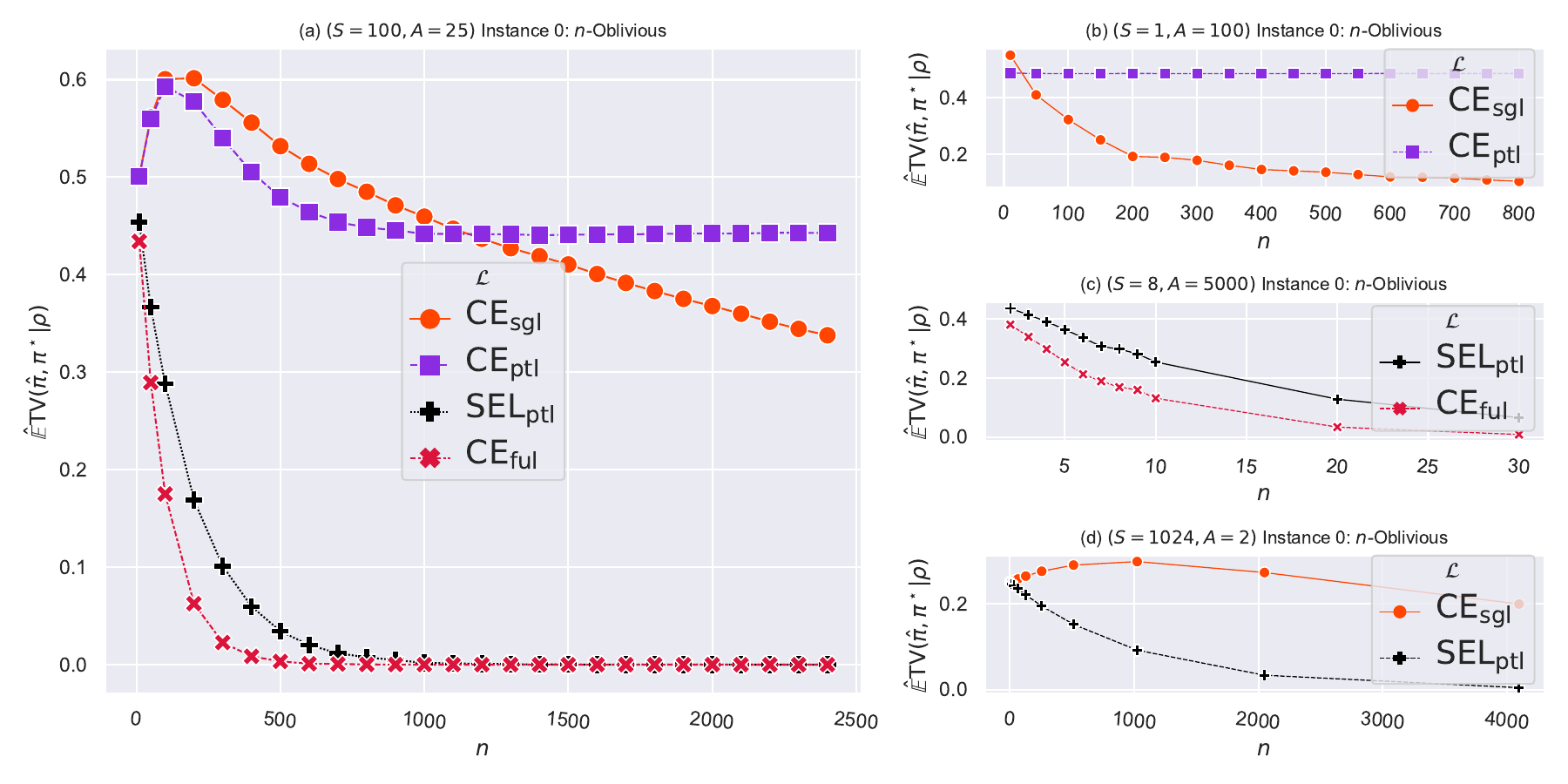}
    \caption{Estimated risks in expectation. $\rho\times\pi^\star$ does not vary with $n$.}
    \label{fig:tabular_exp}
\end{figure}

Though Instance~\hyperref[instance:0]{0} is effectively so easy to learn for any of $\nll$, $\empsel$, and $\fullkl$ that none of the three worst-case upper bounds is tightly attained, the numerical performance rankings among them in Figure~\ref{fig:tabular_exp} coincide with our intuition and theoretical analysis. The ``benign''-case comparison in Figure~\ref{fig:tabular_exp} (c), where the sample sizes are small enough to make the worst-case lower bounds vacuous, still favors $\fullkl$ over $\empsel$ in the large-$|\cA|$ regime. Figure~\ref{fig:tabular_exp} (d) manifests that the $\sqrt{|\cS||\cA|}$ gap between the worst-case upper bounds for $\nll$ and $\empsel$, which is reversely dominated by the exponential rate\footnote{See Remark~\ref{rmk:expDecay} for details.} of $\empsel$ in this Instance~\hyperref[instance:0]{0}, may not be observed in general even for large $|\cS||\cA|$ and small $n$.

\begin{remark}\label{rmk:expDecay}
    Direct calculations imply that if $\rho \geq c_\cS > 0$ for all inputs with $c_\cS$ irrespective of $n$ when $n$ is large enough, $\EE\condtv{\fullkl}{\pi^\star}{\rho}$ can decay exponentially fast, exemplified by Figure~\ref{fig:tabular_exp} (a). $\EE\condtv{\empsel}{\pi^\star}{\rho}$ will enjoy a similar linear convergence so long as we additionally require $\pi^\star(\cdot|s) \geq c_\cA > 0$ for all $s\in\cS$ and all labels with $c_\cA$ independent of $n$ for sufficiently large $n$, again exemplified by Figure~\ref{fig:tabular_exp} (a).
\end{remark}

\subsection{Non-Asymptotic Regime: Illustration of Matching Rates}

Instance~\hyperref[instance:0]{0} serves as an intriguing average case, but we need to design worst-case instances that may vary with $n$ \citep{wainwright2019high} in the non-asymptotic regime for different data acquisition settings in order to verify the minimax optimalities. The adversarial Instance~\hyperref[instance:1]{1},~\hyperref[instance:2]{2}, and~\hyperref[instance:3]{3} with their design insights are detailed in order in Appendix~\ref{appendix:ExperimentsSepc:subsec_instances} for verification of matching rates at all three levels.

Since $\nll$, $\empsel$, and $\fullkl$ enjoy optimal rates of order $n^{-1}$ or $n^{-0.5}$, we can manifest them using lines in a log-log plot. More generally, if some notion of $\texttt{risk}$ has $\texttt{risk} = \Theta(n^{\beta^\star})$ for some $\beta^\star < 0$, $\log\texttt{risk} - \beta^\star\log n$ will be at least bounded by two straight lines on a log-log scale. We instantiate the above idea for Instance~\hyperref[instance:1]{1}, Instance~\hyperref[instance:2]{2}, and Instance~\hyperref[instance:3]{3} in Figure~\ref{fig:adv_all_risks}, in which each marker represents the average of 64000 independent repeats. We also conduct linear regressions over $\log\texttt{risk}\sim\log n$ for corresponding minimax learners and report the slope $\hat\beta$ as estimated $\beta^\star$ in each subfigure of Figure~\ref{fig:adv_all_risks}.

\begin{figure}[htbp]
    \centering
    \includegraphics[width=\textwidth]{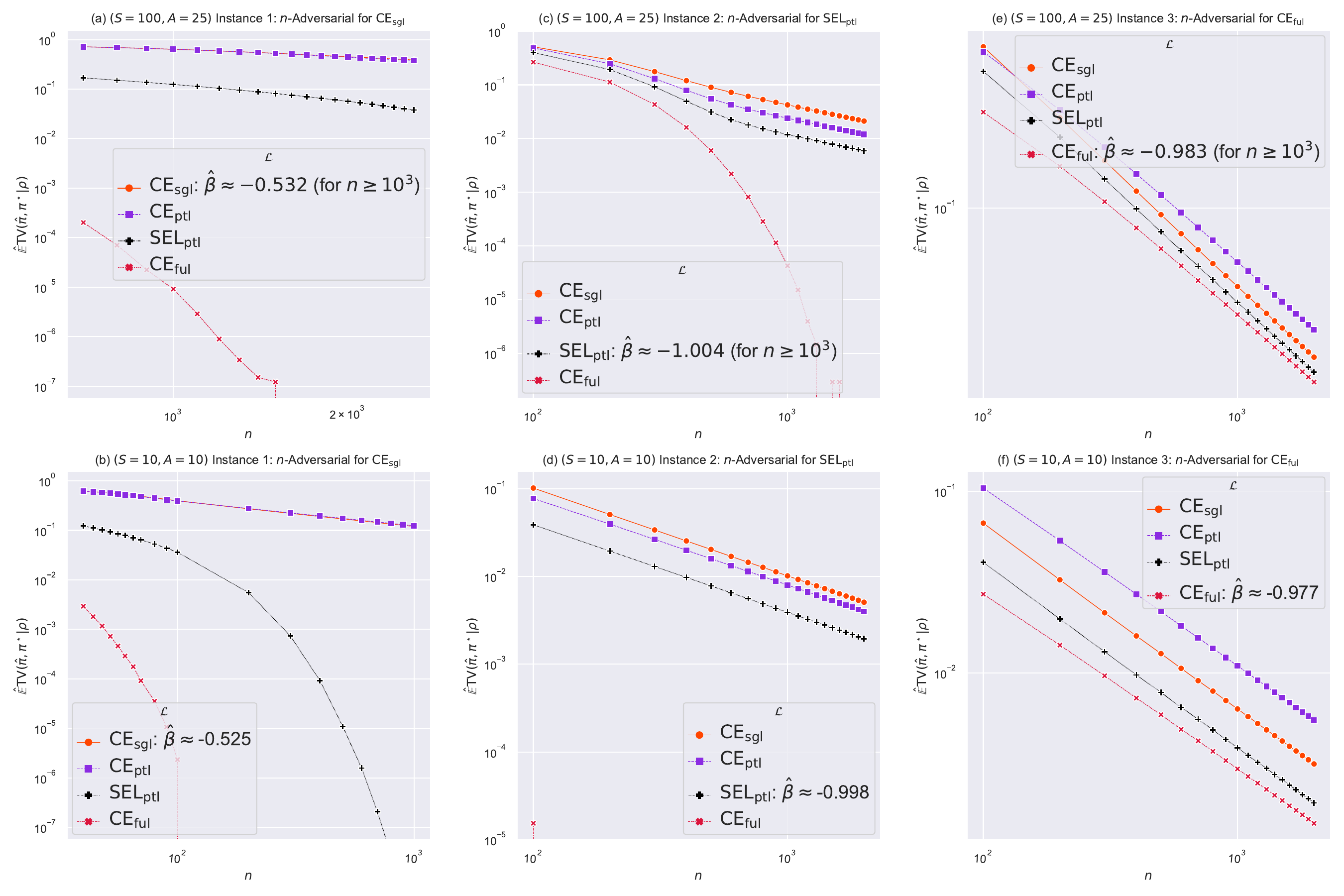}
    \caption{Estimated risks in expectation. $\rho\times\pi^\star$ scales with $n$ in setting-specific ways.}
    \label{fig:adv_all_risks}
\end{figure}

We elaborate on several interesting phenomena of Figure~\ref{fig:adv_all_risks}.

\begin{itemize}
    \item Regarding parameters other than $n$ as constants, the $\hat\beta$'s precisely verify the worst-case rates $\EE\condtv{\nll}{\pi^\star}{\rho}\asymp n^{-0.5}$ (Figure~\ref{fig:adv_all_risks} (a, b)), $\EE\condtv{\empsel}{\pi^\star}{\rho}\asymp n^{-1}$ (Figure~\ref{fig:adv_all_risks} (c, d)), and $\EE\condtv{\fullkl}{\pi^\star}{\rho}\asymp n^{-1}$ (Figure~\ref{fig:adv_all_risks} (e, f)).
    \item All Instance~\hyperref[instance:1]{1}, \hyperref[instance:2]{2}, and~\hyperref[instance:3]{3} happen to be too easy for $\empce$ to be obviously biased. This phenomenon is actually predictable because $\pi^\star(\cdot|s)$ becomes very close to $\mathsf{Uniform}(\cA)$ in Instance~\hyperref[instance:1]{1}, and to some one in $\mathsf{Dirac}(\cA)$ in both Instance~\hyperref[instance:2]{2} or~\hyperref[instance:3]{3} for all inputs when $n$ is relatively large. In these cases, $\empce$ largely coincides with $\nll$ as predicted by Theorem~\ref{thm:empce_bias}.
    \item The risks of $\empsel$ in Instance~\hyperref[instance:1]{1} (Figure~\ref{fig:adv_all_risks} (a, b)) and the risks of $\fullkl$ in Instance~\hyperref[instance:1]{1} (Figure~\ref{fig:adv_all_risks} (a, b)) and Instance~\hyperref[instance:2]{2} (Figure~\ref{fig:adv_all_risks} (c, d)) decay exponentially fast, which is consistent with the arguemnts in Remark~\ref{rmk:expDecay}, i.e., only vanishing schemes like $\pi^\star$ in Instance~\hyperref[instance:2]{2} for $\empsel$ and $\rho$ in Instance~\hyperref[instance:3]{3} for $\fullkl$ can force their risks to have a polynomial decay.
    \item We are able to provably explain the good performance (beyond worst cases) of $\nll$ in Instance~\hyperref[instance:2]{2} (Figure~\ref{fig:adv_all_risks} (c, d)) and Instance~\hyperref[instance:3]{3} (Figure~\ref{fig:adv_all_risks} (e, f)), in which $\xi(\pi^\star) \lesssim n^{-1}$. ({Recall the definition of $\xi(\cdot)$ in Theorem~\ref{thm:mle}.}) Thus, the instance-dependent bound in Theorem~\ref{thm:mle} indicates a benign-case rate of $\EE\condtv{\nll}{\pi^\star}{\rho} \lesssim n^{-1}$. 
\end{itemize}

\section{Conclusion}

We embark on investigating knowledge transfer beyond pure black- or white-box regimes and settle its sample complexity, respectively; provided that the teacher can afford (1) only to act as a generative model, or (2) additionally the probabilities at sampled classes, or (3) additionally the logits conditioned on each sampled input. The theoretical analysis unveils a crucial insight that tailoring the idea of minimizing $\mathsf{CE}$ to new information acquisition scenarios may be sub-optimal in general, provably in knowledge transfer via \datatwo{}. Several avenues remain to be further explored.

\begin{itemize}
    \item Huge $\cS$ and $\cA$ in practice \citep{zeng2022glm,falcon} necessitate function approximation, which is also important to our analysis itself.\footnote{To inspire future works on the analysis of knowledge transfer with function approximation, we discuss a potential pathway and related issues in Appendix~\ref{sec:funcApprox}.} For example, the equivalent effectiveness of minimizing the $\mathsf{CE}$ or $\mathsf{SEL}$ loss (with \datathree{}) from an alphabet-matching perspective may be incomplete after incorporating the \emph{approximation error}, which may consequently corroborate the empirical superior of the vanilla $\mathsf{SEL}$ loss \citep{ba2014deep}.
    \item All our upper bounds do not adapt to the student initialization strategy for unseen inputs or labels; while the student may be pre-trained in practice \citep{gu2023knowledge, jiang2023lion}. Thus it is vital to incorporate the skillfulness of the student before transfer to the convergence analysis.
\end{itemize}

\clearpage

\bibliographystyle{ims}
\bibliography{reference}

\clearpage

\appendix

\section{Additional Related Works}\label{sec:appendix:additionalRW}

We review key paradigms closely related to our formulation, especially KD; and point the reader to \citet{weiss2016survey, gou2021knowledge, alyafeai2020survey, zhu2023transfer,wang2021knowledge, tian2023knowledge} for numerous algorithms of knowledge transfer on modalities like sequences, images, graphs, etc. We restrict our survey to the single task of interest: classification.
\footnote{Topics related to knowledge transfer across different tasks or modalities are beyond the scope.}

\subsection{Inspiring Paradigms of Knowledge Transfer in Machine Learning}

\paragraph{Paradigms of Knowledge Transfer.}
Knowledge transfer is not a patent of neural nets with \texttt{softmax} as their last layers. Similar ideas have realizations for ensemble of classification trees \citep{breiman1996born} and even margin-based classifiers \citep{burges1996improving}. Since 2010s, there has been a line of work concentrating on 
the utilization of a trained teacher model and its original training set in a totally \emph{white-box} manner with the purpose of student accuracy improvement \citep{hinton2015distilling, furlanello2018born, cho2019efficacy, zhao2022decoupled, romero2014fitnets, yim2017gift, huang2017like, park2019relational, tian2019contrastive, tung2019similarity, qiu2022better}. As the computation budget becomes relatively tight with respect to the scale of datasets, another line of works propose to avoid using of the full dataset for teacher pre-training during KD, and resort to architecture-specific metadata \citep{lopes2017data}, synthetic data \citep{nayak2019zero, yin2020dreaming, fang2022up}, or bootstrapping \citep{gu2023boot} instead; which is dubbed the \emph{data-free} approach. The sagacious vision that the teacher architecture may be agnostic or the teacher ($\log$-)probability output may at least go through certain censorship does not receive enough attention from the community in the era of open sourcing \citep{orekondy2019knockoff, wang2020neural, wang2021zero, nguyen2022black}. However, after the debut of closed-source and game-changing foundation models, practitioners find it plausibly nice to train their own models to purely mimic the response of these strong teachers. For example, though OpenAI only exposes transparent APIs of ChatGPT \& GPT-4 \citep{openai2023gpt4} to customers, there has been a line of efforts towards distilling black-box language models without even accessing the last-layer logits \citep{zheng2023judging,wang2023far,openchat}. Some primary results \citep{openchat} show that only letting LLaMA \citep{touvron2023llama} mimic about 6000 carefully chosen trajectories generated by human-GPT-4 interactions can drastically boost the performance of these open-source autoregressive language models on common evaluation benchmarks \citep{eval-harness,alpaca_eval}.

\subsection{Our Formulation versus Previous Paradigms}\label{subsec:connections}

Motivated by the progress trend of knowledge transfer, we decouple the formulation of the input distribution $\rho$ from teacher training details and view the reference poicy $\pi^\star$ as the gold standard (ground truth), beyond just a proxy of it. In the following two paragraphs, we would like to also emphasize the distinctions between our formulations and 1) imitation learning (whose performance is measured by reward sub-optimality), or 2) the LUPI framework \citep{vapnik2015learning, lopez2015unifying}, respectively.

\emph{Optimality} is not explicitly defined for $\pi^\star$ in our formulation, yet we solely want to mimic (the conditional density of) the teacher, so $\pi^\star$ is dubbed the \emph{reference policy}. This view aligns with a recent belief that foundation models are by definition ``good'' and in effect black-box teachers, judges, and raters of the student ones \citep{peng2023instruction,liu2023visual, zheng2023judging}. Generally speaking, optimal reward maximization is neither sufficient nor necessary for efficient knowledge transfer ($\Leftrightarrow$ accurate behavioral imitation), because an optimal policy can differ from teacher demonstrations \citep{ng2000algorithms}, which may even be sub-optimal itself \citep{brown2019extrapolating,chen2021learning}, and pure behavioral imitation can result in constant sub-optimality \citep{gao2023scaling, zhu2023principled}.

\begin{remark}
    This work departs from \citet[Section~4.1]{lopez2015unifying} essentially in several ways. First, our results hold for any legal data generating distribution $\rho\times\pi^\star$ and do not need to assume data$\to$teacher, data$\to$student, teacher$\to$student transfer rates manually. Second, their result crucially hinges on the assumption that the teacher learns from data faster than the student, while we have clarified our formulation of $\pi^\star$ that differs completely. Finally, their transfer speed from the teacher to the student relies on the hardness of classification of each data point, e.g., the notion of \emph{linear separability} \citep{shalev2014understanding}, yet we consider the product space of finite domains $\cS\times\cA$, which does not have these issues.
\end{remark}

\section{Missing Notation, Definitions, and Derivations}
\label{sec:appendix:missing}

\paragraph{Additional Notation in Appendix.} For any event $\cI$, $\cI^c$ denotes its complementary event. $[K] \coloneqq \{1,\dots,K\}$ and $\overline{[K]} \coloneqq \{0,\dots, K-1\}$ for any positive integer $K$. We set $(S, A) = (|\cS|, |\cA|)$ and index the input and label spaces by $(\cS, \cA) = (\overline{[S]}, [A])$ when necessary.  We also use a general notion of Dirac distribution in proving the lower bounds: given any measurable space $(\cZ, \Sigma)$, we define $\mathsf{Dirac}(\cZ, z)(D) \coloneqq \ind\{z\in D\}, \forall D\in\Sigma$. We denote by $\log$ the natural logarithm and adopt the convention $0\log 0 = 0$.

\begin{definition}\label{def:missingMass}
    For $n$ \emph{i.i.d.} samples $X^n$ drawn from a distribution $\nu$ over an alphabet $\cX$, the number of occurrences of $x$ is denoted by $\nm{x}{X^n} \coloneqq \sum_{i=1}^n\ind\{ X_i = x \}$, upon which we further measure the portion of $\cX$ never observed in $X^n$ by the missing mass
    \begin{equation}\label{eq:MissingMassNotation}
        \mm{\nu}{X^n} \coloneqq \sum_{x\in\cS}\nu(x)\ind\{\nm{x}{X^n} = 0\}.
    \end{equation}
\end{definition}

It is worth mentioning that, $\cD$, $\cS(\cD)$, $\cA(\cD)$, and $\cA(\cD, s), \forall s\in\cS$ are treated as \emph{multisets} when fed into functionals like $\mm{\nu}{\cdot}$, $\nm{x}{\cdot}$, or the cardinality operator $|\cdot|$, for example, $|\cA(\cD, s)| = \nm{s}{\cS(\cD)}$; while in set operations like $\cS\backslash\cS(\cD)$ or under the summation sign like $\sum_{s\in\cS(\cD)}$, where they act as ranges of enumeration, we slightly abuse the notations for simplicity to refer to their \emph{deduplicated} counterparts.

\begin{definition}\label{def:associatedActions}
All the $\nm{s}{\cS(\cD)}$ labels in $\cD$ mapped from $s_i = s$ form a multiset
$$
\cA(\cD, s) \coloneqq \left\{a\in\cA: \text{for all } (x, a)\in\cD \text{ s.t. } x = s\right\}.
$$    
\end{definition}

\subsection{Maximum Likelihood Estimation}\label{subsec:appendix:nllSolDerive}

 \begin{align}\label{eq:mleLoss}
    \nll &\in \argmin_{\pi\in\Delta(\cA|\cS)} \lossCEsingle(\cD) \notag\\
    &= \argmin_{\pi\in\Delta(\cA|\cS)} \sum_{i=1}^n \lossCEsingle(s_i, a_i; \pi) = \argmin_{\pi\in\Delta(\cA|\cS)} -\sum_{s\in\cS}\sum_{a\in\cA} \nm{(s,a)}{\cD}\log(a|s) \notag \\
    &= \argmin_{\pi\in\Delta(\cA|\cS)} - \sum_{s\in\cS(\cD)}\nm{s}{\cS(\cD)}\underbrace{\sum_{a\in\cA}\frac{\nm{(s,a)}{\cD}}{\nm{s}{\cS(\cD)}}\log \pi(a | s)}_{-\lossCEfull\left( \nm{(s,\cdot)}{\cD} / \nm{s}{\cS(\cD)}\Vert \pi(\cdot| s) \right)}.
\end{align}
Noticing that \eqref{eq:mleLoss} is the summation of the cross-entropy between $\nicefrac{\nm{(s,\cdot)}{\cD}}{\nm{s}{\cS(\cD)}}$ and $\pi(\cdot| s)$ weighted by $\nm{s}{\cS(\cD)}$  over $\cS(\cD)$, we figure out the explicit solution of $\nll$ as
\begin{equation*}
    \nll(a | s) \begin{cases}
        = \nm{(s,a)}{\cD} / \nm{s}{\cS(\cD)}, &s \in \cS(\cD),\\
        \in\Delta(\cA)\text{ arbitrarily}, &\text{otherwise.}
    \end{cases}
\end{equation*}

\subsection{Empirical Cross-Entropy Loss}\label{subsec:appendix:empceSolDerive}

\begin{align*}
    \empce \in \argmin_{\pi\in\Delta(\cA|\cS)} \lossEMPCE(\cD, \cR) = \argmin_{\pi\in\Delta(\cA|\cS)} - \sum_{s\in\cS(\cD)}Z_s \sum_{a\in\cA} \frac{\nm{(s, a)}{\cD}\pi^\star(a|s)}{Z_s}\log \pi(a|s),
\end{align*}
where $Z_s \coloneqq \sum_{a\in\cA} \nm{(s, a)}{\cD}\pi^\star(a|s)$. Therefore, by the same cross-entropy minimization argument, the explicit solution is
\begin{equation*}
    \empce(a | s) \begin{cases}
        =\left.
        \begin{cases}
         \nm{(s, a)}{\cD}\pi^\star(a|s) / Z_s, &(s,a) \in \cD,\\
         0, & s\in\cS(\cD), a\notin\cA(\cD, s),
    \end{cases}\right\rbrace &\propto \nm{(s, a)}{\cD}\pi^\star(a|s);\\
        \in\Delta(\cA)\text{ arbitrarily}, &s\notin\cS(\cD).
    \end{cases}
\end{equation*}

\section{Information-Theoretic Arguments}\label{sec:appendix:lbs}

\subsection{Additional Notation in Appendix~\ref{sec:appendix:lbs}}

We write $\condkl{\pi}{\pi^\prime}{\lambda} \coloneqq \EE_{x\sim\lambda}\left[\kl{\pi(\cdot|x)}{\pi^\prime(\cdot|x)}\right]$ for $\lambda\in\Delta(\cS)$ and $\pi,\pi^\prime\in\Delta(\cA|\cS)$ likewise for alphabets $\cS,\cA$ to have notations concise. The values of $\triangle$ in the proofs of Theorem~\ref{thm:lb1}, Theorem~\ref{thm:lb2}, and Theorem~\ref{thm:lb3} are different under the same notation. A similar logic applies to the values of $\pi_\tau$ in the proofs of Theorem~\ref{thm:lb1} and Theorem~\ref{thm:lb2}.

\subsection{Proof of Theorem~\ref{thm:lb1}}\label{sec:appendix:lb1proof}

\begin{proof}
    We fix $\rho$ to be $\mathsf{Uniform}(\cS)$ and define a loss function over $\Delta(\cA|\cS)\times\Delta(\cA|\cS)$ as
    \begin{equation}\label{eq:lbLoss}
        l(\pi, \pi^\prime) \coloneqq \condtv{\pi}{\pi^\prime}{\rho} = \frac{1}{S}\sum_{s=0}^{S-1}\tv{\pi(\cdot|s)}{\pi^\prime(\cdot|s)}.
    \end{equation}
    Obviously, $\left(\Delta(\cA|\cS), l\right)$ becomes a metric space. Without loss of generality, suppose $A$ is even, we decompose $l$ as
    \begin{align}
       d_{sA/2+j}(\pi,\pi^\prime) &\coloneqq l_{s, j}(\pi,\pi^\prime) \coloneqq \frac{|\pi(2j - 1|s) - \pi^\prime(2j - 1|s)| + |\pi(2j|s) - \pi^\prime(2j|s)|}{2S}, \\
        l(\pi,\pi^\prime) &= \sum_{s=0}^{S - 1}\sum_{j=1}^{A/2}l_{s,j}(\pi,\pi^\prime) = \sum_{i=1}^{SA/2}d_i(\pi,\pi^\prime).\label{eq:lbLossDecomp}
    \end{align}
    Inspired by Paninski's construction \citep{paninski2008coincidence}, we define $\pi_\tau$ as
    \begin{equation}\label{eq:lbConstruction}
        \pi_\tau(2j - 1|s) = \frac{1 + \tau_{sA/2+j}\triangle}{A}, \pi_\tau(2j|s) = \frac{1 - \tau_{sA/2+j}\triangle}{A}, \forall (s,j)\in\overline{[S]}\times\left[\frac{A}{2}\right];
    \end{equation}
    where $\tau\in\{-1, +1\}^{AS/2}$ and $\triangle$ is to be specified later. For any $\tau\sim_{i}\tau^\prime$, i.e., any pair in $\{-1, +1\}^{AS/2}$ that differs only in the $i$-th coordinate, the construction of $\pi_\tau$ leads to
    \begin{equation}\label{eq:lbSeparation}
        d_i(\pi_\tau, \pi_{\tau^\prime}) = \frac{2\triangle}{SA}.
    \end{equation}
    We thereby refer $\tau\sim\tau^\prime$ to any pair in $\{-1, +1\}^{AS/2}$ that differs only in one coordinate and obtain
    \begin{equation}
    \begin{aligned}[b]
        &\quad\text{LHS of \eqref{eq:lbSentence}} \geq \inf_{\hat\pi \in \hat\Pi(\cD)} \sup_{\substack{\tau \in\{-1, +1\}^{AS/2}\\ \rho=\mathsf{Uniform}(\cS)}} \EE_{(\rho\times \pi_\tau)^n}l(\hat\pi,\pi_\tau)\\
        &\geq \frac{AS}{2}\cdot\frac{2\triangle/(SA)}{2} \min_{\tau\sim\tau^\prime}\left(1  - \tv{(\rho\times\pi_\tau)^n}{(\rho\times\pi_{\tau^\prime})^n} \right) \\
        &\geq \frac{\triangle}{4} \min_{\tau\sim\tau^\prime} \exp\left( -\kl{(\rho\times\pi_\tau)^n}{(\rho\times\pi_{\tau^\prime})^n} \right) \\
        &= \frac{\triangle}{4} \min_{\tau\sim\tau^\prime} \exp\left( -n\kl{\rho\times\pi_\tau}{\rho\times\pi_{\tau^\prime}} \right) = \frac{\triangle}{4} \min_{\tau\sim\tau^\prime} \exp\left( -n\condkl{\pi_\tau}{\pi_{\tau^\prime}}{\rho} \right) \\
        &= \frac{\triangle}{4} \exp\left( -\frac{n}{SA}\cdot2\triangle\log\frac{1+\triangle}{1-\triangle} \right)\\
        &\geq \frac{\triangle}{4} \exp\left(-8\frac{n}{SA}\triangle^2\right),\label{eq:lbMidRes}
    \end{aligned}        
    \end{equation}
    where
    \begin{itemize}
        \item the second inequality is by Assouad's lemma \citep[Lemma~2]{yu1997assouad},
        \item the third inequality holds due to a variant (Lemma~\ref{lem:BretagnolleHuberIneq}) of the Bretagnolle–Huber inequality,
        \item the first equality holds due to the decomposable property of $\mathsf{KL}$ \citep[Section~2.4]{tsybakov2009nonpara},
        \item the second equality follows from a basic property of $f$-divergence \citep[Proposition~7.2.4]{polyanskiy2022information},
        \item the last equality is by the definition of $\pi_\tau$ in \eqref{eq:lbConstruction} and $\rho = \mathsf{Uniform}(\cS)$,
        \item the last inequality derives from $\log(1 + x) \leq x, x > 0$ and an additional constraint $\triangle \leq 0.5$ we impose.
    \end{itemize}
    The assignment $\triangle = 0.25\sqrt{\nicefrac{SA}{n}}$ with $n \geq \nicefrac{SA}{4}$ is a feasible choice for inequality \eqref{eq:lbMidRes} to hold, whose RHS further equals to
    $$
    \frac{\exp(-0.5)}{16}\sqrt{\frac{SA}{n}}.
    $$
\end{proof}

\subsection{Proof of Theorem~\ref{thm:lb2}}\label{sec:appendix:lb2proof}

\begin{proof}
    Without loss of generality, we assume $A > 2$ is odd, then for a fixed $\triangle \coloneqq 1$, which does NOT vary with $S$, $A$, or $n$ in THIS proof, we define $\Pi \coloneqq \left\{ \pi_\tau: \tau\in\{-1, +1\}^{\nicefrac{AS}{2}} \right\}$, where
    \begin{equation}
    \left.\begin{aligned}
        \pi_\tau(2j - 1|s) &\coloneqq \frac{S(1 + \tau_{sA/2+j}\triangle)}{2(n+1)},\\ \pi_\tau(2j|s) &\coloneqq \frac{S(1 - \tau_{sA/2+j}\triangle)}{2(n+1)};\\
        \pi_\tau(A|s) &\coloneqq 1 - \frac{S}{2}\cdot\frac{A-1}{n+1}.
    \end{aligned}\right\rbrace (s,j)\in\overline{[S]}\times\left[\frac{A-1}{2}\right].
    \end{equation}
    To get a lower bound on a Bayes risk, we design a prior over $\cP$ as
    \begin{equation}\label{eq:lb2:Prior}
        \Lambda \coloneqq \mathsf{Dirac}\left(\Delta(\cS), \mathsf{Uniform}(\cS)\right)\times \Gamma,
    \end{equation}
    where
    $$
    \Gamma \coloneqq \mathsf{Uniform}(\Pi).
    $$
    Intuitively speaking, for any $\rho\times\pi$ sampled from $\Lambda$, $\rho$ must be uniform over $\cS$ and $\pi$ must be some $\pi_\tau$ with $\tau$ uniformly distributed over $\{-1, +1\}^{SA/2}$.\footnote{To be technically rigorous, we say the prior for $\rho$ over $\Delta(\cS)$ and the prior for $\pi^\star$ over $\Delta(\cA|\cS)$ are assigned \emph{independently}, similar assignment also appears in the proof of Theorem~\ref{thm:lb3}.} Therefore, if we let $\Lambda(\cD, \cR)$ be the corresponding posterior over $\cP$ conditioned on $(\cD, \cR)$ and $\Gamma(\cD, \cR)$ be the marginal posterior over $\Delta(\cA|\cS)$, we can by the definition of $\Gamma$ and $\pi_\tau$ obtain $\Lambda(\cD, \cR) = \mathsf{Dirac}\left(\Delta(\cS), \mathsf{Uniform}(\cS)\right)\times \Gamma(\cD, \cR)$ where for any $(s, a)\in\{0,\dots,S-1\}\times\{1, \dots, A-1\}$ and $\pi\sim \Gamma(\cD, \cR)$, by the Bayes rule,
    \begin{equation}\label{eq:lb2:post}
    \begin{cases}
        &\Gamma(\cD, \cR)\left[ \pi(a|s) = \pi^\star(a|s) \right] = 1, (s, a)\in\cD\text{ or } (s, \text{Buddy}(a))\in\cD; \\
        &\Gamma(\cD, \cR)\left[ \pi(a|s) = \frac{S(1 + \triangle)}{2(n+1)} \right] = \Gamma(\cD, \cR)\left[ \pi(a|s) = \frac{S(1 - \triangle)}{2(n+1)} \right] = \frac{1}{2}, \text{otherwise};
    \end{cases}
    \end{equation}
    where (recall that $A > 2$ is assumed to be odd) we define the ``Buddy'' for $a\in[A-1] = \{1, \dots, A-1\}$ as
    \begin{equation}
        \text{Buddy}(a) \coloneqq \begin{cases}
            a - 1, &a\text{ is even};\\
            a + 1, &a\text{ is odd}.
        \end{cases}
    \end{equation}
    The intuition behind \eqref{eq:lb2:post} is that if $\pi\sim\Gamma(\cD, \cR)$, the marginal posterior of $\pi(a|s)$ for any seen $(s, a)$ in $\cD$ must be a Dirac concentrated at $\pi^\star(a|s)$ and by the design of $\Pi = \{\pi_\tau\}$, $\pi(a|s)$ is also determined if $(s, \text{Buddy}(a)) \in\cD$.\footnote{Rigorously speaking, $\pi^\star(a|s)$ in \eqref{eq:lb2:post} refers to some concrete realization to some $\pi_\tau(a|s)$, which is collected by $\cR$. It is a slight abuse of notation and a similar one appears in \eqref{eq:lb3:post} in the proof of Theorem~\ref{thm:lb3}, too.} Note that the last label, $A$, is designed to be ignored, which we do not consider in both \eqref{eq:lb2:post} and the following argument driven by Fubini's theorem. We define a event $\cE_\cD(s, a)$ for every $(s, a)$ with $a < A$ as
    $$
    \cE_\cD(s, a) = (s, a)\in\cD\text{ or } (s, \text{Buddy}(a))\in\cD.
    $$
    Next, we can apply Fubini's theorem to the Bayes risk with $\Lambda$ as its prior.
    {\allowdisplaybreaks
    \begin{align}
        &\quad\EE_{ \rho\times\pi^\star \sim \Lambda }\left[ \EE_{(\rho\times \pi^\star)^n}\condtv{\hat\pi}{\pi^\star}{\rho} \right] \notag\\
        &= \frac{1}{S}\sum_{s\in\cS}\EE_{\pi^\star \sim \Gamma}\EE_{\cD\sim (\rho\times\pi^\star)^n}\EE_{\pi\sim\Gamma(\cD, \cQ)}\tv{\hat\pi(\cdot|s)}{\pi(\cdot|s)} \notag\\
        &\geq \frac{1}{2S}\sum_{s\in\cS}\EE_{\pi^\star \sim \Gamma}\EE_{\cD\sim (\rho\times\pi^\star)^n}\sum_{a<A}\EE_{\pi\sim\Gamma(\cD, \cQ)}\abs{ \hat\pi(a|s) - \pi(a|s) } \notag\\
        &= \frac{1}{2S}\sum_{s\in\cS}\EE_{\pi^\star \sim \Gamma}\EE_{\cD\sim (\rho\times\pi^\star)^n}\sum_{a<A}\left\{\right. \notag\\
        &\quad \EE_{\pi\sim\Gamma(\cD, \cQ)}\left[\abs{ \hat\pi(a|s) - \pi(a|s) } |\cE_\cD(s, a) \right]\EE[\ind\{\cE_\cD(s, a)\} | \cD] \notag\\
        &+ \EE_{\pi\sim\Gamma(\cD, \cQ)}\left[\abs{ \hat\pi(a|s) - \pi(a|s) } | \cE^c_\cD(s, a) \right]\EE[\ind\{\cE^c_\cD(s, a)\} | \cD] \notag\\
        &\left.\right\} \notag\\
        &\geq \frac{1}{2S}\sum_{s\in\cS}\EE_{\pi^\star \sim \Gamma}\EE_{\cD\sim (\rho\times\pi^\star)^n}\sum_{a<A} \EE_{\pi\sim\Gamma(\cD, \cQ)}\left[\abs{ \hat\pi(a|s) - \pi(a|s) } | \cE^c_\cD(s, a) \right]\EE[\ind\{\cE^c_\cD(s, a)\} | \cD] \notag\\
        &= \frac{1}{2S}\sum_{s\in\cS}\EE_{\pi^\star \sim \Gamma}\EE_{\cD\sim (\rho\times\pi^\star)^n}\sum_{a<A}\EE[\ind\{\cE^c_\cD(s, a)\} | \cD]\cdot \left\{\right. \notag\\
        &\quad \frac{1}{2} \abs{ \hat\pi(a|s) - \frac{S(1 + \triangle)}{2(n+1)} } + \frac{1}{2} \abs{ \hat\pi(a|s) - \frac{S(1 - \triangle)}{2(n+1)} } \notag\\
        \left.\right\} \notag\\
        &\geq \frac{1}{2S}\sum_{s\in\cS}\EE_{\pi^\star \sim \Gamma}\EE_{\cD\sim (\rho\times\pi^\star)^n} \sum_{a < A} \EE[\ind\{\cE^c_\cD(s, a)\} | \cD] \frac{1}{2}\cdot \frac{S}{n+1} \label{eq:lb3:keyReduction:TriangleEquals1}\\
        &= \frac{1}{4(n+1)}\sum_{s, a:a<A} \PP\left\{(s, a)\notin\cD\text{ and } (s, \text{Buddy}(a))\notin\cD\right\} \label{eq:lb3:keyReduction:priorIndepenentProb} \\
        &= \frac{1}{4(n+1)}\sum_{s, a:a<A} \left( 1 - \rho(s)\cdot \frac{S}{n+1} \right)^n = \frac{1}{4(n+1)}\sum_{s, a:a<A} \left( 1 - \frac{1}{S}\cdot \frac{S}{n+1} \right)^n \notag\\
        &= \frac{1}{4(n+1)}\sum_{s, a:a<A} \left( 1 - \frac{1}{n+1} \right)^n \notag\\
        &\geq \frac{S(A-1)}{4e(n+1)} \gtrsim \frac{SA}{n},
    \end{align} }
    where the inequality in \eqref{eq:lb3:keyReduction:TriangleEquals1} holds because of the triangle inequality and $\triangle = 1$ by design, and the equality in \eqref{eq:lb3:keyReduction:priorIndepenentProb} holds because for any possible value of $\pi^\star$ a priori, i.e., any $\pi_\tau$, $\PP\left\{\cE^c_\cD(s, a)\right\}$ is always $\left(1 - \nicefrac{1}{(n+1)}\right)^n$ by the design of $\Pi = \{\pi_\tau\}$; the penultimate and the last equality hold due to the fact that the distribution of $\rho$ is always $\mathsf{Dirac}\left(\Delta(\cS), \mathsf{Uniform}(\cS)\right)$ no matter whether a priori or a posteriori (conditioned on $(\cD, \cR)$). Since the minimax risk is bounded from below by the worst-case Bayes risk \citep[Theorem~28.1]{polyanskiy2022information}, the proof is completed.
\end{proof}

\subsection{Proof of Theorem~\ref{thm:lb3}}\label{sec:appendix:lb3proof}

\begin{proof}
 We assign $\xi = \nicefrac{1}{(n+1)}$ and design a $p\in\Delta(\cS)$ as $p(0) = 1 - \nicefrac{S - 1}{(n+1)}$ and $p = \nicefrac{1}{(n+1)}$ for all other inputs following \citet[Figure~1 (b)]{rajaraman2020toward}. Then it suffices to get a lower bound for a Bayes risk given a prior over $\cP$, which we design as
    \begin{equation}\label{eq:lb3:Prior}
        \Lambda_3 \coloneqq \mathsf{Dirac}\left(\Delta(\cS), p\right)\times\Gamma_3,
    \end{equation}
    where 
    $$
    \Gamma_3 \coloneqq \mathsf{Uniform}(\Pi_\text{det}), \Pi_\text{det} \coloneqq \left\{\pi\in\Delta(\cA|\cS): \pi(\cdot|s)\in\mathsf{Dirac}\left(\cA\right) ,\forall s\in\cS\right\}.
    $$
    Intuitively speaking, $\pi^\star$ is uniformly distributed over all deterministic policies, which indicates the marginal prior distribution of $\pi^\star(\cdot|s)$ for any $s\in\cS$ is
    $$
    \Gamma_3\left[\pi(\cdot|s) = \mathsf{Dirac}(\cA, a)\right] = \frac{1}{A}, \forall a\in\cA.
    $$
    We abbreviate the corresponding posterior of $\Lambda_3$ (resp. $\Gamma_3$) conditioned on $(\cD, \cQ)$ as $\Lambda_3(\cD, \cQ)$ (resp. $\Gamma_3(\cD, \cQ)$), which by definition implies $\Lambda_3(\cD, \cQ) = \mathsf{Dirac}(\Delta(\cS), p)\times\Gamma_3(\cD, \cQ)$ and by the Bayes formula implies that for any $s\in\cS$ and $\pi\sim \Gamma_3(\cD, \cQ)$,
    \begin{equation}\label{eq:lb3:post}
        \begin{cases}
        \Gamma_3(\cD, \cQ)\left[ \pi(\cdot|s) = \pi^\star(\cdot|s) \right] &= 1, s\in\cS(\cD); \\
        \Gamma_3(\cD, \cQ)\left[ \pi(\cdot|s) = \mathsf{Dirac}(\cA, a) \right] &= 1/A, s\in\cS\backslash\cS(\cD).
    \end{cases}
    \end{equation}
    Without loss of generality, we assume $A > 1$ is even and then by Fubini's theorem, 
    {\allowdisplaybreaks
    \begin{align}
        &\quad\EE_{ \rho\times\pi^\star \sim \Lambda_3 }\left[ \EE_{(\rho\times \pi^\star)^n}\condtv{\hat\pi}{\pi^\star}{\rho} \right]  \notag\\
        &= \sum_{s\in\cS}\rho(s)\EE_{\pi^\star \sim \Gamma_3}\EE_{\cD\sim (\rho\times\pi^\star)^n}\EE_{\pi\sim\Gamma_3(\cD, \cQ)}\tv{\hat\pi(\cdot|s)}{\pi(\cdot|s)} \notag\\
        &= \sum_{s\in\cS}\rho(s)\EE_{\pi^\star \sim \Gamma_3}\EE_{\cD\sim (\rho\times\pi^\star)^n} \left\{\EE_{\pi\sim\Gamma_3(\cD, \cQ)}\left[ \tv{\hat\pi(\cdot|s)}{\pi(\cdot|s)} \big| s\in\cS(\cD)\right]\EE\left[\ind\left(s\in\cS(\cD)\right)|\cD\right] \right. \notag \\
        &+\left. \EE_{\pi\sim\Gamma_3(\cD, \cQ)}\left[ \tv{\hat\pi(\cdot|s)}{\pi(\cdot|s)} \big| s\notin\cS(\cD)\right]\EE\left[\ind\left(s\notin\cS(\cD)\right)|\cD\right] \right\} \notag\\
        &\geq \sum_{s\in\cS}\rho(s)\EE_{\pi^\star \sim \Gamma_3}\EE_{\cD\sim (\rho\times\pi^\star)^n}\left\{  \EE\left[\ind\left(s\notin\cS(\cD)\right)|\cD\right] \EE_{\pi\sim\Gamma_3(\cD, \cQ)}\left[ \tv{\hat\pi(\cdot|s)}{\pi(\cdot|s)} \big| s\notin\cS(\cD)\right] \right\} \notag\\
        &= \sum_{s\in\cS}\rho(s)\EE_{\pi^\star \sim \Gamma_3}\EE_{(\rho\times\pi^\star)^n}\left\{ \EE\left[\ind\left(s\notin\cS(\cD)\right)|\cD\right]\cdot  \frac{1}{A}\sum_{a\in\cA}\tv{\hat\pi(\cdot|s)}{\mathsf{Dirac}(\cA, a)} \right\}, \notag\\
        &= \sum_{s\in\cS}\rho(s)\EE_{\pi^\star \sim \Gamma_3}\EE_{(\rho\times\pi^\star)^n}\left\{ \right. \notag\\
        &\left. \EE\left[\ind\left(s\notin\cS(\cD)\right)|\cD\right] \cdot \frac{1}{A}\sum_{a=0}^{A/2 - 1}\left[\tv{\hat\pi(\cdot|s)}{\mathsf{Dirac}(\cA, a)} + \tv{\hat\pi(\cdot|s)}{\mathsf{Dirac}(\cA, a + A/2)} \right] \right. \notag\\
        &\left.\right\} \notag\\
        &\geq \sum_{s\in\cS}\rho(s)\PP\left(s\notin\cS(\cD)\right)\frac{1}{A}\sum_{a=0}^{A/2 - 1}\tv{\mathsf{Dirac}(\cA, a)}{\mathsf{Dirac}(\cA, a + A/2)} 
 \notag\\
        &= \sum_{s\in\cS}\rho(s)\PP\left(s\notin\cS(\cD)\right)\frac{1}{A}\cdot\frac{A}{2} = \frac{1}{2}\sum_{s\in\cS}\rho(s)\PP\left(s\notin\cS(\cD)\right),\label{eq:lb3:mid}
    \end{align} }
    where the last inequality holds due to the triangle inequality of $\mathsf{TV}$. Therefore,
    \begin{equation}
    \begin{aligned}[b]
        \text{LHS of \eqref{eq:lb3:mid}} &\geq 0.5\sum_{s\in\cS}\rho(s)\PP\left(s\notin\cS(\cD)\right) = 0.5\sum_{s\in\cS}\rho(s)\left(1 - \rho(s)\right)^n \\
        &\geq \frac{S - 1}{2(n + 1)}\left(1 - \frac{1}{n+1}\right)^n \geq \frac{S - 1}{2e(n+1)} \gtrsim \frac{S}{n},
    \end{aligned}
    \end{equation}
    where the second inequality is by only considering the $S - 1$ inputs with mass $\nicefrac{1}{(n+1)}$. Since the minimax risk is bounded from below by the worst-case Bayes risk, the proof is completed.
\end{proof}

\section{Arguments for Specific Learners}\label{sec:appendix:specificLearners}

\subsection{Additional Definitions in Appendix~\ref{sec:appendix:specificLearners}}

In this section we denote the MLE of $\rho$ by

\begin{equation}\label{eq:rhoMLE}
        \hat\rho(\cdot) \coloneqq \frac{\nm{(\cdot)}{\cS(\cD)}}{n}.
\end{equation}

The event $B_{s, i}$ defined as follows will be used in the proofs of Theorem~\ref{thm:mle} and Theorem~\ref{thm:empselRefined}. 

\begin{equation}\label{eq:eventB}
    B_{s, i} \coloneqq \left\{ \nm{s}{\cS(\cD)} = i \right\}, \forall (s, i)\in \cS\times\overline{[n+1]}.
\end{equation}

\subsection{Proof of Theorem~\ref{thm:mle}}\label{subsec:mleProofs}

\subsubsection{Proof of the High-Probability Bound}

\begin{proof}

 For $|\cS|> 1$, we define
    $$
    u_s \coloneqq \nm{s}{\cS(\cD)}\tv{\nll(\cdot| s)}{\pi^\star(\cdot| s)}, \forall s\in\cS.
    $$
    We decompose the LHS of \eqref{eq:mle:highProbBound} as
    \begin{equation}
        \begin{aligned}[b]\label{eq:mle_decomp}
            \text{LHS} &= \sum_{s\in\cS}\left(\frac{\nm{s}{\cS(\cD)}}{n} + \rho(s) - \hat\rho(s)  \right) \tv{\nll(\cdot| s)}{\pi^\star(\cdot| s)} \\
            &\leq \sum_{s\in\cS} \frac{u_s}{n} + \sum_{s\in\cS}\left| \rho(s) - \hat\rho(s) \right| \tv{\nll(\cdot| s)}{\pi^\star(\cdot| s)} \\
            &\leq \underbrace{ \frac{1}{n}\sum_{s\in\cS} u_s + 2\tv{\rho}{\hat\rho} }_{(i)},
        \end{aligned}
    \end{equation}
    where the first inequality is by triangle inequality and the second one holds due to the boundedness of $\mathsf{TV}$.
    We define another two types of events to bound $(i)$ in \eqref{eq:mle_decomp}:
    \begin{align}
        D_s &\coloneqq \left\{ u_s \leq \sqrt{ \frac{\nm{s}{\cS(\cD)}}{2}\left( |\cA|\log2 + \log\frac{|\cS|+1}{\delta} \right) } \right\}, \forall s\in\cS; \\
        E &\coloneqq \left\{ 2\tv{\rho}{\hat\rho} \leq \sqrt{ \frac{2}{n}\left(|\cS|\log2 + \log\frac{|\cS| + 1}{\delta} \right) } \right\}.
    \end{align}
        Notice that $\PP(D_s^c | B_{s, 0}) = 0$ by the definition of $B_{s,i}$ in \eqref{eq:eventB} and for any $i>0$, $\PP(D_s^c | B_{s, i}) \leq \nicefrac{\delta }{(|\cS| + 1)}, \forall s\in\cS $ by Lemma~\ref{lem:TVempiricalUpperBound}; thus by the law of total probability,
    \begin{equation}
        \PP(D_s) = \sum_{i=0}^n\PP(D_s | B_{s, i})\PP(B_{s, i}) \geq \left( 1 - \frac{\delta}{|\cS| + 1}\right)\sum_{i=0}^n\PP(B_{s, i}) = 1 - \frac{\delta}{|\cS| + 1}, \forall s\in\cS.
    \end{equation}
        Also noticing that $\PP(E^c) \leq \nicefrac{\delta }{ (|\cS| + 1)}$ by Lemma~\ref{lem:TVempiricalUpperBound}, we apply a union bound over $E^c$ and $\{D_s^c\}_{s\in\cS}$ for $(i)$ in \eqref{eq:mle_decomp} to conclude that with probability at least $1 - \delta$,
    \begin{equation}\label{eq:highPfull_1}
        (i) \leq \sqrt{\frac{|\cA|\log2 + \log((|\cS| + 1)/\delta)}{2}}\cdot \underbrace{ \frac{\sum_{s\in\cS}\sqrt{\nm{s}{\cS(\cD)}}}{n} }_{\heartsuit}  + \sqrt{ \frac{2}{n}\left(|\cS|\log2 + \log\frac{|\cS| + 1}{\delta} \right) }.
    \end{equation}
    By the Cauchy-Schwarz inequality,
    \begin{equation}\label{eq:sum_ns_1}
        \heartsuit\text{ in \eqref{eq:highPfull_1}} \leq \frac{1}{n}\sqrt{|\cS|\sum_{s\in\cS}\nm{s}{\cS(\cD)}} = \sqrt{\frac{|\cS|}{n}}.
    \end{equation}
    Substituting \eqref{eq:sum_ns_1} back to the RHS of \eqref{eq:highPfull_1} yields the conclusion. The case of $|\cS| = 1$ follows from Lemma~\ref{lem:TVempiricalUpperBound}.
\end{proof}

\subsubsection{Proof of the Worst-Case Upper Bound in Expectation}

\begin{proof}
    Taking expectation on both sides of \eqref{eq:mle_decomp} yields
    \begin{equation}\label{eq:mleExpBound_decomp}
    \begin{aligned}[b]
        &\EE\left[ \condtv{\nll}{\pi^\star}{\rho} \right] \leq {\frac{1}{n}\sum_{s\in\cS}\EE u_s} + \sqrt{\frac{|\cS|}{n}} \\
        =& {\frac{1}{n}\sum_{s\in\cS}\EE \left[\underbrace{\nm{s}{\cS(\cD)}\EE[\tv{\nll(\cdot|s)}{\pi^\star(\cdot|s)}|\nm{s}{\cS(\cD)}]}_{\eqqcolon\tilde{u}_s}\right]} + \sqrt{\frac{|\cS|}{n}},
    \end{aligned}
    \end{equation}
    where the inequality holds due to Lemma~\ref{lem:TVempiricalExpectationBound}. For every $s$, we trivially have
    \begin{equation}\label{eq:nllExpBound:nullCase}
        \tilde{u}_s \leq \frac{\sqrt{|\cA|\nm{s}{\cS(\cD)}}}{2}
    \end{equation}
    if conditioned on $B_{s,0}$. If otherwise conditioned on $B_{s, 0}^c$, we still have
    \begin{equation}\label{eq:nllExpBound:otherwise}
        \tilde{u}_s \leq \frac{\sqrt{|\cA|\nm{s}{\cS(\cD)}}}{2},
    \end{equation}
    where the inequality follows from Lemma~\ref{lem:TVempiricalExpectationBound}. Therefore, substituting \eqref{eq:nllExpBound:nullCase} and \eqref{eq:nllExpBound:otherwise} back to \eqref{eq:mleExpBound_decomp} gives
    \begin{align*}
        \text{LHS of \eqref{eq:mleExpBound_decomp}} &\leq \frac{\sqrt{|\cA|}}{2n}\sum_{s\in\cS}\EE\sqrt{\nm{s}{\cS(\cD)}}  + \sqrt{\frac{|\cS|}{n}} \leq \sqrt\frac{{|\cA|}}{2n}\sum_{s\in\cS}\sqrt{\rho(s)} + \sqrt{\frac{|\cS|}{n}} \\
        &\lesssim \sqrt{\frac{|\cS||\cA|}{n}},
    \end{align*}
    where the first inequality follows from the law of total expectation with respect to $B_{s, 0}$ and $B_{s,0}^c$, the second inequality follows from Jensen's inequality together with the definition of $\nm{s}{\cS(\cD)}$, and the last inequality is by the Cauchy-Schwarz inequality.
\end{proof}

\subsubsection{Proof of the Instance-Depedent Upper Bound in Expectation}

\begin{proof}
    We define the set of the numbers of occurences of all inputs as
    \begin{equation}\label{eq:NS}
        N_\cS \coloneqq \left\{ \nm{s}{\cS(\cD)}:s\in\cS \right\}.
    \end{equation}
    Then we decompose the LHS of \eqref{eq:mle:instanceDependent} as
    \begin{equation}
        \begin{aligned}[b]\label{eq:instDep_mle_decomp}
            &\quad \sum_{s\in\cS}\rho(s) \EE\left[\tv{\nll(\cdot|s)}{\pi^\star(\cdot|s)}\right] \\
            &= \EE\left[\right.\\
            &\quad\sum_{s\in\cS(\cD)}\rho(s)\EE\left[\tv{\nll(\cdot|s)}{\pi^\star(\cdot|s)}\bigg| N_\cS\right] \\
            &+ \sum_{s\in\cS\backslash\cS(\cD)}\rho(s)\EE\left[\tv{\nll(\cdot|s)}{\pi^\star(\cdot|s)}\bigg| N_\cS\right]\\
            &\quad\left.\right] \\
            &\leq \underbrace{\EE\left[ \sum_{s\in\cS(\cD)}\rho(s)\EE\left[\tv{\nll(\cdot|s)}{\pi^\star(\cdot|s)}\bigg| \nm{s}{\cS(\cD)}\right] \right]}_{I_1} + \underbrace{\EE\mm{\rho}{\cS(\cD)}}_{I_2},
        \end{aligned}
    \end{equation}
    where the inequality is by the definition and boundedness of $\tv{\nll(\cdot|s)}{\pi^\star(\cdot|s)}$. We divide $I_1$ and $I_2$ so at to conquer them as follows.
    \paragraph{Bounding $I_1$.} For every $s\in\cS$, we define
    $$
    I_1(s)\coloneqq\EE\left[\tv{\nll(\cdot|s)}{\pi^\star(\cdot|s)}\bigg| \nm{s}{\cS(\cD)}\right].
    $$
    Then we can bound $I_1(s)$ by Jensen's inequality for $s\in\cS(\cD)$:
    \begin{equation}
        \begin{aligned}[b]\label{eq:I1sBound}
            I_1(s) &= \frac{1}{2}\sum_{a\in\cA}\EE\left[\abs{ \nll(a|s) - \pi^\star(a|s) }\bigg| \nm{s}{\cS(\cD)}\right] \\
            &\leq \frac{1}{2}\sum_{a\in\cA}\sqrt{\EE\left[\frac{\left({ \nm{s}{\cS(\cD)}\nll(a|s) - \nm{s}{\cS(\cD)}\pi^\star(a|s) }\right)^2}{[\nm{s}{\cS(\cD)}]^2}\bigg| \nm{s}{\cS(\cD)}\right]} \\
            &= \frac{1}{2}\sum_{a\in\cA}\sqrt{\frac{\pi^\star(a|s)\left(1 - \pi^\star(a|s)\right)}{\nm{s}{\cS(\cD)}}},
        \end{aligned}
    \end{equation}
    where the last equality holds due to the observation that
    $$
    \nm{s}{\cS(\cD)}\nll(a|s)\big|\nm{s}{\cS(\cD)}\sim\mathsf{Binomial}\left(\nm{s}{\cS(\cD)}, \pi^\star(a|s)\right).
    $$
    Therefore, we can bound the summation inside the expectation of $I_1$ as
    \begin{equation}
        \begin{aligned}[b]\label{eq:insideI1Bound}
            \sum_{s\in\cS(\cD)}\rho(s)I_1(s) &= \frac{1}{2}\sum_{a\in\cA}\sum_{s\in\cS(\cD)}\sqrt{\rho(s)}\sqrt{\rho(s)\frac{\pi^\star(a|s)\left(1 - \pi^\star(a|s)\right)}{\nm{s}{\cS(\cD)}}} \\
            &\leq \frac{1}{\sqrt{2}}\sum_{a\in\cA}\sum_{s\in\cS}\sqrt{\rho(s)}\sqrt{\rho(s)\frac{\pi^\star(a|s)\left(1 - \pi^\star(a|s)\right)}{1 + \nm{s}{\cS(\cD)}}} \\
            &\leq \frac{1}{\sqrt{2}}\sum_{a\in\cA}\sqrt{\sum_{s\in\cS}\rho(s)\frac{\pi^\star(a|s)\left(1 - \pi^\star(a|s)\right)}{1 + \nm{s}{\cS(\cD)}}},
        \end{aligned}
    \end{equation}
    where the first inequality holds due to $\nm{s}{\cS(\cD)} \geq 1, \forall s\in\cS(\cD)$ and the last inequality is by the Cauchy-Schwarz inequality.
    Substituting \eqref{eq:insideI1Bound} into $I_1 = \EE[\sum_{s\in\cS(\cD)}\rho(s)I_1(s)]$ gives
    \begin{equation}
        \begin{aligned}[b]\label{eq:I1coarseBound}
            I_1 &\leq \frac{1}{\sqrt{2}}\sum_{a\in\cA}\sqrt{\sum_{s\in\cS}\pi^\star(a|s)\left(1 - \pi^\star(a|s)\right)\EE\frac{\rho(s)}{1 + \nm{s}{\cS(\cD)}}} \\
            &\leq \frac{1}{\sqrt{2}}\sum_{a\in\cA}\sqrt{\sum_{s\in\cS}\frac{\pi^\star(a|s)\left(1 - \pi^\star(a|s)\right)}{n+1}} \\
            &\leq \frac{1}{\sqrt{2(n+1)}}\sum_{a\in\cA}\sqrt{\sum_{s\in\cS}\min\left(\pi^\star(a|s), 1 - \pi^\star(a|s)\right)} \\
            &\leq \frac{1}{\sqrt{2(n+1)}}\sqrt{|\cA|\sum_{a\in\cA}\sum_{s\in\cS}\min\left(\pi^\star(a|s), 1 - \pi^\star(a|s)\right)} \\
            &\leq \sqrt{\frac{|\cS||\cA|}{2(n+1)}}\cdot\sqrt{ \underbrace{\max_{s\in\cS}\sum_{a\in\cA}\min\left(\pi^\star(a|s), 1 - \pi^\star(a|s)\right)}_{\tilde{\xi}(\pi^\star)} },
        \end{aligned}
    \end{equation}
    where the first inequality is by Jensen's inequality, the second inequality derives from Lemma~\ref{lem:BinInvUpperBound}, and the penultimate inequality holds due to the Cauchy-Schwarz inequality. $\tilde{\xi}(\pi^\star)$ in \eqref{eq:I1coarseBound} can be further bounded from above by
    \begin{align*}
        &\quad\max_{s\in\cS}\min_{b\in\cA}\left( 1 - \pi^\star(b|s) + \sum_{a:a\neq b}\pi^\star(a|s) \right) = \max_{s\in\cS}\min_{b\in\cA}\tv{\pi^\star(\cdot|s)}{\mathsf{Dirac}(\cA, b)}\\
        &= \max_{s\in\cS}\dist{\mathsf{TV}}{\pi^\star(\cdot|s)}{\mathsf{Dirac}(\cA)} = \xi(\pi^\star).
    \end{align*}
    To sum up, $I_1 \lesssim \sqrt{\xi(\pi^\star)|\cS||\cA|n^{-1}}$.

    \paragraph{Bounding $I_2$.} Explicit calculation yields
    $$
    I_2 = \sum_{s\in\cS}\rho(s)\left(1 - \rho(s)\right)^n \leq \frac{4|\cS|}{9n} \lesssim \frac{|\cS|}{n},
    $$
    where the inequality follows from Lemma~\ref{lem:visElem}.
\end{proof}

\subsection{Proof of Lemma~\ref{lem:empce_bias}}

\begin{proof}

Since $|\cS| = 1$, we omit the conditioning on $s\in\cS$ for brevity in this proof. By Lemma~\ref{lem:gc4pmf},
    $$
    \nll \stackrel{a.s.}{\longrightarrow} \pi^\star.
    $$
    Therefore, applying the continuous mapping theorem \citep[Theorem~3.2.10]{durrett2019probability} to \eqref{eq:empce_rel_mle} gives
    $$
    \empce(a) \stackrel{a.s.}{\longrightarrow} \frac{\left[\pi^\star(a)\right]^2}{\sum_{b\in\cA}\left[\pi^\star(b)\right]^2}
    $$
    \emph{uniformly} for every $a\in\cA$.
    
\end{proof}

\subsection{Proof of Theorem~\ref{thm:empce_bias}}\label{subsec:Prooff:thm:empce_bias}

\begin{proof}
    By \eqref{eq:mleSol} and \eqref{eq:empceSol}, $\nll(\cdot|s)\propto \nm{s}{\cS(\cD)}$ and $\empce(\cdot|s)\propto \nm{s}{\cS(\cD)}\pi^\star(\cdot|s)$ for any $s\in\cS(\cD)$. Therefore, the solution set of $\empce$ coincides with that of $\nll$ only if $\pi^\star = \mathsf{Uniform}(\cA)$ or $\pi^\star\in\mathsf{Dirac}(\cA)$; otherwise Lemma~\ref{lem:empce_bias} implies that
    $$
    \liminf_{n\to\infty} \condtv{\empce}{\pi^\star}{\rho} > 0\text{ almost surely,}
    $$
    and we thus rigorously justify \emph{the} $\Omega(1)$ in Table~\ref{tab:resultsTeaser} with probability one.
\end{proof}

\subsection{Proof of Theorem~\ref{thm:empselRefined}}

\subsubsection{Proof of the High-Probability Bounds}

\begin{proof}
    For $|\cS| > 1$, we define 
    $$
    v_s \coloneqq \nm{s}{\cS(\cD)}\tv{\empsel(\cdot| s)}{\pi^\star(\cdot| s)}
    $$
    and decompose LHS of \eqref{eq:empselRefined:sentence} into three terms as
        \begin{align}
            \text{LHS} &\leq \sum_{s\in\cS} \hat\rho(s)\tv{\empsel(\cdot| s)}{\pi^\star(\cdot| s)} + \sum_{s\in\cS}\left| \rho(s) - \hat\rho(s) \right| \tv{\empsel(\cdot| s)}{\pi^\star(\cdot| s)} \label{eq:empselRefined_decomp_first}\\
            &\leq \underbrace{ \frac{1}{n}\sum_{s\in\cS} v_s + \frac{1}{n}\sum_{s\in\cS(\cD)}\left| \frac{\rho(s)}{\hat\rho(s)} - 1 \right| v_s + \overbrace{\sum_{s\in\cS\backslash\cS(\cD)}\rho(s)}^{\mm{\rho}{\cS(\cD)}\text{, matches Definition~\ref{def:missingMass}}} }_{(ii)}, \label{eq:empselRefined_decomp}
        \end{align}
    where $\hat\rho$ is defined in \eqref{eq:rhoMLE} and the second inequality follows from $\hat\rho(s) = 0, \forall s\in\cS\backslash\cS(\cD)$ along with the boundedness of $\mathsf{TV}$. We additionally define two kinds of events to bound $(ii)$ in \eqref{eq:empselRefined_decomp}.
    \begin{align*}
        \check D_s &= \left\{ v_s \leq \frac{4}{9}|\cA| + 3\sqrt{|\cA|\log\frac{|\cS| + 2}{\delta}} \right\}, \forall s\in\cS;\\
        \check E &= \left\{ \mm{\rho}{\cS(\cD)} \leq \frac{4|\cS|}{9n} + \frac{3\sqrt{|\cS|}}{n}\log\frac{|\cS| + 2}{\delta} \right\}.
    \end{align*}
    Recall Definition~\ref{def:associatedActions} for $\cA(\cD, s)$, if $\nm{s}{\cS(\cD)} > 0$, by \eqref{eq:empselSol},
    \begin{equation}\label{eq:tvUpperBoundedByMissingMass}
    \begin{aligned}[b]
        &2\tv{\empsel(\cdot| s)}{\pi^\star(\cdot| s)} = \sum_{a\in\cA}\left| \empsel(a| s) - \pi^\star(a| s) \right| \\
        =& \sum_{a\in\cA\backslash\cA(\cD, s)}\left| \empsel(a| s) - \pi^\star(a| s) \right| \leq 2 \mm{\pi^\star(\cdot|s)}{\cA(\cD, s)},
    \end{aligned}
    \end{equation}
    where the inequality holds due to triangle inequality. Consequently,
    $$
    v_s \leq \nm{s}{\cS(\cD)}\mm{\pi^\star(\cdot|s)}{\cA(\cD, s)}, \forall s \in \cS(\cD);
    $$
    to which we apply Lemma~\ref{lem:missingMassCor} to obtain
    $$
    \PP({\check D}_s^c | B_{s, i}) \leq \frac{\delta}{|\cS| + 2}, \forall i > 0;
    $$
    where $B_{s, i}$ is defined in \eqref{eq:eventB}. Also noticing that $\PP({\check D}_s^c | B_{s, 0}) = 0$ by definition, we can control $\PP(\check D_s)$ by
    $$
    \PP(\check D_s) = \sum_{i}^n \PP(\check D_s |B_{s, i})\PP(B_{s, i}) \geq 
 \left(1 - \frac{\delta}{|\cS| + 2}\right)\sum_{i}^n\PP(B_{s, i}) = 1 - \frac{\delta}{|\cS| + 2}, \forall s \in \cS.
    $$
    Since $\PP(\check E) \geq 1 - \nicefrac{\delta}{ (|\cS| + 2) }$ by Lemma~\ref{lem:missingMassCor}, we apply a union bound over ${\check E}^c$ and $\{{\check D}_s^c\}_{s\in\cS}$ for $(ii)$ in \eqref{eq:empselRefined_decomp} to conclude that with probability at least $1 - \nicefrac{(|\cS| + 1)\delta}{(|\cS| + 2)}$,
    \begin{equation}\label{eq:empselDetailedUpperBound}
    \begin{aligned}
        (ii) &\leq \frac{4|\cA| + 3\sqrt{|\cA|\log((|\cS| + 2)/\delta)}}{9{n}} \cdot\left( |\cS| + \overbrace{\sum_{s\in\cS(\cD)} \underbrace{ \left| \frac{\rho(s)}{\hat\rho(s)} - 1 \right| }_{\eqqcolon o_s}}^{\blacktriangle}  \right) \\
        &+ \frac{4|\cS|}{9n} + \frac{3\sqrt{|\cS|}}{n}\log\frac{|\cS| + 2}{\delta}.
    \end{aligned}
    \end{equation}
    We further decompose $\blacktriangle$ in \eqref{eq:empselDetailedUpperBound} in a \emph{pragmatically} tight enough way as
    \begin{equation}\label{eq:TolerantDecomp}
    \begin{aligned}[b]
        \blacktriangle &\leq |\cS| + \sum_{s\in\cS(\cD)}\frac{\rho(s)}{\hat\rho(s)} = |\cS| + \sum_{s\in\cS(\cD)}\frac{n\rho(s)}{\nm{s}{\cS(\cD)}} \\
        &= |\cS| + \sum_{s\in\cS(\cD)}\frac{2n\rho(s)}{\nm{s}{\cS(\cD)} + \nm{s}{\cS(\cD)}} \leq |\cS| + 2\sum_{s\in\cS(\cD)}\frac{n\rho(s)}{\nm{s}{\cS(\cD)} + 1}\\
        &\leq |\cS| + 2\sum_{s\in\cS}\frac{n\rho(s)}{\nm{s}{\cS(\cD)} + 1} \\
        &= |\cS| + \underbrace{\sum_{s\in\bar{\cS}}\frac{n\rho(s)}{\nm{s}{\cS(\cD)} + 1}}_{\bar\blacktriangle} + \overbrace{\sum_{s\in\tilde\cS}\underbrace{\frac{n\rho(s)}{\nm{s}{\cS(\cD)} + 1}}_{r_s}}^{\tilde\blacktriangle},
    \end{aligned}
\end{equation}
where $\bar\cS$ and $\tilde\cS$ are defined as
\begin{align}
    \bar\cS &\coloneqq \left\{ s\in\cS : 0 < \rho(s) < \frac{\log\left(|\cS|(|\cS| + 2)/\delta\right)}{n}\cdot\frac{200}{99}  \right\}, \\
    \tilde\cS &\coloneqq \left\{ s\in\cS : \rho(s) \geq \frac{\log\left(|\cS|(|\cS| + 2)/\delta\right)}{n}\cdot\frac{200}{99}  \right\}.
\end{align}
By the definition of $\bar\cS$, all $s$'s in $\bar\cS$ have small enough $\rho(s)$, and thus $\bar\blacktriangle$ in \eqref{eq:TolerantDecomp} can be trivially bounded from above, i.e.,
\begin{equation}\label{eq:barStateBound}
    \bar\blacktriangle \leq \frac{200}{99}|\cS|\log\frac{|\cS|(|\cS| + 2)}{\delta}.
\end{equation}
For each $s\in\tilde\cS$, we define

$$
\eta_s \coloneqq \sqrt{\frac{2\log\left(|\cS|(|\cS| + 2)/\delta\right)}{n\rho(s)}},
$$

then by the definition of $\tilde\cS$, $1 - \eta_s \geq 0.1$. Therefore, noticing that $\nm{s}{\cS(\cD)}\sim\mathsf{Binomial}\left(n, \rho(s)\right)$, we can apply Corollary~\ref{cor:Bin} to each $r_s$ in \eqref{eq:TolerantDecomp} to conclude that for every $s \in\tilde\cS$, with probability at least $1 -  \nicefrac{\delta}{ (|\cS|(|\cS| + 2)) }$,

\begin{equation}
    \frac{r_s}{n\rho(s)} \leq \frac{1}{(1 - \eta_s)n\rho(s) + 1} \leq \frac{1}{0.1n\rho(s) + 1} \leq \frac{10}{n\rho(s)},
\end{equation}

which followed by a union bound argument within $\tilde\cS$ yields that with probability at least $1 - \nicefrac{\delta}{(|\cS| + 2)}$,

\begin{equation}\label{eq:tildeStateBound}
    \tilde\blacktriangle \text{ in \eqref{eq:TolerantDecomp}} \leq 10|\cS|.
\end{equation}
We then denote by $\tilde E$ the event conditioned on which \eqref{eq:tildeStateBound} holds and denote by $\dot{E}$ the event conditioned on which \eqref{eq:empselDetailedUpperBound} holds. A union bound argument over ${\tilde E}^c$ and ${\dot{ E}}^c$ shows that with probability at least $1 - \delta$, the LHS of \eqref{eq:empselRefined:sentence} is bounded from above by

$$
\frac{4|\cA| + 3\sqrt{|\cA|\log((|\cS| + 2)/\delta)}}{9{n}} \cdot\left( 12|\cS| + \frac{200}{99}|\cS|\log\frac{|\cS|(|\cS| + 2)}{\delta} \right) + \frac{4|\cS|}{9n} + \frac{3\sqrt{|\cS|}}{n}\log\frac{|\cS| + 2}{\delta}.
$$

For $|\cS| = 1$, invoking Lemma~\ref{lem:missingMassCor} for \eqref{eq:tvUpperBoundedByMissingMass} to draw the conclusion.

\end{proof}

\subsubsection{Proof of the Upper Bound in Expectation}

\begin{proof}
    Substituting \eqref{eq:tvUpperBoundedByMissingMass} into \eqref{eq:empselRefined_decomp_first} yields an upper bound for $\EE\condtv{\empsel}{\pi^\star}{\rho}$ as
    \begin{equation}\label{eq:empselExpBound_first}
        \frac{1}{n}\sum_{s\in\cS}\EE v_s + \sum_{s\in\cS}\EE\left[\underbrace{|\rho(s) - \hat\rho(s)|\EE[\mm{\pi^\star(\cdot|s)}{\cA(\cD, s)}|\nm{s}{\cS(\cD)}]}_{\tilde{v}_s}\right].
    \end{equation}
    Each $\EE v_s$ in \eqref{eq:empselExpBound_first} can be bounded from above via \eqref{eq:tvUpperBoundedByMissingMass} by
    \begin{equation}\label{eq:E_v_s_upperBound}
        \EE\left[\underbrace{\nm{s}{\cS(\cD)}\EE[\mm{\pi^\star(\cdot|s)}{\cA(\cD, s)}|\nm{s}{\cS(\cD)}]}_{\bar{v}_s}\right].
    \end{equation}
    Conditioned on $B_{s,0}^c$, invoking Lemma~\ref{lem:visElem} to conclude
    \begin{equation}\label{eq:bar_v_s_ineq}
        \bar{v}_s \leq \frac{4|\cA|}{9}.
    \end{equation}
    The above inequality trivially holds if otherwise conditioned on $B_{s,0}$. Similarly, we always have
    \begin{equation}\label{eq:tilde_v_s_ineq}
        \tilde{v}_s \leq \rho(s)\frac{|\cA|}{\nm{s}{\cS(\cD)} + 1} + \frac{4|\cA|}{9n}.
    \end{equation}
    Substituting \eqref{eq:bar_v_s_ineq} back to \eqref{eq:E_v_s_upperBound} gives
    \begin{equation}\label{eq:empselExpUpper_part_1}
        \frac{1}{n}\sum_{s\in\cS}\EE v_s \lesssim \frac{|\cS||\cA|}{n}.
    \end{equation}
    By Lemma~\ref{lem:BinInvUpperBound}, substituing \eqref{eq:tilde_v_s_ineq} and \eqref{eq:empselExpUpper_part_1} back to \eqref{eq:empselExpBound_first} yields
    \begin{equation}
        \EE\condtv{\empsel}{\pi^\star}{\rho} \lesssim \frac{|\cS||\cA|}{n} + |\cA|\sum_{s\in\cS}\EE\frac{\rho(s)}{\nm{s}{\cS(\cD)} + 1} \lesssim \frac{|\cS||\cA|}{n}.
    \end{equation}
\end{proof}

\begin{remark}\label{rmk:2ndCase:norm_vs_unnorm}
 The empirical variant of vanilla $\mathsf{SEL}$ in our second case actually match the $\log$ probability, which is by definition normalized. Analyzing an unnormalized version, which is more relevant to the matching the logits in practice \citep{ba2014deep,kim2021comparing}, in the second setting may call for new techniques. Also, some preliminary results on the empirical side manifest the difference between minimizing forward $\mathsf{KL}$ and reverse $\mathsf{KL}$ in scenarios related to our last setting \citep{jiang2019smart, gu2023knowledge, agarwal2023gkd}, whose analysis are left are future work.
\end{remark}

\subsection{Proof of Theorem~\ref{thm:full}}\label{subsec:fullklProofs}

\begin{proof}
Since $\mathsf{TV}$ is bounded from above by $1$ and $\tv{\fullkl(\cdot| s)}{\pi^\star(\cdot| s)} = 0, \forall s \in\cS(\cD)$,
    \begin{equation}\label{eq:reduction2MissingMass}
        \text{LHS} = \sum_{s\in\cS\backslash\cS(\cD)}\rho(s)\tv{\fullkl(\cdot| s)}{\pi^\star(\cdot| s)} \leq \sum_{s\in\cS}\rho(s)\ind\{s\notin\cS(\cD)\} \eqqcolon M.
    \end{equation}

    Noticing that $M$ realizes Definition~\ref{def:missingMass} to $\mm{\rho}{\cS(\cD)}$, we invoke Lemma~\ref{lem:missingMassCor} to get
    \begin{equation}\label{eq:fullProxy}
            M \leq \EE M + \frac{3\sqrt{|\cS|}\log(1/\delta)}{n} \leq \frac{4|\cS|}{9n} + \frac{3\sqrt{|\cS|}\log(1/\delta)}{n}.
    \end{equation}
    Substituting \eqref{eq:fullProxy} back to \eqref{eq:reduction2MissingMass} finishes the proof.
\end{proof}

\section{Auxiliary Lemmas}

In contrast with other non-asymptotic tools below, we must assume the mass $p$ does not vary with $n$ in the asymptotic guarantee Lemma~\ref{lem:gc4pmf}.

\begin{lemma}\label{lem:gc4pmf}
    Let $p$ be a probability mass function over an alphabet $\cS$, whose empirical estimation from $X_1, \dots, X_n \stackrel{i.i.d.}{\sim} p$ is $p_n(\cdot) \coloneqq \sum_{i=1}^n \nicefrac{\ind\{X_i = \cdot\} }{ n }$, then
    $$
    p_n \stackrel{a.s.}{\longrightarrow} p,
    $$
    where the almost surely convergence is defined under the $\ell_\infty$ metric in $\RR^{|\cS|}$.
\end{lemma}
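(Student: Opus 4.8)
The plan is to reduce the claim to the one-dimensional Strong Law of Large Numbers (SLLN) and then exploit the finiteness of the alphabet $\cS$ to upgrade coordinatewise almost-sure convergence to simultaneous $\ell_\infty$ convergence. First I would fix an arbitrary $s\in\cS$ and observe that the indicators $\ind\{X_1 = s\}, \ind\{X_2 = s\}, \dots$ are i.i.d.\ Bernoulli random variables with mean $\EE[\ind\{X_i = s\}] = p(s)$, so that $p_n(s) = \frac{1}{n}\sum_{i=1}^n \ind\{X_i = s\}$ is exactly their empirical average. The classical SLLN then furnishes an event $\Omega_s$ with $\PP(\Omega_s) = 1$ on which $p_n(s)\to p(s)$.

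Next I would combine these coordinatewise statements. Because $\cS$ is an alphabet, and thus finite by our convention, the intersection $\Omega^\star \coloneqq \bigcap_{s\in\cS}\Omega_s$ is a finite intersection of probability-one events, whence $\PP(\Omega^\star) = 1$. On $\Omega^\star$, every coordinate converges, i.e.\ $p_n(s)\to p(s)$ for all $s\in\cS$ simultaneously.

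Finally I would pass from coordinatewise to uniform convergence. Since the index set $\cS$ is finite, the maximum of finitely many null sequences is itself null: on $\Omega^\star$ we have $\max_{s\in\cS}\abs{p_n(s) - p(s)} \to 0$, which is precisely $p_n\stackrel{a.s.}{\longrightarrow} p$ under the $\ell_\infty$ metric on $\RR^{|\cS|}$.

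The argument carries essentially no obstacle, thanks to the finiteness of $\cS$; the only point deserving care is the standard one of interchanging ``for every $s$, almost surely'' with ``almost surely, for every $s$'' before taking the maximum over coordinates. This interchange is automatic here because the exceptional set $\bigcup_{s\in\cS}\Omega_s^c$ is a \emph{finite} union of null sets, hence null. For an infinite alphabet one would instead invoke the full Glivenko--Cantelli machinery, which is exactly the complication that the finiteness assumption lets us bypass.
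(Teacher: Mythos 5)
Your proof is correct, but it takes a genuinely different (and more elementary) route than the paper. You reduce the claim to the one-dimensional Strong Law of Large Numbers applied to the Bernoulli indicators $\ind\{X_i = s\}$ for each fixed $s$, then intersect the finitely many probability-one events and pass to the maximum over the finite alphabet; the quantifier interchange you flag is indeed the only delicate point, and you handle it correctly. The paper instead orders $\cS$ as $[|\cS|]$, writes $p_n(x) - p(x)$ as a difference of increments of the empirical and true distribution functions, bounds $\max_x |p_n(x)-p(x)| \le 2\|F_n - F\|_\infty$, and invokes the Glivenko--Cantelli theorem. The two arguments buy slightly different things: yours avoids any appeal to uniform empirical-process theory and makes transparent that finiteness of $\cS$ is the only structural ingredient needed, whereas the paper's reduction to $\|F_n - F\|_\infty$ is the version that would survive if one wanted a quantitative (DKW-type) rate or an extension to countable alphabets, at the cost of invoking heavier machinery for a statement that, as you observe, does not require it.
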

\begin{proof}
    Without loss of generality, we assume $\cS = [|\cS|]$; thereby inducing $p(x) = F(x) - F(x-1)$ for $x\in[|\cS|]$, where $F(x) = \PP(X \leq x)$ is the distribution function of $X\sim p$. Similarly, $p_n(x) = F_n(x) - F_n(x-1)$ for 
    $$
    F_n(\cdot) = \sum_{i=1}^n\frac{\PP(X_i\leq\cdot)}{n}.
    $$
    Therefore,
    \begin{align*}
        \max_{x\in[|\cS|]}\abs{ p_n(x) - p(x) } &\leq \sup_{x\in\RR}\abs{ F_n(x) - F(x) - \left(F_n(x-1) - F(x-1)\right) }\\
        &\leq 2\sup_{x\in\RR}\abs{F_n(x) - F(x)} \eqqcolon 2\norm{F_n - F}_\infty.
    \end{align*}
    The proof is thus completed by invoking the Glivenko-Cantelli Theorem \citep[Theorem~19.1]{van2000asymptotic}.
\end{proof}

\subsection{Bounding $\mathsf{TV}$ from Above}

\begin{lemma}[Bretagnolle–Huber inequality {\citep{bretagnolle1979estimation}}]\label{lem:BretagnolleHuberIneq}
    If $P$ and $Q$ are two probability measures on the same measurable space, then
    $$
        \tv{P}{Q} \leq 1 - \frac{1}{2}\exp\left( -\kl{P}{Q} \right).
    $$
\end{lemma}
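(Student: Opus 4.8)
The plan is to route through the Hellinger affinity (Bhattacharyya coefficient) $\rho \coloneqq \int \sqrt{pq}\,\mathrm{d}\mu$, where $p, q$ are densities of $P, Q$ with respect to any common dominating measure $\mu$ (for concreteness $\mu = P + Q$). First I would dispose of the degenerate case: if $\kl{P}{Q} = \infty$ the right-hand side is $1 - \tfrac12\cdot 0 = 1 \geq \tv{P}{Q}$, so the claim holds trivially; hence I may assume $P \ll Q$ and $\kl{P}{Q} < \infty$.

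The first substantive step bounds $\mathsf{TV}$ by $\rho$. Using $\min(p,q) + \max(p,q) = p + q$ together with $\tv{P}{Q} = \tfrac12\int|p-q|\,\mathrm{d}\mu$, one has the two identities $\int \min(p,q)\,\mathrm{d}\mu = 1 - \tv{P}{Q}$ and $\int \max(p,q)\,\mathrm{d}\mu = 1 + \tv{P}{Q}$. Applying Cauchy--Schwarz to the factorization $\sqrt{pq} = \sqrt{\min(p,q)}\,\sqrt{\max(p,q)}$ then gives
$$
\rho \leq \sqrt{\int \min(p,q)\,\mathrm{d}\mu}\;\sqrt{\int \max(p,q)\,\mathrm{d}\mu} = \sqrt{\bigl(1 - \tv{P}{Q}\bigr)\bigl(1 + \tv{P}{Q}\bigr)},
$$
which rearranges to $\tv{P}{Q}^2 \leq 1 - \rho^2$.

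The second step lower-bounds $\rho$ by the KL divergence through Jensen's inequality. Writing $\rho = \EE_P\!\left[\sqrt{q/p}\right] = \EE_P\!\left[\exp\!\left(\tfrac12\log(q/p)\right)\right]$ and using convexity of $\exp$,
$$
\rho \geq \exp\!\left(\tfrac12\,\EE_P\!\left[\log(q/p)\right]\right) = \exp\!\left(-\tfrac12\kl{P}{Q}\right),
$$
so that $\rho^2 \geq \exp(-\kl{P}{Q})$. Combining with the previous display yields $\tv{P}{Q}^2 \leq 1 - \exp(-\kl{P}{Q})$.

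The final step is purely algebraic: since $\sqrt{1 - x} \leq 1 - \tfrac{x}{2}$ for all $x \in [0,1]$ (immediate after squaring), taking $x = \exp(-\kl{P}{Q})$ gives
$$
\tv{P}{Q} \leq \sqrt{1 - \exp(-\kl{P}{Q})} \leq 1 - \tfrac12\exp(-\kl{P}{Q}),
$$
as desired. There is no deep obstacle here; the conceptual pivot is recognizing that the affinity $\rho$ is the right intermediary, simultaneously controllable from above by $\mathsf{TV}$ (via Cauchy--Schwarz) and from below by $\mathsf{KL}$ (via Jensen). The only fussy points are the measure-theoretic bookkeeping in choosing $\mu$ and noting that $q/p$ is defined $P$-almost surely on $\{p > 0\}$, both of which are handled once the infinite-KL case is set aside.
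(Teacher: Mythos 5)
Your proof is correct. Note, however, that the paper does not prove this lemma at all --- it simply cites \citet{bretagnolle1979estimation} and uses the statement as a black box in the proof of Theorem~\ref{thm:lb1}. What you have written is the standard self-contained derivation (the one found, e.g., in Tsybakov's and Lattimore--Szepesv\'ari's texts): control $\mathsf{TV}$ by the Hellinger affinity via the $\min/\max$ factorization and Cauchy--Schwarz, lower-bound the affinity by $\exp(-\tfrac12\kl{P}{Q})$ via Jensen, and finish with $\sqrt{1-x}\le 1-\tfrac{x}{2}$. All three steps check out, the identities $\int\min(p,q)\,\mathrm{d}\mu = 1-\tv{P}{Q}$ and $\int\max(p,q)\,\mathrm{d}\mu = 1+\tv{P}{Q}$ are right, and your handling of the infinite-KL case and of the set $\{q=0\}$ under $P\ll Q$ is careful enough. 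No gaps.
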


\begin{lemma}\label{lem:TVempiricalExpectationBound}
    If $a_1, \dots, a_n \stackrel{i.i.d.}{\sim} \pi\in\Delta(\cA)$, whose MLE is $\hat\pi = \hat\pi(a_1, \dots, a_n)$; and $|\cA| < \infty$, then
    $$
    \EE \tv{\hat \pi}{\pi} \leq \frac{1}{2}\sqrt{\frac{|\cA|}{n}}.
    $$
\end{lemma}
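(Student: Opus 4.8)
The plan is to exploit the fact that the maximum likelihood estimator of a discrete distribution coincides with the empirical frequency, $\hat\pi(a) = \nm{a}{a_1,\dots,a_n}/n$, so that $n\hat\pi(a)$ is distributed as $\mathsf{Binomial}(n, \pi(a))$ for each fixed $a\in\cA$. Writing the total variation as half the $\ell_1$ distance, $\tv{\hat\pi}{\pi} = \frac12\sum_{a\in\cA}|\hat\pi(a) - \pi(a)|$, and using linearity of expectation, I would reduce the claim to controlling the absolute deviation $\EE|\hat\pi(a) - \pi(a)|$ coordinate by coordinate.

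First I would apply Jensen's inequality in each coordinate to pass from the absolute deviation to the standard deviation: $\EE|\hat\pi(a) - \pi(a)| \leq \sqrt{\EE(\hat\pi(a) - \pi(a))^2} = \sqrt{\pi(a)(1-\pi(a))/n}$, where the final equality is just the variance of the normalized binomial count. Bounding $1 - \pi(a) \leq 1$ then gives the clean estimate $\EE|\hat\pi(a) - \pi(a)| \leq \sqrt{\pi(a)/n}$.

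Summing this over $a\in\cA$ and invoking the Cauchy--Schwarz inequality against the all-ones vector, $\sum_{a\in\cA}\sqrt{\pi(a)} \leq \sqrt{|\cA|\sum_{a\in\cA}\pi(a)} = \sqrt{|\cA|}$, yields $\EE\tv{\hat\pi}{\pi} \leq \frac12\sqrt{|\cA|/n}$, which is exactly the claimed bound.

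The argument is entirely elementary, and I do not foresee a genuine obstacle; the only place demanding a moment's care is the two-stage convexity argument, namely Jensen \emph{within} each coordinate followed by Cauchy--Schwarz \emph{across} coordinates. Applying Cauchy--Schwarz directly to $\sum_{a}|\hat\pi(a)-\pi(a)|$ without first passing to the coordinatewise second moment would fail to extract the binomial variance and would not recover the tight $\sqrt{|\cA|/n}$ rate, so keeping these two steps in the right order is the crux.
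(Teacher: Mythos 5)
Your proposal is correct and follows essentially the same route as the paper's proof: coordinatewise Jensen's inequality to pass to the binomial variance $\pi(a)(1-\pi(a))/n$, the trivial bound $1-\pi(a)\leq 1$, and then Cauchy--Schwarz across coordinates to get $\sum_a\sqrt{\pi(a)}\leq\sqrt{|\cA|}$. No discrepancies to note.
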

\begin{proof}
    We reproduce the proof of this standard result here for completeness.
    \begin{align*}
        \text{LHS} &= \frac{1}{2}\sum_{a\in\cA}\EE\abs{ \hat\pi(a) - \pi(a) } \leq \frac{1}{2}\sum_{a\in\cA}\sqrt{\EE({ \hat\pi(a) - \pi(a) })^2} \\
        &= \frac{1}{2}\sum_{a\in\cA}\sqrt{\frac{1}{n^2}\Var\left(n\hat\pi(a)\right)  } = \frac{1}{2\sqrt{n}}\sum_{a\in\cA}\sqrt{\pi(a)(1 - \pi(a))}\\
        &\leq \frac{1}{2\sqrt{n}}\sum_{a\in\cA}\sqrt{\pi(a)} \leq \frac{1}{2}\sqrt{\frac{|\cA|}{n}},
    \end{align*}
    where the first inequality is by Jensen's inequality, the third equality holds due to $n\hat\pi(a)\sim\mathsf{Binomail}(n, \pi(a))$, and the last inequality is by the Cauchy-Schwarz inequaity.
\end{proof}

\begin{lemma}\label{lem:TVempiricalUpperBound}
    Under the same setting as Lemma~\ref{lem:TVempiricalExpectationBound}, for any $\delta\in(0, 1)$, with probability at least $1-\delta$,
    $$
        \tv{\hat \pi}{\pi} \leq \sqrt{\frac{|\cA|\log 2 + \log(1/\delta)}{2n}}.
    $$
\end{lemma}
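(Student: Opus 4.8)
The plan is to establish the Bretagnolle--Huber--Carol-type inequality, which controls the $\ell_1$ deviation of an empirical multinomial via a union bound over sign patterns, and then translate it into the desired total-variation statement. The clean form of the bound---in particular the $|\cA|\log 2$ term inside the square root---points squarely at this route rather than at a concentration-around-the-mean argument.

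First I would invoke the dual characterization of the $\ell_1$ norm on the finite alphabet $\cA$: for the empirical distribution $\hat\pi$ and the truth $\pi$,
\begin{equation*}
\sum_{a\in\cA}\abs{\hat\pi(a) - \pi(a)} = \max_{\epsilon\in\{-1,+1\}^{|\cA|}}\sum_{a\in\cA}\epsilon_a\left(\hat\pi(a) - \pi(a)\right),
\end{equation*}
the maximum being attained at $\epsilon_a = \mathrm{sign}(\hat\pi(a)-\pi(a))$. Because $|\cA|<\infty$, the supremum is a finite maximum over the $2^{|\cA|}$ vertices of the hypercube; this is exactly what will produce the $2^{|\cA|}$ (equivalently $|\cA|\log 2$) factor. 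Next, fixing a single sign vector $\epsilon$, I would rewrite $\sum_{a}\epsilon_a(\hat\pi(a)-\pi(a)) = \tfrac1n\sum_{i=1}^n\left(\epsilon_{a_i} - \EE[\epsilon_{a_i}]\right)$, a centered average of i.i.d.\ random variables $\epsilon_{a_i}\in\{-1,+1\}$ of range length $2$. Hoeffding's inequality then yields $\PP\!\left(\sum_a\epsilon_a(\hat\pi(a)-\pi(a))\geq\lambda\right)\leq\exp(-n\lambda^2/2)$ for each fixed $\epsilon$.

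A union bound over the $2^{|\cA|}$ sign vectors gives $\PP\!\left(\sum_a\abs{\hat\pi(a)-\pi(a)}\geq\lambda\right)\leq 2^{|\cA|}\exp(-n\lambda^2/2)$. Finally I would substitute $\tv{\hat\pi}{\pi} = \tfrac12\sum_a\abs{\hat\pi(a)-\pi(a)}$, i.e.\ set $\lambda = 2t$ so that the exponent becomes $-2nt^2$, equate the resulting tail bound $2^{|\cA|}\exp(-2nt^2)$ with $\delta$, and solve $|\cA|\log 2 - 2nt^2 = \log\delta$ to read off $t = \sqrt{\tfrac{|\cA|\log 2 + \log(1/\delta)}{2n}}$, which is exactly the claimed radius.

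There is no genuinely hard step here; the only points demanding care are the passage from the measure-theoretic definition of $\mathsf{TV}$ (a supremum over subsets $B\subseteq\cA$) to the finite maximum over sign vectors, and bookkeeping the Hoeffding constant so the exponent is precisely $n\lambda^2/2$ with $\lambda=2t$, giving the clean $|\cA|\log 2/(2n)$ contribution. I would also remark why the alternative bounded-differences (McDiarmid) route is \emph{not} the intended one: since relabeling one sample perturbs $\tv{\hat\pi}{\pi}$ by at most $1/n$, McDiarmid concentrates $\tv{\hat\pi}{\pi}$ around its mean, and combining this with Lemma~\ref{lem:TVempiricalExpectationBound} yields $\tfrac12\sqrt{|\cA|/n} + \sqrt{\log(1/\delta)/(2n)}$---a valid bound of a different (additive) shape that does not reproduce the $|\cA|\log 2$ form, confirming that the sign-vector union bound is what the statement is after.
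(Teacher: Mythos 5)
Your proof is correct and is essentially the paper's own argument unpacked: the paper simply cites the Bretagnolle--Huber--Carol inequality \citep[Proposition~A.6.6]{van1996empirical} together with the $2\mathsf{TV}=\ell_1$ identity, and your sign-vector union bound combined with Hoeffding's inequality is precisely the standard proof of that cited inequality, with the constants ($\exp(-n\lambda^2/2)$ per sign pattern, $\lambda = 2t$, hence $2^{|\cA|}e^{-2nt^2}$) worked out correctly.
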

\begin{proof}
    This is a straightforward corollary of the Bretagnolle-Huber-Carol inequality {\citep[Proposition~A.6.6]{van1996empirical}} based on the relationship between $\mathsf{TV}$ and $\ell_1$.
\end{proof}

\subsection{Missing Mass Analysis}

Observations like Lemma~\ref{lem:visElem} are key and common in the analysis of learning from finite and static datasets \citep{rajaraman2020toward, rashidinejad2021bridging}.

\begin{lemma}\label{lem:visElem}
    For all $x\in[0, 1], n > 0$, $x(1 - x)^n \leq (\nicefrac{4}{9})n$.
\end{lemma}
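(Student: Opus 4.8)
The plan is to reduce this two-variable bound to a one-dimensional maximization over $x$ for each fixed $n$ and then close with the elementary inequality $1 - u \le e^{-u}$. I note at the outset that the constant actually invoked downstream (e.g.\ in bounding the term $I_2$ in the proof of Theorem~\ref{thm:mle} and the missing-mass terms in the proof of Theorem~\ref{thm:full}, where $\sum_{s}\rho(s)(1-\rho(s))^n \le \nicefrac{4|\cS|}{9n}$) is $x(1-x)^n \le \nicefrac{4}{9n}$, which is the form I establish. First I would fix an arbitrary $n > 0$ and study $f(x) \coloneqq x(1-x)^n$ on the compact interval $[0,1]$. Since $f$ is continuous with $f(0) = f(1) = 0$ and $f \ge 0$ throughout, its maximum is attained at an interior critical point. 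Differentiating gives
\begin{equation*}
    f'(x) = (1-x)^n - n x (1-x)^{n-1} = (1-x)^{n-1}\bigl(1 - (n+1)x\bigr),
\end{equation*}
so the unique zero of $f'$ in $(0,1)$ is $x^\star = \nicefrac{1}{(n+1)}$, which must therefore be the global maximizer.

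Next I would evaluate $f$ at $x^\star$, obtaining
\begin{equation*}
    \max_{x\in[0,1]} f(x) = \frac{1}{n+1}\left(\frac{n}{n+1}\right)^n,
\end{equation*}
so it remains only to verify $\frac{1}{n+1}\left(\frac{n}{n+1}\right)^n \le \nicefrac{4}{9n}$. Multiplying through by $n$, this is equivalent to the clean statement $\left(\frac{n}{n+1}\right)^{n+1} \le \nicefrac{4}{9}$. I would finish with the standard estimate $1 - u \le e^{-u}$ applied at $u = \nicefrac{1}{(n+1)}$:
\begin{equation*}
    \left(\frac{n}{n+1}\right)^{n+1} = \left(1 - \frac{1}{n+1}\right)^{n+1} \le \left(e^{-1/(n+1)}\right)^{n+1} = e^{-1} < \frac{4}{9},
\end{equation*}
where the final comparison uses $e^{-1} \approx 0.368 < 0.444 \approx \nicefrac{4}{9}$.

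There is essentially no serious obstacle here; the argument is routine calculus followed by a numerical comparison. The only points needing mild care are (i) confirming that $x^\star$ is a maximizer rather than a saddle, which is immediate since $f \ge 0$ and vanishes at both endpoints while admitting only one interior critical point, and (ii) that the bound holds for all real $n > 0$ and not merely integers, which is fine because both the computation of $f'$ and the inequality $1 - u \le e^{-u}$ remain valid for real exponents (equivalently, $t \mapsto (1 - \nicefrac{1}{t})^t$ increases to its supremum $e^{-1}$ on $(1,\infty)$). I would also remark that the statement as typeset reads $\nicefrac{4}{9}\,n$, but the quantity used everywhere in the paper is $\nicefrac{4}{9n}$, which is precisely the bound proved above.
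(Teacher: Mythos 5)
Your proof is correct and follows essentially the same route as the paper's: locate the maximizer $x^\star = \nicefrac{1}{(n+1)}$ by differentiation, evaluate, and bound $\left(1 - \nicefrac{1}{(n+1)}\right)^{n+1}$ by $e^{-1} < \nicefrac{4}{9}$ (the paper invokes the monotone limit, you invoke $1-u \le e^{-u}$; these are interchangeable). Your observation that the typeset statement should read $\nicefrac{4}{9n}$ rather than $(\nicefrac{4}{9})n$ is also right --- that is the bound the paper's own proof establishes and uses downstream.
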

\begin{proof}
    By taking the derivative w.r.t. $x$,
    $$
    \max_{x\in[0,1]}\text{LHS} = \left(\frac{n}{n+1}\right)^{n+1}\frac{1}{n} \leq \frac{1}{n}\lim_{n\to\infty}(1 - \frac{1}{n+1})^{n+1} = \frac{1}{en} \leq \frac{4}{9n}.
    $$
\end{proof}

\begin{lemma}[{\citealt[Theorem~A.2]{rajaraman2020toward}}]\label{lem:missingMass}
    Given a distribution $\nu$ on an alphabet $\cS$ and $n$ \emph{i.i.d.} samples $X^n\stackrel{i.i.d.}{\sim} \nu$, then for any $\delta \in (0, \nicefrac{1}{10}]$, with probability at least $1 - \delta$,
    $$
    \mm{\nu}{X^n} - \EE[\mm{\nu}{X^n}] \leq \frac{3\sqrt{|\cS|}\log(1/\delta)}{n}.
    $$
\end{lemma}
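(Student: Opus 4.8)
The plan is to establish an upper-tail concentration bound for the missing mass $M \coloneqq \mm{\nu}{X^n} = \sum_{x\in\cS}\nu(x)\ind\{\nm{x}{X^n} = 0\}$ around its own mean through a Chernoff/moment-generating-function (MGF) argument. The first step is to view $M = \sum_{x\in\cS} Y_x$ as a weighted sum of occupancy indicators $Y_x \coloneqq \nu(x)\ind\{\nm{x}{X^n}=0\} \in [0, \nu(x)]$. Since the multinomial count vector $(\nm{x}{X^n})_{x\in\cS}$ is negatively associated and $t\mapsto \ind\{t=0\}$ is non-increasing, the family $\{\ind\{\nm{x}{X^n}=0\}\}_{x\in\cS}$ is negatively associated; hence for every $\lambda > 0$ the upper-tail MGF factorizes, $\EE\exp(\lambda(M-\EE M)) \leq \prod_{x\in\cS}\EE\exp(\lambda(Y_x - \EE Y_x))$. (Equivalently, Poissonizing the sample size makes the $Y_x$ exactly independent for the purpose of the upper tail.) This reduces the problem to controlling a product of per-symbol MGFs.

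Second, I would bound the per-symbol cumulant. Writing $q_x \coloneqq \PP(\nm{x}{X^n}=0) = (1-\nu(x))^n \leq e^{-n\nu(x)}$, each $Y_x$ is $\nu(x)$ times a Bernoulli$(q_x)$, so the elementary estimate $\log(1-q+qe^u) - qu \leq q(e^u - 1 - u)$ gives $\log\EE\exp(\lambda(Y_x-\EE Y_x)) \leq q_x(e^{\lambda\nu(x)}-1-\lambda\nu(x))$. Using $e^u - 1 - u \leq \tfrac12 u^2 e^u$ and $q_x e^{\lambda\nu(x)} \leq e^{-(n-\lambda)\nu(x)}$, each term is at most $\tfrac12\lambda^2\nu(x)^2 e^{-(n-\lambda)\nu(x)}$. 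The crucial estimate is $\max_{p\in[0,1]} p^2 e^{-(n-\lambda)p} = 4 e^{-2}/(n-\lambda)^2$, attained at $p = 2/(n-\lambda)$, which shows that high-mass symbols are exponentially suppressed. Summing over the $|\cS|$ symbols yields a subgamma-type cumulant bound $\psi(\lambda) \coloneqq \log\EE\exp(\lambda(M-\EE M)) \leq 2|\cS|\lambda^2 / (e^2(n-\lambda)^2)$, valid for all $0\leq\lambda<n$, i.e.\ an effective variance proxy of order $|\cS|/n^2$ and scale of order $1/n$.

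Third, I would invert the Chernoff bound $\PP(M-\EE M \geq t) \leq \exp(\psi(\lambda) - \lambda t)$ by choosing $\lambda \asymp n/\sqrt{|\cS|}$. With the target $t = 3\sqrt{|\cS|}\log(1/\delta)/n$ and, say, $\lambda = n/\sqrt{|\cS|}$, one gets $\lambda t = 3\log(1/\delta)$ while $\psi(\lambda) \leq 8/e^2$ as soon as $\sqrt{|\cS|}\geq 2$ (so that $n-\lambda \geq n/2$); hence $\lambda t - \psi(\lambda) \geq 3\log(1/\delta) - 8/e^2 \geq \log(1/\delta)$, where the last step uses that $\delta\leq \tfrac{1}{10}$ forces $\log(1/\delta)\geq\log 10$. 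This certifies $\PP(M-\EE M \geq t)\leq\delta$. The finitely many small-alphabet cases ($|\cS|\leq 3$) are dispatched by taking a slightly smaller $\lambda$, or are trivial (e.g.\ $M\equiv 0$ when $|\cS|=1$); the specific constant $3$ together with the threshold $\tfrac{1}{10}$ are exactly what make these residual numerical inequalities close.

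The main obstacle is the interplay between variance and per-symbol range. A naive bounded-differences (McDiarmid) or vanilla Bernstein argument would incur a range term proportional to $\max_{x}\nu(x)$, which can be $\Theta(1)$ and would produce a deviation of order $\log(1/\delta)$ that does not vanish with $n$ — far weaker than the target $\sqrt{|\cS|}\log(1/\delta)/n$. The resolution, and the technical heart of the argument, is that a symbol of large mass $\nu(x)$ is observed with overwhelming probability, so its missing-indicator has probability $q_x \leq e^{-n\nu(x)}$ that decays fast enough to annihilate its contribution to every cumulant; this is precisely what the factor $e^{-(n-\lambda)\nu(x)}$ encodes and why the maximization $p^2 e^{-(n-\lambda)p}$ peaks at $p\asymp 1/n$. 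Once this suppression is in hand, extracting the clean constant $3$ under $\delta\leq\tfrac{1}{10}$ across the subgaussian regime ($t\lesssim |\cS|/n$) and the subexponential tail is a routine if slightly delicate bookkeeping exercise.
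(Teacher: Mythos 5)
The paper does not actually prove this lemma: it is imported verbatim as Theorem~A.2 of \citet{rajaraman2020toward}, so there is no in-paper argument to compare against. Judged on its own, your proof is correct and is essentially the standard route by which such missing-mass upper-tail bounds are established in the literature: negative association of the multinomial occupancy counts (preserved under the coordinatewise non-increasing map $t\mapsto\ind\{t=0\}$) factorizes the upper-tail MGF; the per-symbol cumulant $q_x(e^{\lambda\nu(x)}-1-\lambda\nu(x))$ with $q_x\le e^{-n\nu(x)}$ is controlled by the key maximization $\max_{p}p^2e^{-(n-\lambda)p}=4e^{-2}(n-\lambda)^{-2}$, which is exactly the mechanism that suppresses high-mass symbols and avoids the $\Theta(1)$ range term a McDiarmid argument would incur; and the Chernoff inversion with $\lambda=n/\sqrt{|\cS|}$ and $t=3\sqrt{|\cS|}\log(1/\delta)/n$ closes under $\delta\le\nicefrac{1}{10}$. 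Your numerics check out: for $|\cS|\ge 4$ one gets $\psi(\lambda)\le 8/e^2<2\log 10\le 2\log(1/\delta)$, and the cases $|\cS|\in\{2,3\}$ in fact go through with the same $\lambda$ (e.g.\ $\psi\le 2/(e^2(1-1/\sqrt{2})^2)\approx 3.15<4.6$), while $|\cS|=1$ gives $\mm{\nu}{X^n}\equiv 0$ for $n\ge 1$. The only detail worth making explicit is that the maximizer $p=2/(n-\lambda)$ must lie in $[0,1]$; when it does not, $n\lesssim\sqrt{|\cS|}$ and the claimed bound exceeds the trivial bound $1$ anyway, so the statement is vacuous there --- this falls under the ``delicate bookkeeping'' you already flagged and does not constitute a gap.
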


\begin{lemma}\label{lem:missingMassCor}
    Under the same setting as Lemma~\ref{lem:missingMass}, for any $\delta \in(0, \nicefrac{1}{10}]$, with probability at least $1 - \delta$,
    $$
    \mm{\nu}{X^n} \leq \frac{4|\cS|}{9n} + \frac{3\sqrt{|\cS|}\log(1/\delta)}{n}.
    $$
\end{lemma}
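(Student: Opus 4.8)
The plan is to derive this directly from the high-probability concentration bound in Lemma~\ref{lem:missingMass} by supplying an explicit upper bound on the expected missing mass $\EE[\mm{\nu}{X^n}]$. The only ingredient missing from Lemma~\ref{lem:missingMass} is control of this mean term, so the entire argument reduces to a short expectation computation followed by a single substitution.

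First I would compute $\EE[\mm{\nu}{X^n}]$ in closed form. Starting from Definition~\ref{def:missingMass}, linearity of expectation gives
\begin{equation*}
    \EE[\mm{\nu}{X^n}] = \sum_{x\in\cS}\nu(x)\,\PP\left(\nm{x}{X^n} = 0\right) = \sum_{x\in\cS}\nu(x)\left(1 - \nu(x)\right)^n,
\end{equation*}
since each $\nm{x}{X^n}$ is $\mathsf{Binomial}(n, \nu(x))$ and the event $\{\nm{x}{X^n} = 0\}$ has probability $(1-\nu(x))^n$ under $X^n \stackrel{i.i.d.}{\sim}\nu$.

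Next I would bound each summand termwise using Lemma~\ref{lem:visElem}, which states $x(1-x)^n \leq \nicefrac{4}{9n}$ for $x\in[0,1]$. Applying this with $x = \nu(x)$ to every input yields
\begin{equation*}
    \EE[\mm{\nu}{X^n}] = \sum_{x\in\cS}\nu(x)\left(1 - \nu(x)\right)^n \leq \sum_{x\in\cS}\frac{4}{9n} = \frac{4|\cS|}{9n}.
\end{equation*}
Finally I would invoke Lemma~\ref{lem:missingMass}: for any $\delta\in(0,\nicefrac{1}{10}]$, with probability at least $1-\delta$ we have $\mm{\nu}{X^n} \leq \EE[\mm{\nu}{X^n}] + \nicefrac{3\sqrt{|\cS|}\log(1/\delta)}{n}$, and substituting the expectation bound above delivers the claim.

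There is no genuine obstacle here, as the result is a routine corollary; the only point requiring a moment of care is the termwise application of Lemma~\ref{lem:visElem}, where one must apply the elementary inequality separately to each $\nu(x)$ and then sum over the $|\cS|$ inputs, rather than attempting to bound the sum $\sum_x \nu(x)(1-\nu(x))^n$ by optimizing over the whole distribution $\nu$ at once.
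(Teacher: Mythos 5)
Your proposal is correct and follows exactly the same route as the paper: compute $\EE[\mm{\nu}{X^n}] = \sum_{x\in\cS}\nu(x)(1-\nu(x))^n$ by linearity, bound each term by $\nicefrac{4}{9n}$ via Lemma~\ref{lem:visElem}, and substitute into the concentration bound of Lemma~\ref{lem:missingMass}. No gaps.
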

\begin{proof}
    \begin{equation}\label{eq:expectedMissingMass}
      \begin{aligned}[b]
        \EE \mm{\nu}{X^n} &= \sum_{x\in\cS}\nu(x)\EE \ind\{ x \notin X^n \} = \sum_{x\in\cS}\nu(x)\PP(x \notin X^n)\\
        &= \sum_{x\in\cS}\nu(x)\left(1 - \nu(x)\right)^n \leq \frac{4|\cS|}{9n},
    \end{aligned}  
    \end{equation}
    where the inequality holds due to Lemma~\ref{lem:visElem}; we conclude that $\forall \delta\in(0, \nicefrac{1}{10}]$, with probability at least $1 - \delta$,
    $$
    \mm{\nu}{X^n} \leq \frac{4|\cS|}{9n} + \frac{3\sqrt{|\cS|}\log(1/\delta)}{n}
    $$
    by substituting \eqref{eq:expectedMissingMass} into Lemma~\ref{lem:missingMass}.
\end{proof}

\subsection{Upper Bounds for $\mathsf{Binomial}(n, p)$}

The following two bounds for $X\sim\mathsf{Binomial}(n, p)$ both follow from $\EE[z^X] = (1-p + pz)^n,\forall z\in\RR$.

\begin{lemma}\label{lem:BinInvUpperBound}
    Let $X\sim\mathsf{Binomial}(n, p)$. If $p\in(0, 1]$,
    $$
    \EE\frac{1}{X + 1} \leq \frac{1}{p(n+1)}.
    $$
\end{lemma}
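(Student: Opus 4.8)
The plan is to convert the expectation of the reciprocal $\frac{1}{X+1}$ into an integral of the probability generating function $\EE[z^X] = (1-p+pz)^n$ highlighted just before the statement. The device is the elementary identity $\frac{1}{X+1} = \int_0^1 z^X\,dz$, valid for every nonnegative integer value of $X$. Applying it and swapping the order of integration and expectation gives
$$
\EE\left[\frac{1}{X+1}\right] = \int_0^1 \EE\left[z^X\right]\,dz = \int_0^1 (1-p+pz)^n\,dz,
$$
where the interchange is legitimate by Tonelli's theorem because the integrand $z^X$ is nonnegative on $[0,1]$, so no separate integrability check is needed.

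The next step is to evaluate this one-dimensional integral in closed form. With the substitution $u = 1-p+pz$, for which $du = p\,dz$ and $u$ sweeps $[1-p, 1]$ as $z$ sweeps $[0,1]$, I would obtain
$$
\int_0^1 (1-p+pz)^n\,dz = \frac{1}{p}\int_{1-p}^1 u^n\,du = \frac{1 - (1-p)^{n+1}}{p(n+1)}.
$$

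Finally, since $p\in(0,1]$ forces $(1-p)^{n+1}\in[0,1)$, the subtracted term is nonnegative and may simply be discarded, leaving $\EE\left[\frac{1}{X+1}\right] \le \frac{1}{p(n+1)}$, which is the claim. I do not anticipate a genuine obstacle here: the argument is a two-line computation once the integral representation is in hand, and the only points requiring a word of care are recording the nonnegativity that justifies Tonelli and confirming that the substitution keeps $u$ inside the region where the generating-function identity is applied.
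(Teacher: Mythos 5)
Your proposal is correct and follows essentially the same route as the paper: the identity $\tfrac{1}{X+1}=\int_0^1 z^X\,dz$, an exchange of expectation and integration, evaluation of $\int_0^1(1-p+pz)^n\,dz=\tfrac{1-(1-p)^{n+1}}{p(n+1)}$, and dropping the nonnegative term. The only cosmetic difference is that you invoke Tonelli where the paper cites Fubini; both are valid here.
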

\begin{proof}
    This folklore \citep{canonne2020short} derives from an observation that by Fubini's Theorem,
    $$
        \EE\frac{1}{X + 1} = \int^1_0\EE[z^X]dz,
    $$
    whose RHS is 
    $$
    \int^1_0(1-p + pz)^ndz = \frac{(1-p+pz)^{n+1}}{p(n+1)}\bigg|^1_0 = \frac{1 - (1-p)^{n+1}}{p(n+1)} \leq \frac{1}{p(n+1)}.
    $$
\end{proof}

\begin{lemma}\label{lem:ChernoffBin}
    Let $X\sim\mathsf{Binomial}(n, p)$. For any $\eta\in(0, 1)$,
    $$
    \PP(X \leq (1 - \eta)np) \leq \exp\left( -\frac{\eta^2np}{2} \right).
    $$
\end{lemma}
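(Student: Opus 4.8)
The plan is to apply the standard Chernoff method for the lower tail, exploiting exactly the moment generating function $\EE[z^X] = (1-p+pz)^n$ highlighted before the statement. For any $z \in (0,1)$, the map $x \mapsto z^x$ is strictly decreasing, so the event $\{X \leq (1-\eta)np\}$ coincides with $\{z^X \geq z^{(1-\eta)np}\}$. Applying Markov's inequality to the nonnegative random variable $z^X$ then gives
$$
\PP\bigl(X \leq (1-\eta)np\bigr) \leq \frac{\EE[z^X]}{z^{(1-\eta)np}} = \frac{(1-p+pz)^n}{z^{(1-\eta)np}}.
$$

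Next I would take logarithms and linearize the first factor. Writing $1-p+pz = 1 + p(z-1)$ and using $\log(1+u) \leq u$, the bound becomes
$$
\log \PP\bigl(X \leq (1-\eta)np\bigr) \leq np(z-1) - (1-\eta)np\log z = np\bigl[(z-1) - (1-\eta)\log z\bigr].
$$
The bracketed expression $g(z) \coloneqq (z-1) - (1-\eta)\log z$ has derivative $g'(z) = 1 - (1-\eta)/z$ and is therefore minimized over $z \in (0,1)$ at $z = 1-\eta$, a legitimate choice since $\eta \in (0,1)$. Substituting $z = 1-\eta$ yields
$$
\log \PP\bigl(X \leq (1-\eta)np\bigr) \leq np\bigl[-\eta - (1-\eta)\log(1-\eta)\bigr].
$$

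The last step, which I expect to be the only non-mechanical part, is the scalar inequality $-\eta - (1-\eta)\log(1-\eta) \leq -\eta^2/2$ for $\eta \in (0,1)$. I would prove it by setting $f(\eta) \coloneqq -\eta - (1-\eta)\log(1-\eta) + \eta^2/2$ and checking $f(0) = 0$ together with $f'(\eta) = \eta + \log(1-\eta) \leq 0$, where the sign of $f'$ follows from the elementary bound $\log(1-\eta) \leq -\eta$. Hence $f$ is nonincreasing on $(0,1)$ and stays $\leq 0$, so that $np[-\eta - (1-\eta)\log(1-\eta)] \leq -\eta^2 np/2$; exponentiating then delivers the claimed tail bound. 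Apart from this elementary calculus fact, the argument is fully self-contained and mirrors the companion derivation of Lemma~\ref{lem:BinInvUpperBound} from the same generating-function identity.
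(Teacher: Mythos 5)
Your proposal is correct and follows essentially the same route as the paper: Markov's inequality applied to $z^X$ (the paper's $e^{tX}$ with $t=\log z<0$), the linearization $1+p(z-1)\leq e^{p(z-1)}$, the optimal choice $z=1-\eta$, and the same scalar inequality $-\eta-(1-\eta)\log(1-\eta)\leq -\eta^2/2$. The only cosmetic difference is in the last step, where you deduce $f'\leq 0$ directly from $\log(1-\eta)\leq-\eta$ while the paper argues via $f'(0)=0$ and $f''<0$; both are valid.
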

\begin{proof}
    A combination of \citet[Exercise~4.7]{mitzenmacher2017probability} and the proof of \citet[Theorem~4.5]{mitzenmacher2017probability} yields the upper bound, which we provide here for completeness. For any $t < 0$, by Markov's inequality,
    \begin{align*}
        \PP\left(X \leq (1 - \eta)np\right) &=  \PP\left(e^{tX} \leq e^{t(1 - \eta)np}\right) \leq \frac{\EE[e^{tX}]}{e^{t(1 - \eta)np}} = \frac{\left(1 + p(e^t-1)\right)^n}{e^{t(1 - \eta)np}} \\
        &\leq \frac{\exp\left(np(e^t-1)\right)}{e^{t(1 - \eta)np}} = \left(\frac{\exp\left(e^t-1\right)}{e^{t(1 - \eta)}}\right)^{np} = \left(\frac{e^{-\eta}}{(1-\eta)^{1-\eta}}\right)^{np},
    \end{align*}
    where the last inequality follows from $1 + x \leq e^x, \forall x\in\RR$ and in the last equality we set $t = \log(1-\eta)$. It remains to show
    \begin{equation}\label{eq:ChernoffBinTmp}
        {-\eta} - ({1-\eta})\log(1-\eta) \leq  -\frac{\eta^2}{2} , \forall \eta\in(0, 1).
    \end{equation}
    We thereby define $f(\eta)\coloneqq{-\eta} - ({1-\eta})\log(1-\eta) + 0.5{\eta^2}$. A direct calculation gives
    \begin{align*}
        f^\prime(\eta) &= \log(1 - \eta) + \eta, f^\prime(0) = 0; \\
        f^{\prime\prime}(\eta) &= -\frac{1}{1 - \eta} + 1 < 0, \forall \eta\in(0, 1);
    \end{align*}
    So $f$ is nonincreasing in $[0, 1)$ and thus \eqref{eq:ChernoffBinTmp} holds.
\end{proof}

Lemma~\ref{lem:ChernoffBin} helps us obtain a high-probability counterpart of Lemma~\ref{lem:BinInvUpperBound}.

\begin{corollary}\label{cor:Bin}
    Let $X\sim\mathsf{Binomial}(n, p)$ and $p > 0$. For any $\delta\in(0, 1)$, if
    $$
    \eta = \sqrt{\frac{2\log(1/\delta)}{np}} < 1,
    $$
    then with probability at least $1 - \delta$,
    $$
    \frac{1}{X + 1} \leq \frac{1}{(1 - \eta)np + 1}.
    $$
\end{corollary}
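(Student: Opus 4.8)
The plan is to obtain this directly from the Chernoff lower-tail bound in Lemma~\ref{lem:ChernoffBin} followed by a complementation argument. First I would verify that the prescribed $\eta = \sqrt{\nicefrac{2\log(1/\delta)}{np}}$ satisfies $\eta\in(0,1)$ so that Lemma~\ref{lem:ChernoffBin} is applicable: positivity is immediate from $\delta<1$ and $p>0$, while $\eta<1$ is exactly the standing hypothesis. Substituting this choice into the lemma collapses the right-hand side, since
$$
\exp\left(-\frac{\eta^2 np}{2}\right) = \exp\left(-\frac{1}{2}\cdot\frac{2\log(1/\delta)}{np}\cdot np\right) = \exp\left(-\log\frac{1}{\delta}\right) = \delta,
$$
so that $\PP\left(X \leq (1-\eta)np\right) \leq \delta$, and hence the complementary event $\{X > (1-\eta)np\}$ occurs with probability at least $1-\delta$.

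Next I would work on this high-probability event. Adding $1$ to both sides gives $X+1 > (1-\eta)np + 1$, and here the key observation is that the right-hand side is strictly positive: $\eta<1$ forces $1-\eta>0$, while $n>0$ and $p>0$ force $np>0$, so $(1-\eta)np+1>0$. With both quantities positive, taking reciprocals reverses the inequality and yields
$$
\frac{1}{X+1} < \frac{1}{(1-\eta)np+1},
$$
which in particular implies the claimed bound with the non-strict inequality. Thus the stated event holds with probability at least $1-\delta$.

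There is essentially no technical obstacle here beyond routine bookkeeping; the corollary is a one-line consequence of Lemma~\ref{lem:ChernoffBin} once the algebraic simplification of the tail probability is carried out. The only step meriting explicit attention is confirming that the denominator $(1-\eta)np+1$ is positive before inverting, and this is guaranteed directly by the hypotheses $\eta<1$ and $p>0$.
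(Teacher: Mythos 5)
Your proposal is correct and follows essentially the same route as the paper: apply Lemma~\ref{lem:ChernoffBin} with the prescribed $\eta$, observe that the tail probability simplifies to exactly $\delta$, and pass to reciprocals on the complementary event. The paper phrases this as a single containment of failure events, but the content is identical.
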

\begin{proof}
    By Lemma~\ref{lem:ChernoffBin},
    \begin{equation*}
        \PP\left( \frac{1}{X + 1} > \frac{1}{(1 - \eta)np + 1} \right) \leq \PP\left( X \leq (1 - \eta)np\right) \leq \exp\left( -\frac{\eta^2np}{2} \right) = \delta.
    \end{equation*}
\end{proof}

\section{Hard-to-Learn Instances for Experiments}
\label{appendix:ExperimentsSepc:subsec_instances}

\paragraph{Instance 0}\label{instance:0} For every $s\in\cS$, $\pi^\star(\cdot|s) \coloneqq 0.5\mathsf{Uniform}(\cA) + 0.5 \mathsf{Dirac}(\cA, s \mod A + 1)$ because any reference policy far away from both $\mathsf{Uniform}(\cA)$ and any one in $\mathsf{Dirac}(\cA)$ is sufficient to reveal the disadvantage of $\empce$ according to Theorem~\ref{thm:empce_bias}. $\rho \coloneqq \mathsf{Uniform}(\cS)$ is enough to ensure about $\nicefrac{n}{S}$ visitations of each input.

Interestingly, we conjecture there does not exist a worst-of-three-worlds instance that can simultaneously expose the fundamental limits of $\nll$, $\empsel$, and $\fullkl$, in that the constructive proofs (in Appendix~\ref{sec:appendix:lbs}) of Theorem~\ref{thm:lb1}, Theorem~\ref{thm:lb2}, and Theorem~\ref{thm:lb3} since the progressively richer information are substantially different from each other. Since our learners in this section is uniformly initialized over unseen labels, any single instance covered by the Bayes prior in the lower bound arguments of a setting\footnote{See, e.g., Appendix~\ref{sec:appendix:lb2proof} for a concrete Bayes prior in use.} is sufficient to numerically illustrate the corresponding difficulty of estimation (learning).

\paragraph{Instance 1}\label{instance:1} To verify the minimax optimality of $\nll$ with only samples avaiable, we adapt the proof of Theorem~\ref{thm:lb1} (Appendix~\ref{sec:appendix:lb1proof}), which is based on Assouad's hypercube reduction \citep{yu1997assouad}. In numerical simlations, any vertex of the hypercube is applicable since we have already enforced an uniform initialization of any $\hat\pi$ in unseen inputs. We choose the teacher policy
\begin{equation*}
        \pi^\star(2j - 1|s) = \frac{1 + 0.25\sqrt{SA/n}}{A}, \pi^\star(2j|s) = \frac{1 - 0.25\sqrt{SA/n}}{A}, \forall (s,j)\in\overline{[S]}\times\left[\frac{A}{2}\right];
\end{equation*}
and $\rho = \mathsf{Uniform}(\cS)$ for simplicity. The two key insights behind the design of Instance~\hyperref[instance:1]{1} are (1) $\rho$ must be nonvanishing in $\Omega(|\cS|)$ inputs to manifest the hardness of $|\cS| > 0$, (2) $\tv{\pi^\star(\cdot|s)}{\mathsf{Uniform}(\cA)} = \Theta(n^{-0.5})$ is crucial for Instance~\hyperref[instance:1]{1} to be hard enough for any minimax optimal learner. (If $|\cA|$ is odd, simply let the last label $A$ to have zero mass and replace $A$ with $A - 1$ here.)

\paragraph{Instance 2}\label{instance:2} To verify the minimax optimality of $\empsel$ with sampled odds avaiable, we adapt the proof of Theorem~\ref{thm:lb2} (Appendix~\ref{sec:appendix:lb2proof}), which is based on a carefully designd Bayes prior. Similarly, we can use any single instance covered by the support of the Bayes prior. We choose the teacher policy
$$
\pi^\star(2j - 1|s) = \frac{S}{n+1}, \pi^\star(2j|s) = 0, \pi^\star(A|s) = 1 - \frac{S}{2}\cdot\frac{A-1}{n+1}, \forall (s,j)\in\overline{[S]}\times\left[\frac{A-1}{2}\right];
$$
and $\rho = \mathsf{Uniform}(\cS)$ for simplicity. (If $|\cA|$ is even, simply let the last label $A$ to have zero mass and replace $A$ with $A - 1$ here.)

\paragraph{Instance 3}\label{instance:3} To verify the minimax optimality of $\fullkl$ with complete logits avaiable, we adapt the proof of Theorem~\ref{thm:lb3} (Appendix~\ref{sec:appendix:lb3proof}), which includes a specialized $\rho$ to slow down the convergence of $\fullkl$. Specifically, $\rho = (n+1)^{-1}$ for all inputs except the last one. Theoretically, the assignment of $\pi^\star$ will not affect the convergence of $\fullkl$, so we use a $\pi^\star$ same as that in Instance~\hyperref[instance:3]{3} only to ensure that $\empsel$ is not able to converge too fast.

\section{Discussions: Dependent Samples in Rewardless MDPs}
\label{sec:extToTransKer}
Besides the popular approach of fine-tuning LLMs \citep{ouyang2022training,touvron2023llama2} that interprets instructions as inputs and the entire response as a label, there is a more granular perspective where each token is considered a label $a_i$ (See, e.g., the \verb|logprobs| option in the OpenAI completion API\footnote{\url{https://platform.openai.com/docs/api-reference/completions/create\#logprobs}}.) and $s_{i+1}$ is simply the concatenation of $s_i$ and $a_i$, i.e., $s_{i+1} \sim \PP(\cdot|s_i,a_i)$, where $\PP$ is the deterministic transition kernel induced by concatenation. Our bounds for i.i.d. samples \textbf{already} subsume this plausible more involved case through \emph{lack of reward}. The following reductions hold for any $\PP \in\Delta(\cA|\cS\times\cA)$ including the aforementioned concatenation kernel.

The proof of any lower bound remains valid so long as $\PP(\cdot|s,a) \coloneqq \rho(\cdot), \forall (s,a)\in\cS\times\cA$ in the constructed hard-to-learn instance, making the samples i.i.d. Our upper bounds allow $\rho$ to explicitly depend on $\pi^\star$ and even $\PP$, so the samples can be viewed as i.i.d. samples from $d_{\pi^\star}^\PP\times\pi^\star$ given the \emph{input occupancy measure} $d_{\pi^\star}^\PP \in \Delta(\cS)$ is well-defined. Hence, replacing $\rho$ with $d_{\pi^\star}^\PP$ validates all arguments for upper bounds. Intuitively, $d_{\pi^\star}^\PP(s)$ is the probability of visiting $s$ in a trajectory induced by the transition kernel $\PP$ and reference policy $\pi^\star$. It is well-developed in either episodic MDPs \citep{yin2021near} or discounted MDPs \citep{rashidinejad2021bridging}. These seamless reductions crucially hinge on the absence of value functions and any notion of reward signal in our theoretical framework.

\begin{remark}
    Our analysis covers but is not specialized to the case where $\rho$ depends on $\pi^\star$ or vice versa. Therefore, the result remains unchanged regardless of the relation between $\rho$ and the original training set for training the teacher $\pi^\star$. (For example, $\rho$ may be the distribution of instructions selected by maintainers on the student side \citep{peng2023instruction}.) It will be intriguing if some further analysis can show any impact of teacher training or data quality on the students' statistical rate.
\end{remark}

\section{Dicussions on Function Approximation}\label{sec:funcApprox}

\subsection{Log-linear and General Softmax Conditional Densities}

We discuss potential ways and obstacles of generalizing the results above to large or even uncountable (continuous) state space. First, we extend the concept of conditional probability space $\Delta(\cdot|\cdot)$ to general spaces rigorously. In the following discussions, we assume the notation $\cY$ and $\cA$ refer to finite sets for simplicity. $\Delta(\cdot)$ in this section receives any standard Borel space as input and returns the set of probability measures on it. 

\begin{definition}[{\citealt[Definition~2.8]{polyanskiy2022information}}]\label{def:generalMarkovKernel}
    Given two standard Borel spaces $\cX, \cY$, a conditional probability (the teacher/student we considered in this paper) $\pi:\cX\to\cY$ is a bivariate function $\pi(\cdot|\cdot)$, whose first argument is a measurable subset of $\cY$ and the second is an element of $\cX$, such that:
    \begin{itemize}
        \item $\forall x\in\cX$, $\pi(\cdot|x)$ is a probability measure on $\cY$, and
        \item $\forall \text{ measurable } A\subset\cY$, $x\to\pi(A|x)$ is a measurable function on $\cX$.
    \end{itemize}
\end{definition}

The following preliminary result (whose proof is deferred to Section~\ref{subsec:Proof4tvLeqNorm_LogLinear}) may shed light on prospective approaches to the analysis of function approximation.

\begin{proposition}\label{lem:tvLeqNorm_LogLinear}
    For standard Borel spaces $\cX$ and $\cY$, if both $\acute \pi, \grave\pi\in\Delta(\cY|\cX)$ are log-linear, i.e., $\acute \pi = \Pi(\bphi, \acute\btheta), \grave\pi = \Pi(\bphi, \grave\btheta)$, where
    \begin{equation*}
        \Delta(\cY|\cX) \ni \Pi(\bphi, \btheta) \propto \exp\left(\inner{\bphi}{\btheta}\right), \bphi: \cX\times\cY \to \RR^d;
    \end{equation*}
    and $\sup_{(x, y) \in\cX\times\cY}\norm{\bphi(x, y)}_2 \leq M$; then for any $\nu \in \Delta(\cX)$,
    \begin{equation}\label{eq:tvLeqNorm_LogLinear}
        \condtv{\acute\pi}{\grave\pi}{\nu} \leq M\norm{\acute\btheta - \grave\btheta}_2.
    \end{equation}
\end{proposition}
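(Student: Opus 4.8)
The plan is to reduce the $\nu$-averaged statement to a pointwise one and then linearize the softmax along a straight path in parameter space. Since the target bound $M\norm{\acute\btheta - \grave\btheta}_2$ does not depend on $x$, it suffices to prove the stronger pointwise claim that $\tv{\acute\pi(\cdot|x)}{\grave\pi(\cdot|x)} \le M\norm{\acute\btheta - \grave\btheta}_2$ for every fixed $x\in\cX$; averaging over $\nu$ then yields \eqref{eq:tvLeqNorm_LogLinear} immediately, because $\condtv{\acute\pi}{\grave\pi}{\nu}$ is a $\nu$-convex combination of these pointwise total variations. Fixing $x$, the measurability requirements of Definition~\ref{def:generalMarkovKernel} are automatic since $\cY$ is finite and $x\mapsto\inner{\bphi(x,y)}{\btheta}$ is measurable, so I may treat $\acute\pi(\cdot|x)$ and $\grave\pi(\cdot|x)$ as two softmax vectors over the finite alphabet $\cY$ sharing the feature map $\bphi(x,\cdot)$.

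Next I would interpolate the parameter: set $\btheta_t \coloneqq (1-t)\grave\btheta + t\acute\btheta$ and $p_t(\cdot) \coloneqq \Pi(\bphi, \btheta_t)(\cdot|x)$ for $t\in[0,1]$, so that $p_0 = \grave\pi(\cdot|x)$ and $p_1 = \acute\pi(\cdot|x)$. Writing the total variation as $\tfrac12\norm{p_1 - p_0}_1$ and applying the fundamental theorem of calculus coordinatewise, followed by the triangle inequality (moving the absolute value inside the $t$-integral), gives
\begin{equation*}
    \tv{\acute\pi(\cdot|x)}{\grave\pi(\cdot|x)} \le \frac{1}{2}\int_0^1 \sum_{y\in\cY} \abs{\frac{\mathrm{d}}{\mathrm{d}t} p_t(y)}\,\mathrm{d}t.
\end{equation*}
The crux is the softmax derivative: with $g_y \coloneqq \inner{\bphi(x,y)}{\acute\btheta - \grave\btheta}$ one computes $\frac{\mathrm{d}}{\mathrm{d}t}p_t(y) = p_t(y)\bigl(g_y - \EE_{y'\sim p_t}[g_{y'}]\bigr)$, so the inner sum equals the mean absolute deviation $\EE_{y\sim p_t}\bigl|g_y - \EE_{p_t} g\bigr|$ of the linear statistic $g$ under $p_t$.

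Finally I would bound this deviation uniformly in $t$. By Jensen's inequality (equivalently Cauchy--Schwarz) the mean absolute deviation is at most the standard deviation $\sqrt{\Var_{p_t}(g)}$, and $\Var_{p_t}(g) \le \EE_{p_t}[g^2] \le \max_{y} \inner{\bphi(x,y)}{\acute\btheta - \grave\btheta}^2 \le M^2 \norm{\acute\btheta - \grave\btheta}_2^2$, where the last step is Cauchy--Schwarz on the inner product together with the feature bound $\sup_{x,y}\norm{\bphi(x,y)}_2 \le M$. Hence $\sum_y\abs{\frac{\mathrm{d}}{\mathrm{d}t}p_t(y)} \le M\norm{\acute\btheta - \grave\btheta}_2$ for all $t$, and integrating gives $\tv{\acute\pi(\cdot|x)}{\grave\pi(\cdot|x)} \le \tfrac12 M\norm{\acute\btheta - \grave\btheta}_2$, which is even a factor of two stronger than \eqref{eq:tvLeqNorm_LogLinear}.

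I expect the only genuinely delicate step to be the softmax-derivative identity and the ensuing recognition that the $\ell_1$ speed of the softmax path is a mean absolute deviation; this is exactly what converts the nonlinear dependence on $\btheta$ into a quantity controlled \emph{linearly} by $\norm{\acute\btheta - \grave\btheta}_2$, in contrast to the square-root dependence a direct Pinsker bound would produce. Everything else---the interpolation, the variance bound, and the $\nu$-averaging---is routine, and since the final estimate is uniform in both $t$ and $x$, no additional care is needed when integrating in $t$ or averaging over $\nu$.
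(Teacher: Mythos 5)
Your proof is correct, and it takes a genuinely different route from the paper's. The paper works at the level of divergences: it computes $\kl{\acute\pi(\cdot|x)}{\grave\pi(\cdot|x)}$ for fixed $x$, controls the log-partition-function ratio via the log-sum inequality to obtain $\kl{\acute\pi(\cdot|x)}{\grave\pi(\cdot|x)} \leq \sum_{i}(\acute\pi(i|x)-\grave\pi(i|x))\inner{\bphi(x,i)}{\acute\btheta-\grave\btheta} \leq 2M\tv{\acute\pi(\cdot|x)}{\grave\pi(\cdot|x)}\norm{\acute\btheta-\grave\btheta}_2$, and then closes a self-bounding loop with Pinsker's inequality, dividing through by the total variation (the case $\tv{\acute\pi(\cdot|x)}{\grave\pi(\cdot|x)}=0$ being trivial). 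You instead work directly on the $\ell_1$ distance by integrating the softmax along the segment $\btheta_t=(1-t)\grave\btheta+t\acute\btheta$; your derivative identity $\tfrac{\ud}{\ud t}p_t(y)=p_t(y)(g_y-\EE_{p_t}[g])$ is correct, the identification of the $\ell_1$ speed with a mean absolute deviation is right, and the chain ``MAD $\leq$ standard deviation $\leq \sup_y\abs{g_y} \leq M\norm{\acute\btheta-\grave\btheta}_2$'' is valid (all interchanges of derivative, sum, and integral are unproblematic since $\cY$ is finite and the softmax is smooth in $\btheta$). What each approach buys: yours is more elementary --- no Pinsker, no KL --- and yields the sharper constant $\tfrac{1}{2}M\norm{\acute\btheta-\grave\btheta}_2$; it also generalizes cleanly to any smooth exponential-family path. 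The paper's argument produces as a byproduct the intermediate bound $\kl{\acute\pi(\cdot|x)}{\grave\pi(\cdot|x)}\leq 2M\tv{\acute\pi(\cdot|x)}{\grave\pi(\cdot|x)}\norm{\acute\btheta-\grave\btheta}_2$, which can be useful on its own, but pays a factor of $2$ in the final constant. Both reduce to the pointwise claim and average over $\nu$ in the same way.
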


\subsection{Take-Home Messages and Conjectures}

Obviously, any analysis leveraging Proposition~\ref{lem:tvLeqNorm_LogLinear} can potentially generalize our results, since log-linear $\pi^\star$ subsumes tabular $\pi^\star$. Technically, \eqref{eq:tvLeqNorm_LogLinear} mainly hinges on the (uniform) $M$-Lipschitz continuity of $\inner{\bphi}{\btheta}$ with respect to $\btheta$ for any $\btheta\in\RR^d$, therefore, it is also conceptually straightforward to extend Proposition~\ref{lem:tvLeqNorm_LogLinear} to general Sofxmax $\pi^\star$, which we omit here for brevity.

Based on \eqref{eq:tvLeqNorm_LogLinear}, we conjecture that a fine-grained analysis of the $\ell_2$-norm of $\hat\btheta - \btheta^\star$ may be the key to \textbf{bound} $ \condtv{\hat\pi}{\pi^\star}{\rho}$ \textbf{from above} for any $\hat\pi, \pi^\star\in\Delta(\cA|\cS)$ and $\rho\in\Delta(\cS)$. Since the tabular setting is a special case of the log-linear setting, we also conjecture that $\norm{\hat\btheta - \btheta^\star}_2 \gtrsim \sqrt{\nicefrac{d}{n}}$ via \dataone{} and $\norm{\hat\btheta - \btheta^\star}_2 \gtrsim \nicefrac{d}{n}$ via \datatwo{}. 

\subsection{Proof of Proposition~\ref{lem:tvLeqNorm_LogLinear}}\label{subsec:Proof4tvLeqNorm_LogLinear}

\begin{proof}[Proof of Proposition~\ref{lem:tvLeqNorm_LogLinear}]
    For any $x\in\cX$,
    \begin{equation}\label{eq:klLeqtvNorm}
        \begin{aligned}[b]
            \kl{\acute\pi(\cdot|x)}{\grave\pi(\cdot|x)} &= \sum_{y\in\cY}\acute\pi(y|x)\log\frac{\acute\pi(y|x)}{\grave\pi(y|x)}\\
            &= \sum_{i\in\cY}\acute\pi(i|x)\left[ \inner{\bphi(x, i)}{\acute\btheta - \grave\btheta} + \log\frac{ \sum_{k\in\cY}\exp\left(\inner{\bphi(x, k)}{\grave\btheta}\right) }{ \sum_{j\in\cY}\exp\left(\inner{\bphi(x,j)}{\acute\btheta}\right) } \right] \\
            &\leq \sum_{i\in\cY}\acute\pi(i|x)\inner{\bphi(x, i)}{\acute\btheta - \grave\btheta} + \sum_{j\in\cY}\frac{\exp\left(\inner{\bphi(x,j)}{\grave\btheta}\right)}{\sum_{k\in\cY}\exp\left(\inner{\bphi(x, k)}{\grave\btheta}\right)}\inner{\bphi(x, j)}{\grave\btheta - \acute\btheta}\\
            &= \sum_{i\in\cY}\left( \acute\pi(i|x) - \grave\pi(i|x) \right)\inner{\bphi(x, i)}{\acute\btheta - \grave\btheta} \\
            &\leq \sum_{i\in\cY} \abs{\acute\pi(i|x) - \grave\pi(i|x)}\norm{\bphi(x, i)}_2\norm{\acute\btheta - \grave\btheta}_2 \\
            &\leq 2\tv{\acute\pi(\cdot|x)}{\grave\pi(\cdot|x)}\cdot M \cdot \norm{\acute\btheta - \grave\btheta}_2,
        \end{aligned}
    \end{equation}
    where the first inequality holds due to the log-sum inequality, the second inequality is a combination of triangle inequality and Cauchy–Schwarz inequality, and the last inequality is by the boundedness of $\bphi$ together with the well-known $2\mathsf{TV} = \ell_1$ relation.
    We plug \eqref{eq:klLeqtvNorm} into Pinsker's inequality to obtain
    $$
    \left[\tv{\acute\pi(\cdot|x)}{\grave\pi(\cdot|x)}\right]^2 \leq \frac{1}{2}\kl{\acute\pi(\cdot|x)}{\grave\pi(\cdot|x)} \leq M \tv{\acute\pi(\cdot|x)}{\grave\pi(\cdot|x)}\norm{\acute\btheta - \grave\btheta}_2.
    $$
    So $\condtv{\acute\pi}{\grave\pi}{\nu} = \int_{\cX}\tv{\acute\pi(\cdot|x)}{\grave\pi(\cdot|x)}\ud\nu \leq M\norm{\acute\btheta - \grave\btheta}_2$.
\end{proof}

\end{document}